\documentclass{article}
\usepackage{times}
\usepackage{graphicx} 
\usepackage{subfigure} 
\PassOptionsToPackage{numbers, compress}{natbib}
\usepackage{natbib}

\usepackage[noend]{algpseudocode}
\usepackage{algorithmicx,algorithm}

\usepackage[final]{nips_2017}
\usepackage[utf8]{inputenc} 
\usepackage[T1]{fontenc}    
\usepackage{hyperref}       
\usepackage{url}            
\usepackage{booktabs}       
\usepackage{amsfonts}       
\usepackage{nicefrac}       
\usepackage{microtype}      

\title{Learning with Feature Evolvable Streams}

\author{
 Bo-Jian Hou \quad Lijun Zhang \quad Zhi-Hua Zhou\\
 National Key Laboratory for Novel Software Technology,\\
 Nanjing University, Nanjing, 210023, China\\
 \texttt{\{houbj,zhanglj,zhouzh\}@lamda.nju.edu.cn}
}

\usepackage{caption}
\captionsetup{font={small}}
\usepackage{times}
\usepackage[all]{xy}
\usepackage{wrapfig}
\usepackage{amsmath,amssymb}
\usepackage{bm}
\usepackage{amsfonts}
\usepackage{amsthm}
\usepackage{paralist}
\usepackage{multirow}
\usepackage{graphicx}
\usepackage{graphics}
\usepackage{color}
\usepackage{url}
\usepackage{array}
\usepackage{verbatim}
\usepackage{microtype}
\usepackage[american]{babel}
\usepackage{makecell}
\usepackage[american]{babel}
\usepackage{microtype}

\DeclareMathOperator*{\argmin}{argmin}

\newcommand{\x}{\textbf{x}}
\newcommand{\w}{\textbf{w}}

\newcommand{\M}{\bm{M}}
\newtheorem{thm}{Theorem}
\setcounter{thm}{2}
\newtheorem{theorem}{Theorem}
\newtheorem{lemma}{Lemma}

\begin{document} 

\maketitle
\begin{abstract} 
Learning with streaming data has attracted much attention during the past few years. Though most studies consider data stream with fixed features, in real practice the features may be evolvable. For example, features of data gathered by limited-lifespan sensors will change when these sensors are substituted by new ones. In this paper, we propose a novel learning paradigm: \emph{Feature Evolvable Streaming Learning} where old features would vanish and new features would occur. Rather than relying on only the current features, we attempt to recover the vanished features and exploit it to improve performance. Specifically, we learn two models from the recovered features and the current features, respectively. To benefit from the recovered features, we develop two ensemble methods. In the first method, we combine the predictions from two models and theoretically show that with the assistance of old features, the performance on new features can be improved. In the second approach, we dynamically select the best single prediction and establish a better performance guarantee when the best model switches. Experiments on both synthetic and real data validate the effectiveness of our proposal.
\end{abstract}

\section{Introduction}
\label{section:Introduction}
In many real tasks, data are accumulated over time, and thus, learning with streaming data has attracted much attention during the past few years. Many effective approaches have been developed, such as hoeffding tree~\cite{DBLP:conf/kdd/DomingosH00}, Bayes tree~\cite{DBLP:conf/edbt/SeidlAKKH09}, evolving granular neural network~(eGNN)~\cite{DBLP:conf/ijcnn/LeiteCG09}, Core Vector Machine~(CVM)~\cite{DBLP:conf/icml/TsangKK07}, etc. Though these approaches are effective for certain scenarios, they have a common assumption, i.e., the data stream comes with a fixed stable feature space. In other words, the data samples are always described by the same set of features. Unfortunately, this assumption does not hold in many streaming tasks. For example, for ecosystem protection one can deploy many sensors in a reserve to collect data, where each sensor corresponds to an attribute/feature. Due to its limited-lifespan, after some periods many sensors will wear out, whereas some new sensors can be spread. Thus, features corresponding to the old sensors vanish while features corresponding to the new sensors appear, and the learning algorithm needs to work well under such evolving environment. Note that the ability of adapting to environmental change is one of the fundamental requirements for \textit{learnware}~\cite{DBLP:journals/fcsc/Zhou16a}, where an important aspect is the ability of handling evolvable features.

A straightforward approach is to rely on the new features and learn a new model to use. However, this solution suffers from some deficiencies. First, when new features just emerge, there are few data samples described by these features, and thus, the training samples might be insufficient to train a strong model. Second, the old model of vanished features is ignored, which is a big waste of our data collection effort. To address these limitations, in this paper we propose a novel learning paradigm: \emph{Feature Evolvable Streaming Learning} (FESL). We formulate the problem based on a key observation: in general features do not change in an arbitrary way; instead, there are some overlapping periods in which both old and new features are available. Back to the ecosystem protection example, since the lifespan of sensors is known to us, e.g., how long their battery will run out is a prior knowledge, we usually spread a set of new sensors before the old ones wear out. Thus, the data stream arrives in a way as shown in Figure~\ref{illustration-general}, where in period $T_1$, the original set of features are valid and at the end of $T_1$, period $B_1$ appears, where the original set of features are still accessible, but some new features are included; then in $T_2$, the original set of features vanish, only the new features are valid but at the end of $T_2$, period $B_2$ appears where newer features come. This process will repeat again and again. Note that the $T_1$ and $T_2$ periods are usually long, whereas the $B_1$ and $B_2$ periods are short because, as in the ecosystem protection example, the $B_1$ and $B_2$ periods are just used to switch the sensors and we do not want to waste a lot of lifetime of sensors for such overlapping periods.

\begin{wrapfigure}{r}{0.5\textwidth}
\vspace{-20pt}
\begin{center}
\includegraphics[width=0.48\textwidth]{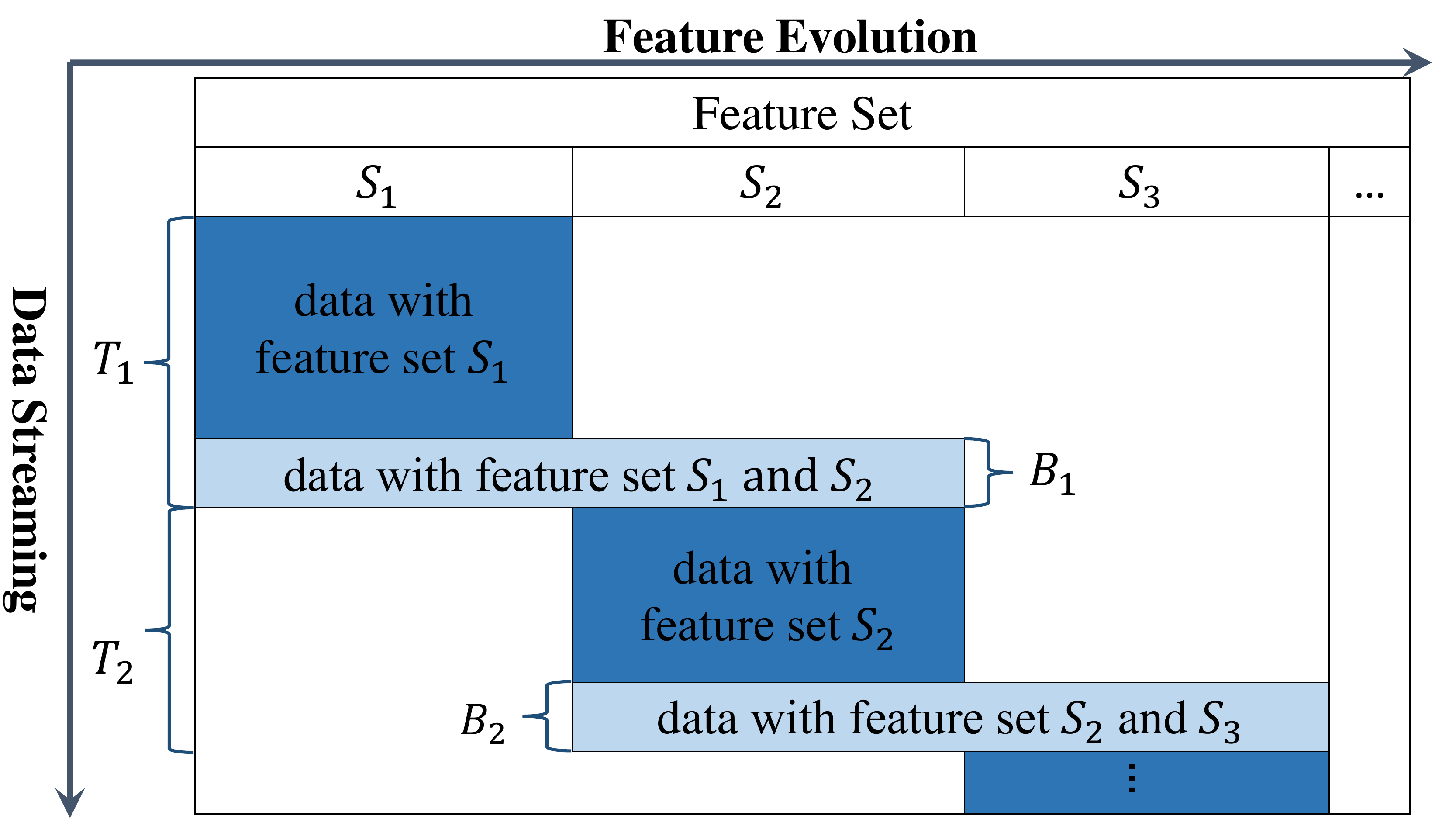}
\end{center}
\caption{\small Illustration that how data stream comes.}
\label{illustration-general}
\end{wrapfigure}

In this paper, we propose to solve the FESL problem by utilizing the overlapping period to discover the relationship between the old and new features, and exploiting the old model even when \emph{only} the new features are available. Specifically, we try to learn a mapping from new features to old features through the samples in the overlapping period. In this way, we are able to reconstruct old features from new ones and thus the old model can still be applied. To benefit from additional features, we develop two ensemble methods, one is in a combination manner and the other in a dynamic selection manner. In the first method, we combine the predictions from two models and theoretically show that with the assistance of old features, the performance on new features can be improved. In the second approach, we dynamically select the best single prediction and establish a better performance guarantee when the best model switches at an arbitrary time. Experiments on synthetic and real datasets validate the effectiveness of our proposal. 

The rest of this paper is organized as follows. Section 2 introduces related work. Section 3 presents the formulation of FESL. Our proposed approaches with corresponding analyses are presented in section 4. Section 5 reports experimental results. Finally, Section 6 concludes.

\section{Related Work}
\label{section:Related Work}
Data stream mining contains several tasks, including classification, clustering, frequency counting, and time series analysis. Our work is most related to the classification task and we can also solve the regression problem. Existing techniques for data stream classification can be divided into two categories, one only considers a single classifier and the other considers ensemble classifiers. For the former, several methods origin from approaches such as decision tree~\cite{DBLP:conf/kdd/DomingosH00}, Bayesian classification~\cite{DBLP:conf/edbt/SeidlAKKH09}, neural networks~\cite{DBLP:conf/ijcnn/LeiteCG09}, support vector machines~\cite{DBLP:conf/icml/TsangKK07}, and $k$-nearest neighbour~\cite{DBLP:journals/tkde/AggarwalHWY06}. For the latter, various ensemble methods have been proposed including Online Bagging \& Boosting~\cite{DBLP:conf/smc/Oza05}, Weighted Ensemble Classifiers~\cite{DBLP:conf/kdd/WangFYH03,DBLP:conf/pakdd/NguyenWNW12}, Adapted One-vs-All Decision Trees~(OVA)~\cite{DBLP:journals/tkde/HashemiYMK09} and Meta-knowledge Ensemble~\cite{DBLP:conf/kdd/ZhangLWGZG11}. For more details, please refer to ~\cite{DBLP:journals/sigmod/GaberZK05,DBLP:series/sci/GamaR09,DBLP:books/daglib/p/Aggarwal10,DBLP:journals/csur/SilvaFBHCG13,DBLP:journals/kais/NguyenWN15}. These traditional streaming data algorithms often assume that the data samples are described by the same set of features, while in many real streaming tasks feature often changes. We want to emphasize that though concept-drift happens in streaming data where the underlying data distribution changes over time~\cite{DBLP:books/daglib/p/Aggarwal10,DBLP:series/sci/GamaR09,DBLP:journals/jmlr/BifetHKP10}, the number of features in concept-drift never changes which is different from our problem. Most studies correlated to features changing are focusing on feature selection and extraction~\cite{khalid2014survey,DBLP:journals/jmlr/ZhouSL12} and to the best of our knowledge, none of them consider the evolving of feature set during the learning process.

Data stream mining is a hot research direction in the area of data mining while online learning~\cite{DBLP:conf/icml/Zinkevich03,DBLP:journals/jmlr/HoiWZ14} is a related topic from the area of machine learning. Yet online learning can also tackle the streaming data problem since it assumes that the data come in a streaming way. Online learning has been extensively studied under different settings, such as learning with experts~\cite{DBLP:books/daglib/0016248} and online convex optimization~\cite{DBLP:journals/ml/HazanAK07,DBLP:journals/ftml/Shalev-Shwartz12}. There are strong theoretical guarantees for online learning, and it usually uses regret or the number of mistakes to measure the performance of the learning procedure. However, most of existing online learning algorithms are limited to the case that the feature set is fixed. Other related topics involving multiple feature sets include multi-view learning~\cite{DBLP:conf/aaai/LiJZ14,DBLP:conf/icml/MusleaMK02,DBLP:journals/corr/abs-1304-5634}, transfer learning~\cite{DBLP:journals/tkde/PanY10,DBLP:conf/icml/RainaBLPN07} and incremental attribute learning~\cite{DBLP:journals/npl/GuanL01}. Although both our approaches and multi-view learning exploit the relation between different sets of features, there exists a fundamental difference: multi-view learning assumes that every sample is described by multiple feature sets simultaneously, whereas in FESL only few samples in the feature switching period have two sets of features, and no matter how many periods there are, the switching part involves only two sets of features. Transfer learning usually assumes that data are in batch mode, few of them consider the streaming cases where data arrives sequentially and cannot be stored completely. One exception is online transfer learning~\cite{DBLP:journals/ai/ZhaoHWL14} in which data from both sets of features arrive sequentially. However, they assume that all the feature spaces must appear simultaneously during the whole learning process while such an assumption is not available in FESL. When it comes to incremental attribute learning, old sets of features do not vanish or do not vanish entirely while in FESL, old ones will vanish thoroughly when new sets of features come.

The most related work is~\cite{DBLP:journals/corr/HouZ16}, which also handles evolving features in streaming data. Different to our setting where there are overlapping periods, \cite{DBLP:journals/corr/HouZ16} handles situations where there is no overlapping period but there are overlapping features. Thus, the technical challenges and solutions are different.

\section{Preliminaries}
\label{section:preliminary}

\begin{wrapfigure}{r}{0.5\textwidth}
\vspace{-20pt}
\begin{center}
\includegraphics[width=0.48\textwidth]{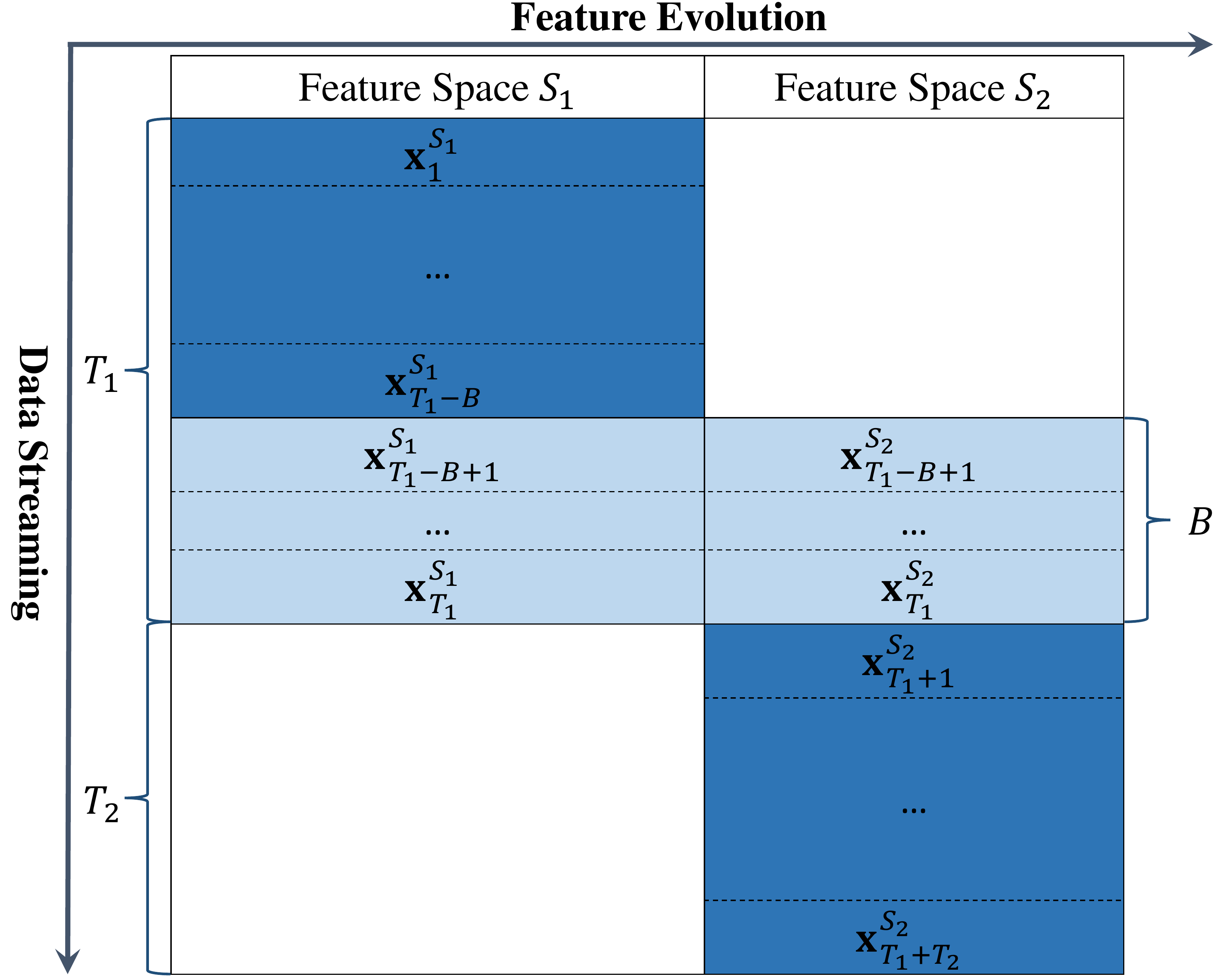}
\end{center}
\caption{\small Specific illustration with one cycle.}
\label{illustration}
\end{wrapfigure}

We focus on both classification and regression tasks. On each round of the learning process, the algorithm observes an instance and gives its prediction. After the prediction has been made, the true label is revealed and the algorithm suffers a $loss$ which reflects the discrepancy between the prediction and the groundtruth. We define ``feature space" in our paper by a set of features. That the feature space changes means both the underlying distribution of the feature set and the number of features change. Consider the process with three periods where in the first period large amount of data stream come from the old feature space; then in the second period named as overlapping period, few of data come from both the old and the new feature space; soon afterwards in the third period, data stream only come from the new feature space. We call this whole process a cycle. As can be seen from Figure~\ref{illustration-general}, each cycle merely includes two feature spaces. Thus, we only need to focus on one cycle and it is easy to extend to the case with multiple cycles. Besides, we assume that the old features in one cycle will vanish simultaneously by considering the example that in ecosystem protection, all the sensors share the same expected lifespan and thus they will wear out at the same time. We will study the case where old features do not vanish simultaneously in the future work.

Based on the above discussion, we only consider two feature spaces denoted by $S_1$ and $S_2$, respectively. Suppose that in the overlapping period, there are $B$ rounds of instances both from $S_1$ and $S_2$. As can be seen from Figure~\ref{illustration}, the process can be concluded as follows.
\vspace{-0.2cm}
\begin{itemize}
\setlength{\itemsep}{-1pt}
\item For $t=1,\ldots,T_1-B$, in each round, the learner observes a vector $\x_t^{S_1}\in\mathbb{R}^{d_1}$ sampled from $S_1$ where $d_1$ is the number of features of $S_1$, $T_1$ is the number of total rounds in $S_1$.
\item For $t=T_1-B+1,\ldots,T_1$, in each round, the learner observes two vectors $\x_t^{S_1}\in\mathbb{R}^{d_1}$ and $\x_t^{S_2}\in\mathbb{R}^{d_2}$ from $S_1$ and $S_2$, respectively where $d_2$ is the number of features of $S_2$.
\item For $t=T_1+1,\ldots,T_1+T_2$, in each round, the learner observes a vector $\x_t^{S_2}\in\mathbb{R}^{d_2}$ sampled from $S_2$ where $T_2$ is the number of rounds in $S_2$. Note that $B$ is small, so we can omit the streaming data from $S_2$ on rounds $T_1-B+1,\ldots,T_1$ since they have minor effect on training the model in $S_2$.
\end{itemize}

We use $\|\x\|$ to denote the $\ell_2$-norm of a vector $\x\in\mathbb{R}^{d_i},\,i=1,2$. The inner product is denoted by $\langle\cdot,\cdot\rangle$. Let $\Omega_1\subseteq \mathbb{R}^{d_1}$ and $\Omega_2\subseteq\mathbb{R}^{d_2}$ be two sets of linear models that we are interested in. We define the projection $\Pi_{\Omega_i}(\textbf{b})=\argmin_{\textbf{a}\in\Omega_i}\|\textbf{a}-\textbf{b}\|,\,i=1,2$. We restrict our prediction function in $i$-th feature space and $t$-th round to be linear which takes the form $\langle\w_{i,t},\x_t^{S_i}\rangle$ where $\w_{i,t}\in\mathbb{R}^{d_i},\,i=1,2$. The loss function $\ell(\w^\top\x,y)$ is convex in its first argument and in implementing algorithms, we use {\it logistic loss} for classification task, namely $\ell(\w^\top\x,y)=(1/\ln2)\ln(1+\exp(-y(\w^\top\x)))$ and \emph{square loss} for regression task, namely $\ell(\w^\top\x,y)=(y-\w^\top\x)^2$.

The most straightforward or baseline algorithm is to apply online gradient descent~\cite{DBLP:conf/icml/Zinkevich03} on rounds $1,\ldots,T_1$ with streaming data $\x_t^{S_1}$, and invoke it again on rounds $T_1+1,\ldots,T_1+T_2$ with streaming data $\x_t^{S_2}$. The models are updated according to~(\ref{equation:update model}), where $\tau_t$ is a varied step size: 
\begin{equation}
\label{equation:update model}
\w_{i,t+1}=\Pi_{\Omega_i}\left(\w_{i,t}-\tau_t\nabla\ell(\w_{i,t}^\top\x_t^{S_i},y_t)\right),\,i=1,2.
\end{equation}

\section{Our Proposed Approach}
\label{section:our proposed approach}
In this section, we first introduce the basic idea of the solution to FESL, then two different kinds of approaches with the corresponding analyses are proposed.

The major limitation of the baseline algorithm mentioned above is that the model learned on rounds $1,\ldots, T_1$ is ignored on rounds $T_1+1,\ldots, T_1+T_2$. The reason is that from rounds $t>T_1$, we cannot observe data from feature space $S_1$, and thus the model $\w_{1,T_1}$, which operates in $S_1$, cannot be used directly. To address this challenge, we assume there is a certain relationship $\psi:\mathbb{R}^{d_2}\rightarrow\mathbb{R}^{d_1}$ between the two feature spaces, and we try to discover it in the overlapping period. There are several methods to learn a relationship between two sets of features including multivariate regression~\cite{DBLP:journals/technometrics/Kibria07}, streaming multi-label learning~\cite{DBLP:journals/jmlr/ReadBHP11}, etc. In our setting, since the overlapping period is very short, it is unrealistic to learn a complex relationship between the two spaces. Instead, we use a linear mapping to approximate $\psi$. Assume the coefficient matrix of the linear mapping is $M$, then during rounds $T_1-B+1,\ldots,T_1$, the estimation of $\M$ can be based on least squares
\[
\min_{\M\in\mathbb{R}^{d_2\times d_1}}\sum\nolimits_{t=T_1-B+1}^{T_1}\|\x_t^{S_1}-\M^\top\x_t^{S_2}\|_2^2.
\]
The optimal solution $\M_*$ to the above problem is given by
\[
\M_*=\left(\sum_{t=T_1-B+1}^{T_1}\x_t^{S_2}{\x_t^{S_2}}^\top\right)^{-1}\left(\sum_{t=T_1-B+1}^{T_1}\x_t^{S_2}{\x_t^{S_1}}^\top\right).
\]
Then if we only observe an instance $\x_t^{S_2}\in\mathbb{R}^{d_2}$ from $S_2$, we can recover an instance in $S_1$ by $\psi(\x^{S_2})\in\mathbb{R}^{d_1}$, to which $\w_{1,T_1}$ can be applied. Based on this idea, we will make two changes to the baseline algorithm:
\begin{compactitem}
\item During rounds $T_1-B+1,\ldots,T_1$, we will learn a relationship $\psi$ from $(\x_{T_1-B+1}^{S_1}$, $\x_{T_1-B+1}^{S_2})$, $\ldots,(\x_{T_1}^{S_1},\x_{T_1}^{S_2})$.
\item From rounds $t>T_1$, we will keep on updating $\w_{1,t}$ using the recovered data $\psi(\x_t^{S_2})$ and predict the target by utilizing the predictions of $\w_{1,t}$ and $\w_{2,t}$.
\end{compactitem}

In round $t>T_1$, the learner can calculate two base predictions based on models $\w_{1,t}$ and $\w_{2,t}$: $f_{1,t}=\w_{1,t}^\top(\psi(\x_t^{S_2}))$ and $f_{2,t}=\w_{2,t}^\top\x_t^{S_2}.$ By utilizing the two base predictions in each round, we propose two methods, both of which are able to follow the better base prediction empirically and theoretically. The process to obtain the relationship mapping $\psi$ and $\w_{1,T_1}$ during rounds $1,\ldots,T_1$ are concluded in Algorithm~\ref{alg:Initialize}.

\begin{algorithm}[!t]
	\centering
	\caption{Initialize}
	\label{alg:Initialize}
	\begin{algorithmic}[1]
	\small
		\State Initialize $\w_{1,1}\in\Omega_1$ randomly, $M_1=0$, and $M_2=0$;
		\For{$t=1,2,\ldots,T_1$}
	    \State Receive $\x_t^{S_1}\in\mathbb{R}^{d_1}$ and predict $f_t=\w_{1,t}^\top\x_t^{S_1}\in\mathbb{R}$; Receive the target $y_t\in\mathbb{R}$, and suffer loss $\ell(f_t,y_t)$;
	    \State Update $\w_{1,t}$ using~(\ref{equation:update model}) where $\tau_t=1/\sqrt{t}$;
	    \State \textbf{if} $t>T_1-B$ \textbf{then} $M_1=M_1+\x_t^{S_2}{\x_t^{S_2}}^\top\,\text{and}\,M_2=M_2+\x_t^{S_2}{\x_t^{S_1}}^\top$;
	    \EndFor
	    \State $\M_*=M_1^{-1}M_2$.
	\end{algorithmic}
\end{algorithm}

\subsection{Weighted Combination}
\label{section:FESL-c}
We first propose an ensemble method by combining predictions with weights based on exponential of the cumulative loss~\cite{DBLP:books/daglib/0016248}. The prediction at time $t$ is the weighted average of all the base predictions:
\begin{equation}
\label{equation:average}
\widehat{p}_t=\alpha_{1,t}f_{1,t}+\alpha_{2,t}f_{2,t}
\end{equation}
where $\alpha_{i,t}$ is the weight of the $i$-th base prediction.
With the previous loss of each base model, we can update the weights of the two base models as follows:
\begin{equation}
\label{equation:update weights}
\alpha_{i,t+1}=\frac{\alpha_{i,t}e^{-\eta\ell(f_{i,t},y_t)}}{\sum_{j=1}^2 \alpha_{j,t}e^{-\eta\ell(f_{j,t},y_t)}},\,i=1,2,
\end{equation}
where $\eta$ is a tuned parameter. The updating rule of the weights shows that if the loss of one of the models on previous round is large, then its weight will decrease in an exponential rate in next round, which is reasonable and can derive a good theoretical result shown in Theorem~\ref{theorem:FESL-c}. Algorithm~\ref{alg:FESL} summarizes our first approach for FESL named as FESL-c(ombination). We first learn a model $\w_{1,T_1}$ using online gradient descent on rounds $1,\ldots,T_1$, during which, we also learn a relationship $\psi$ for $t=T_1-B+1,\ldots,T_1$. For $t=T_1+1,\ldots,T_1+T_2$, we learn a model $\w_{2,t}$ on each round and keep updating $\w_{1,t}$ on the recovered data $\psi(\x_t^{S_2})$ showed in~(\ref{equation:update w1}) where $\tau_t$ is a varied step size:
\begin{equation}
\label{equation:update w1}
\w_{1,t+1}=\Pi_{\Omega_i}\left(\w_{1,t}-\tau_t\nabla\ell(\w_{1,t}^\top(\psi(\x_t^{S_2})),y_t)\right).
\end{equation}
 Then we combine the predictions of the two models by weights calculated in (\ref{equation:update weights}). 

\begin{algorithm}[!t]
	\centering
	\caption{FESL-c(ombination)}
	\label{alg:FESL}
	\begin{algorithmic}[1]
	\small
		\State Initialize $\psi$ and $\w_{1,T_1}$ during $1,\ldots,T_1$ using Algorithm~\ref{alg:Initialize}; 
		\State $\alpha_{1,T_1}=\alpha_{2,T_1}=\frac{1}{2}$;
	    \State Initialize $\w_{2,T_1+1}$ randomly and $\w_{1,T_1+1}$ by $\w_{1,T_1}$;
	    \For{$t=T_1+1,T_1+2,\ldots,T_1+T_2$}
	    \State Receive $\x_t^{S_2}\in\mathbb{R}^{S_2}$ and predict $f_{1,t}=\w_{1,t}^\top(\psi(\x_t^{S_2}))\;\text{and}\;f_{2,t}=\w_{2,t}^\top\x_t^{S_2}$; 
	    \State Predict $\widehat{p}_t\in\mathbb{R}$ using~(\ref{equation:average}), then receive the target $y_t\in\mathbb{R}$, and suffer loss $\ell(\widehat{p}_t,y_t)$;
	    \State Update weights using~(\ref{equation:update weights}) where $\eta=\sqrt{8(\ln2)/T_2}$;
	    \State Update $\w_{1,t}$ and $\w_{2,t}$ using~(\ref{equation:update w1}) and (\ref{equation:update model}) respectively where $\tau_t=1/\sqrt{t-T_1}$;
	    \EndFor
	\end{algorithmic}
\end{algorithm}

\paragraph{Analysis}
In this paragraph, we borrow the \emph{regret} from online learning to measure the performance of FESL-c. Specifically, we give a loss bound as follows which shows that the performance will be improved with assistance of the old feature space. For the sake of soundness, we put the proof of our theorems in the supplementary file. We define that $L^{S_1}$ and $L^{S_2}$ are two cumulative losses suffered by base models on rounds $T_1+1,\ldots,T_1+T_2$,
\begin{equation}
L^{S_1}=\sum_{t=T_1+1}^{T_1+T_2}\ell(f_{1,t},y_t),\,
L^{S_2}=\sum_{t=T_1+1}^{T_1+T_2}\ell(f_{2,t},y_t),
\end{equation}  
and $L^{S_{12}}$ is the cumulative loss suffered by our methods: 
$
L^{S_{12}}=\sum_{t=T_1+1}^{T_1+T_2} \ell(\widehat{p}_t,y_t).
$
Then we have:
\begin{theorem}
\label{theorem:FESL-c}
Assume that the loss function $\ell$ is convex in its first argument and that it takes value in [0,1]. For all $T_2>1$ and for all $y_t\in\mathcal{Y}$ with $t=T_1+1,\ldots,T_1+T_2$, $L^{S_{12}}$ with parameter $\eta_t=\sqrt{8(\ln2)/T_2}$ satisfies
\begin{equation}
\label{equation:FESL-c}
L^{S_{12}}\leq \min(L^{S_1},L^{S_2})+\sqrt{(T_2/2)\ln2}
\end{equation}
\end{theorem}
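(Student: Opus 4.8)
The plan is to recognize this statement as the standard regret bound for the exponentially weighted average forecaster with two experts, where the experts are the base predictions $f_{1,t}$ and $f_{2,t}$, and to adapt the textbook potential-function argument (as in the cited \cite{DBLP:books/daglib/0016248}) to our cumulative loss $L^{S_{12}}$. First I would introduce unnormalized weights $w_{i,t}$ with $w_{i,T_1+1}=1$ and the recursion $w_{i,t+1}=w_{i,t}e^{-\eta\ell(f_{i,t},y_t)}$, so that the normalized quantities $\alpha_{i,t}$ in~(\ref{equation:update weights}) equal $w_{i,t}/W_t$ with $W_t=\sum_{j=1}^2 w_{j,t}$. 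The potential I would track is $\ln W_t$, and the whole proof reduces to bounding the one-step increment $\ln(W_{t+1}/W_t)$ from above and the total change $\ln(W_{T_1+T_2+1}/W_{T_1+1})$ from below.

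For the upper bound, two ingredients combine. By convexity of $\ell$ in its first argument and the definition~(\ref{equation:average}) of $\widehat{p}_t$, Jensen's inequality gives $\ell(\widehat{p}_t,y_t)\le\sum_{i=1}^2\alpha_{i,t}\ell(f_{i,t},y_t)$. Separately, since each per-round loss lies in $[0,1]$, Hoeffding's lemma yields $\ln\sum_{i=1}^2\alpha_{i,t}e^{-\eta\ell(f_{i,t},y_t)}\le -\eta\sum_{i=1}^2\alpha_{i,t}\ell(f_{i,t},y_t)+\eta^2/8$, where the left-hand side is exactly $\ln(W_{t+1}/W_t)$. Chaining these and summing over $t=T_1+1,\ldots,T_1+T_2$ telescopes to $\ln(W_{T_1+T_2+1}/W_{T_1+1})\le-\eta L^{S_{12}}+\eta^2 T_2/8$.

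For the lower bound I would simply drop one of the two terms in $W_{T_1+T_2+1}=e^{-\eta L^{S_1}}+e^{-\eta L^{S_2}}$ to obtain $\ln W_{T_1+T_2+1}\ge-\eta\min(L^{S_1},L^{S_2})$, while $W_{T_1+1}=2$ contributes $-\ln 2$. Combining the two bounds, rearranging, and dividing through by $\eta$ gives $L^{S_{12}}\le\min(L^{S_1},L^{S_2})+(\ln 2)/\eta+\eta T_2/8$. The final step is to substitute $\eta=\sqrt{8(\ln 2)/T_2}$, which balances the two residual terms: each equals $\sqrt{(T_2\ln 2)/8}$, so their sum is $\sqrt{(T_2/2)\ln 2}$, yielding~(\ref{equation:FESL-c}).

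None of the steps is genuinely difficult; the only point needing care is the Hoeffding-lemma inequality, since it is precisely where the boundedness assumption $\ell\in[0,1]$ enters and where the constant $1/8$ (and hence the final constant) originates. I would justify that inequality separately, either by quoting Hoeffding's lemma directly or by a short second-order argument on the map $\eta\mapsto\ln\sum_{i=1}^2\alpha_{i,t}e^{-\eta\ell(f_{i,t},y_t)}$, checking that its value and first derivative vanish at $\eta=0$ while its second derivative is bounded by $1/4$ over the range of the losses.
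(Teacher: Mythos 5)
Your proposal is correct and follows essentially the same route as the paper's own proof: the paper likewise tracks the log of the total (unnormalized) weight $A_t=\sum_{i=1}^2 e^{-\eta L_t^{S_i}}$, lower-bounds $\ln(A_{T_1+T_2}/A_{T_1})$ by dropping one term, upper-bounds each increment via Hoeffding's lemma plus Jensen's inequality (convexity of $\ell$), and balances the resulting $(\ln 2)/\eta+\eta T_2/8$ with the same choice of $\eta$. The only differences are cosmetic (your $W_t$ versus the paper's $A_t$, and an index shift in where the weight sequence starts), so nothing needs to change.
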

This theorem implies that the cumulative loss $L^{S_{12}}$ of Algorithm~\ref{alg:FESL} over rounds $T_1+1,\ldots,T_1+T_2$ is comparable to the minimum of $L^{S_1}$ and $L^{S_2}$. Furthermore, we define $C=\sqrt{(T_2/2)\ln2}$. If $L^{S_2}-L^{S_1}>C$, it is easy to verify that $L^{S_{12}}$ is smaller than $L^{S_2}$. In summary, on rounds $T_1+1,\ldots,T_1+T_2$, when $\w_{1,t}$ is better than $\w_{2,t}$ to certain degree, the model with assistance from $S_1$ is better than that without assistance.

\subsection{Dynamic Selection}
The combination approach mentioned in the above subsection combines several base models to improve the overall performance. Generally, combination of several classifiers performs better than selecting only one single classifier~\cite{zhou2012ensemble}. However, it requires that the performance of base models should not be too bad, for example, in Adaboost the accuracy of the base classifiers should be no less than 0.5~\cite{DBLP:journals/jcss/FreundS97}. Nevertheless, in our FESL problem, on rounds $T_1+1,\ldots,T_1+T_2$, $\w_{2,t}$ cannot satisfy the requirement in the beginning due to insufficient training data and $\w_{1,t}$ may become worse when more and more data come causing a cumulation of recovered error. Thus, it may not be appropriate to combine the two models all the time, whereas dynamically selecting the best single may be a better choice. Hence we propose a method based on a new strategy, i.e., dynamic selection, similar to the Dynamic Classifier Selection~\cite{zhou2012ensemble} which only uses the best single model rather than combining both of them in each round. Note that, though we only select one of the models, we retain and utilize both of them to update their weights. So it is still an ensemble method. The basic idea of dynamic selection is to select the model of larger weight with higher probability. Algorithm~\ref{alg:FESL-s} summarizes our second approach for FESL named as FESL-s(election). Specifically, the steps in Algorithm~\ref{alg:FESL-s} on rounds $1,\ldots,T_1$ is the same as that in Algorithm~\ref{alg:FESL}. For $t=T_1+1,\ldots,T_1+T_2$, we still update weights of each model. However, when doing prediction, we do not combine all the models' prediction, we adopt the result of the ``best" model's according to the distribution of their weights
\begin{equation}
\label{equation:distribution}
p_{i,t}=\frac{\alpha_{i,t-1}}{\sum_{j=1}^2 \alpha_{j,t-1}}\,i=1,2.\end{equation}
To track the best model, we have a different way of updating weights which is given as follows~\cite{DBLP:books/daglib/0016248}.
\begin{equation}
\label{equation:weights2}
v_{i,t}=\alpha_{i,t-1}e^{-\eta\ell(f_{i,t},y_t)},\,i=1,2,\quad
\alpha_{i,t}=\delta\frac{W_t}{2}+(1-\delta)v_{i,t},\,i=1,2,\ 
\end{equation}
where we define $W_t=v_{1,t}+v_{2,t}$, $\delta=1/(T_2-1)$, $\eta=\sqrt{8/T_2\left(2\ln2+(T_2-1)H(1/(T_2-1))\right)}$ and $H(x)=-x\ln x-(1-x)\ln(1-x)$ is the binary entropy function defined for $x\in(0,1)$.
\begin{algorithm}[!t]
	\centering
	\caption{FESL-s(election)}
	\label{alg:FESL-s}
	\begin{algorithmic}[1]
	\small
		\State Initialize $\psi$ and $\w_{1,T_1}$ during $1,\ldots,T_1$ using Algorithm~\ref{alg:Initialize}; 
		\State $\alpha_{1,T_1}=\alpha_{2,T_1}=\frac{1}{2}$;
	    \State Initialize $\w_{2,T_1+1}$ randomly and $\w_{1,T_1+1}$ by $\w_{1,T_1}$;
	    \For{$t=T_1+1,T_1+2,\ldots,T_1+T_2$}
	    \State Receive $\x_t^{S_2}\in\mathbb{R}^{S_2}$ and predict $f_{1,t}=\w_{1,t}^\top(\psi(\x_t^{S_2}))\;\text{and}\;f_{2,t}=\w_{2,t}^\top\x_t^{S_2}$;
	    \State Draw a model $\w_{i,t}$ according to the distribution~(\ref{equation:distribution}) and predict $\widehat{p}_t=f_{i,t}$ according to the model;
	    \State Receive the target $y_t\in\mathbb{R}$, and suffer loss $\ell(\widehat{p}_t,y_t)$; Update the weights using~(\ref{equation:weights2});
	    \State Update $\w_{1,t}$ and $\w_{2,t}$ using~(\ref{equation:update w1}) and (\ref{equation:update model}) respectively, where $\tau_t=1/\sqrt{t-T_1}$.
	    \EndFor
	\end{algorithmic}
\end{algorithm}
\paragraph{Analysis}

From rounds $t>T_1$, the first model $\w_{1,t}$ would become worse due to the cumulative recovered error while the second model will become better by the large amount of coming data. Since $\w_{1,t}$ is initialized by $\w_{1,T1}$ which is learnt from the old feature space and $\w_{2,t}$ is initialized randomly, it is reasonable to assume that $\w_{1,t}$ is better than $\w_{2,t}$ in the beginning, but inferior to $\w_{2,t}$ after sufficient large number of rounds. Let $s$ be the round after which $\w_{1,t}$ is worse than $\w_{2,t}$. We define
$
L^s=\sum_{t=T_1+1}^{s}\ell(f_{1,t},y_t)+\sum_{t=s+1}^{T_2}\ell(f_{2,t},y_t),
$
we can verify that
\begin{equation}
\label{equation:Ls}
\min_{T_1+1\leq s\leq T_1+T_2}L^s\leq\min_{i=1,2}L^{S_i}.
\end{equation}
Then a more ambitious goal is to compare the proposed algorithm against $\w_{1,t}$ from rounds $T_1+1$ to $s$, and against the $\w_{2,t}$ from rounds $s$ to $T_1 + T_2$, which motivates us to study the following performance measure
$
L^{S_{12}}-L^s.
$
Because the exact value of $s$ is generally unknown, we need to bound the worst-case
$
L^{S_{12}}-\min_{T_1+1\leq s\leq T_1+T_2}L^s.
$ An upper bound of $L^{S_{12}}$ is given as follows.
\begin{theorem}
\label{theorem:FESL-s}
For all $T_2>1$, if the model is run with parameter $\delta=1/(T_2-1)$ and $\eta=\sqrt{8/T_2\left(2\ln2+(T_2-1)H(1/T_2-1)\right)}$, then
\begin{equation}
\label{equation:FESL-s}
L^{S_{12}}\leq\min_{T_1+1\leq s\leq T_1+T_2}L^s+\sqrt{\frac{T_2}{2}\left(2\ln2+\frac{H(\delta)}{\delta}\right)}
\end{equation}
where $H(x)=-x\ln x-(1-x)\ln(1-x)$ is the binary entropy function.
\end{theorem}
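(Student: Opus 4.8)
The plan is to recognize FESL-s as an instance of the ``tracking the best expert'' problem with $N=2$ experts and a single allowed switch, and to adapt the fixed-share analysis of Cesa-Bianchi and Lugosi~\cite{DBLP:books/daglib/0016248}. Because the algorithm draws a model from the distribution $p_{i,t}=\alpha_{i,t-1}/(\alpha_{1,t-1}+\alpha_{2,t-1})$ and predicts $f_{i,t}$, its expected loss on round $t$ is exactly $\sum_{i=1}^2 p_{i,t}\ell(f_{i,t},y_t)$; note that, unlike the combination case, no convexity is invoked here, only linearity of expectation. First I would apply Hoeffding's lemma to the loss values $\ell(f_{i,t},y_t)\in[0,1]$ under the distribution $p_{\cdot,t}$, obtaining $\sum_{i}p_{i,t}\ell(f_{i,t},y_t)\le -\tfrac1\eta\ln\big(\sum_i p_{i,t}e^{-\eta\ell(f_{i,t},y_t)}\big)+\tfrac\eta8$.

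Next I would convert the per-round log-sum into a telescoping quantity. Writing $W_t=v_{1,t}+v_{2,t}$ and using $v_{i,t}=\alpha_{i,t-1}e^{-\eta\ell(f_{i,t},y_t)}$ together with the fact that the share step preserves the total mass, i.e. $\alpha_{1,t}+\alpha_{2,t}=\delta W_t+(1-\delta)W_t=W_t$, one checks that $\sum_i p_{i,t}e^{-\eta\ell(f_{i,t},y_t)}=W_t/W_{t-1}$, with $W_{T_1}=\alpha_{1,T_1}+\alpha_{2,T_1}=1$. Summing the Hoeffding bound over $t=T_1+1,\dots,T_1+T_2$ then telescopes to $L^{S_{12}}\le -\tfrac1\eta\ln W_{T_1+T_2}+\tfrac{\eta T_2}{8}$, so the whole argument reduces to a lower bound on the final total weight $W_{T_1+T_2}$.

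The crux, and the step I expect to be the main obstacle, is to lower bound $W_{T_1+T_2}$ by the mass the algorithm implicitly assigns to the best one-switch comparator. Fix $s$ and track the weight along the sequence that uses model $1$ on rounds $T_1+1,\dots,s$ and model $2$ afterwards. For a ``stay'' round the share update gives $\alpha_{i,t}=\delta W_t/2+(1-\delta)v_{i,t}\ge(1-\delta)v_{i,t}$, contributing a factor $(1-\delta)$; for the single ``switch'' round at $s$ I would instead retain the mixing term, $\alpha_{2,s}\ge\delta W_s/2\ge(\delta/2)v_{1,s}$, contributing a factor $\delta/2$ and letting the accumulated mass of model $1$ flow into model $2$. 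Carrying the loss factors $e^{-\eta\ell(f_{i,t},y_t)}$ through the recursion and starting from $\alpha_{1,T_1}=1/2$ yields $W_{T_1+T_2}\ge v_{2,T_1+T_2}\ge \tfrac12\cdot\tfrac\delta2\,(1-\delta)^{T_2-2}e^{-\eta L^s}$. The delicate points are counting the exponents correctly (among the $T_2-1$ share updates at rounds $T_1+1,\dots,T_1+T_2-1$ that affect $v_{2,T_1+T_2}$, exactly one is the switch and $T_2-2$ are stays) and verifying that at the switch the transferred mass carries model $1$'s losses through round $s$ while model $2$'s own losses accumulate thereafter, so the exponent is precisely $-\eta L^s$.

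Finally I would substitute this lower bound to get, for every $s$, $L^{S_{12}}\le L^s+\tfrac1\eta\big(2\ln2-\ln\delta-(T_2-2)\ln(1-\delta)\big)+\tfrac{\eta T_2}{8}$, and then use the identity $H(\delta)/\delta=-\ln\delta-\tfrac{1-\delta}{\delta}\ln(1-\delta)=-\ln\delta-(T_2-2)\ln(1-\delta)$, valid for $\delta=1/(T_2-1)$, to rewrite the bracket as $2\ln2+H(\delta)/\delta$. Taking the minimum over $T_1+1\le s\le T_1+T_2$ and choosing $\eta=\sqrt{8\big(2\ln2+H(\delta)/\delta\big)/T_2}$ to balance the $1/\eta$ and $\eta T_2/8$ terms yields the claimed $\sqrt{(T_2/2)(2\ln2+H(\delta)/\delta)}$ penalty. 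A minor but necessary check is that using the coarser stay bound $(1-\delta)$ rather than the tighter $(1-\delta/2)$ available from the update is exactly what matches the stated constant.
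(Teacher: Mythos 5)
Your proof is correct, and it takes a genuinely different --- more direct --- route than the paper's. The paper (Appendix A.2) proves the theorem by viewing FESL-s as an efficient implementation of the randomized forecaster run over all one-switch \emph{compound actions} with a non-uniform prior: it needs a bound for non-uniform initial weights (Lemma~\ref{lemma:5.1}), an induction showing that the fixed-share weights of~(\ref{equation:weights2}) equal the marginalized compound-action weights, and a lower bound $\tfrac12\cdot\tfrac\delta2(1-\delta)^{T_2-2}$ on the prior mass of the best one-switch sequence (Theorem~\ref{theorem:5.2}). You bypass the compound-action machinery entirely: you run the Hoeffding-plus-telescoping potential argument directly on the two FESL-s weights (valid because the share step preserves total mass, so indeed $\sum_i p_{i,t}e^{-\eta\ell(f_{i,t},y_t)}=W_t/W_{t-1}$ with $W_{T_1}=1$), and you lower-bound the final mass $W_{T_1+T_2}$ by threading the comparator through the update rule: one factor $\delta/2$ at the switch, a factor $(1-\delta)$ at each of the $T_2-2$ stays, giving the same quantity $\tfrac12\cdot\tfrac\delta2(1-\delta)^{T_2-2}e^{-\eta L^s}$ that the paper obtains as a prior bound. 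In effect, the exact marginalization identity the paper proves by induction is replaced in your argument by a one-sided inequality that follows in two lines from~(\ref{equation:weights2}). The trade-off: the paper's reduction makes the connection to the tracking-the-best-expert framework and the counting bound on compound actions explicit (and generalizes more transparently to multiple switches), while your argument is shorter and self-contained. Your bookkeeping is also right on the delicate points --- $T_2-1$ transitions of which exactly one is a switch, the identity $H(\delta)/\delta=-\ln\delta-(T_2-2)\ln(1-\delta)$ at $\delta=1/(T_2-1)$, the tuning of $\eta$, and the observation that the stated constant corresponds to the coarser stay factor $(1-\delta)$ rather than the sharper $1-\delta/2$ available from the update.
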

According to Theorem~\ref{theorem:FESL-s} we know that $L^{S_{12}}$ is comparable to $\min_{T_1+1\leq s\leq T_1+T_2}L^s$. Due to (\ref{equation:Ls}), we can conclude that the upper bound of $L^{S_{12}}$ in Algorithm~\ref{alg:FESL-s} is tighter than that of Algorithm~\ref{alg:FESL}.

\section{Experiments}
In this section, we first introduce the datasets we use. We want to emphasize that we collected one real dataset by ourselves since our setting of feature evolving is relatively novel so that the required datasets are not widely available yet. Then we introduce the compared methods and settings. Finally experiment results are given.

\begin{table}[!t]
\vspace{-1em}
\renewcommand\arraystretch{1.2}
	\caption{Detail description of datasets: let $n$ be the number of examples, and $d_1$ and $d_2$ denote the dimensionality of the first and second feature space, respectively. The first 9 datasets in the left column are synthetic datasets, ``r.EN-GR" means the dataset EN-GR comes from Reuter and ``RFID" is the real dataset.}
	\label{table:Data}
	\centering 
	\small
	\setlength\tabcolsep{4pt}
		\begin{tabular}{l|ccc|l|ccc|l|lll}

\hline
\makecell[c]{Dataset} & n & $d_1$ & $d_2$ & \makecell[c]{Dataset} & $n$ & $d_1$ & $d_2$ & \makecell[c]{Dataset} & \makecell[c]{$n$} & \makecell[c]{$d_1$} & \makecell[c]{$d_2$}\\ 
\hline
Australian &	690	&	42	&	29	&	r.EN-FR 	&	18,758	&	21,531	&	24,892	&	 r.GR-IT 	&	29,953	&	34,279	&	15,505\\
Credit-a 	&	653	&	15	&	10	&	 r.EN-GR 	&	18,758	&	21,531	&	34,215	&	 r.GR-SP 	&	29,953	&	34,279	&	 11,547\\
Credit-g 	&	1,000	&	20	&	14	&	 r.EN-IT 	&	18,758	&	21,531	&	15,506	&	 r.IT-EN 	&	24,039	&	15,506	&	 21,517\\
Diabetes 	&	768	&	8	&	5	&	 r.EN-SP 	&	18,758	&	21,531	&	11,547	&	 r.IT-FR 	&	24,039	&	15,506	&	 24,892\\
DNA 	&	940	&	180	&	125	&	 r.FR-EN 	&	26,648	&	24,893	&	21,531	&	 r.IT-GR 	&	24,039	&	15,506	&	 34,278\\
German 	&	1,000	&	59	&	41	&	 r.FR-GR 	&	26,648	&	24,893	&	34,287	&	 r.IT-SP 	&	24,039	&	15,506	&	 11,547\\
Kr-vs-kp 	&	3,196	&	36	&	25	&	 r.FR-IT 	&	26,648	&	24,893	&	15,503	&	 r.SP-EN 	&	12,342	&	11,547	&	 21,530\\
Splice 	&	3,175	&	60	&	42	&	 r.FR-SP 	&	26,648	&	24,893	&	11,547	&	 r.SP-FR 	&	12,342	&	11,547	&	 24,892\\
Svmguide3 	&	1,284	&	22	&	15	&	 r.GR-EN 	&	29,953	&	34,279	&	21,531	&	 r.SP-GR 	&	12,342	&	11,547	&	 34,262\\
 RFID 	&	940	&	78	&	72	&	 r.GR-FR 	&	29,953	&	34,279	&	24,892	&	 r.SP-IT 	&	12,342	&	11,547	&	 15,500\\

  		\hline
		\end{tabular}
	\vspace{-0.8em}
\end{table}

\subsection{Datasets}
We conduct our experiments on 30 datasets consisting of 9 synthetic datasets, 20 Reuter datasets and 1 real dataset. To generate synthetic data, we randomly choose some datasets from different domains including \emph{economy} and \emph{biology}, etc\footnote{Datasets can be found in http://archive.ics.uci.edu/ml/.} whose scales vary from 690 to 3,196. They only have one feature space at first. We artificially map the original datasets into another feature space by random Gaussian matrices, then we have data both from feature space $S_1$ and $S_2$. Since the original data are in batch mode, we manually make them come sequentially. In this way, synthetic data are completely generated. We also conduct our experiments on 20 datasets from Reuter~\cite{DBLP:conf/nips/AminiUG09}. They are multi-view datasets which have large scale varying from 12,342 to 29,963. Each dataset has two views which represent two different kinds of languages, respectively. We regard the two views as the two feature spaces. Now they do have two feature spaces but the original data are in batch mode, so we will artificially make them come in streaming way. 

We use the RFID technique to collect the real data which contain 450 instances from $S_1$ and $S_2$ respectively. RFID technique is widely used to do moving goods detection~\cite{DBLP:conf/infocom/WangX0XL16}. In our case, we want to utilize the RFID technique to predict the location's coordinate of the moving goods attached by RFID tags. Concretely, we arranged several RFID aerials around the indoor area. In each round, each RFID aerial received the tag signals, then the goods with tag moved, at the same time, we recorded the goods' coordinate. Before the aerials expired, we arranged new aerials beside the old ones to avoid the situation without aerials. So in this overlapping period, we have data from both old and new feature spaces. After the old aerials expired, we continue to use the new ones to receive signals. Then we only have data from feature space $S_2$. So the RFID data we collect totally satisfy our assumptions. The details of all the datasets we use are presented in Table~\ref{table:Data}.

\subsection{Compared Approaches and Settings}
We compare our FESL-c and FESL-s with three approaches. One is mentioned in Section~\ref{section:preliminary}, where once the feature space changed, the online gradient descent algorithm will be invoked from scratch, named as NOGD (Naive Online Gradient Descent). The other two approaches utilize the model learned from feature space $S_1$ by online gradient descent to do predictions on the recovered data. The difference between them is that one keeps updating with the recovered data while the other does not. The one which keeps updating is called Updating Recovered Online Gradient Descent (ROGD-u) and the other which keeps fixed is called Fixed Recovered Online Gradient Descent (ROGD-f). We evaluate the empirical performances of the proposed approaches on classification and regression tasks on rounds $T_1+1,\ldots,T_1+T_2$. To verify that our analysis is reasonable, we present the trend of average cumulative loss. Concretely, at each time $t'$, the loss $\bar{\ell}_{t'}$ of every method is the average of the cumulative loss over $1,\ldots,t'$, namely $ \bar{\ell}_{t'}=(1/t')\sum\nolimits_{t=1}^{t'}\ell_{t}.$ We also present the classification performance over all instances on rounds $T_1+1,\ldots,T_1+T_2$ on synthetic and Reuter data. The performances of all approaches are obtained by average results over 10 independent runs on synthetic data. Due to the large scale of Reuter data, we only conduct 3 independent runs on Reuter data and report the average results. 

\begin{figure}[!t]
\centering
\small
\vspace{-0.8em}
\begin{minipage}{0.19\linewidth}\centering
    \vspace{-0.7cm}
    \includegraphics[width=1\textwidth]{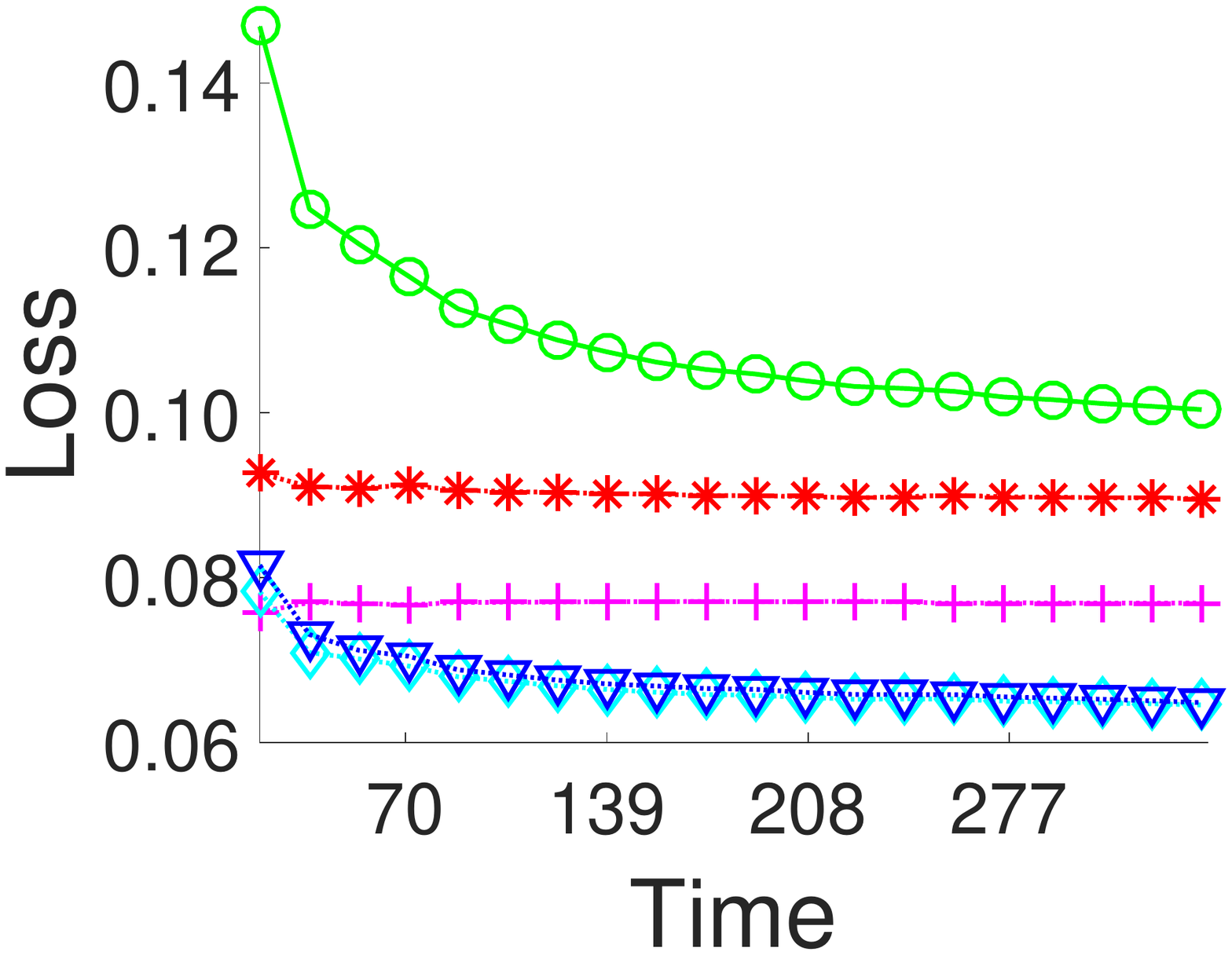}\\
    \vspace{-0.9cm}
    \mbox{\scriptsize\quad\quad(a) \emph{australian}}
\end{minipage}
\begin{minipage}{0.19\linewidth}\centering
	\vspace{-0.7cm}
    \includegraphics[width=1\textwidth]{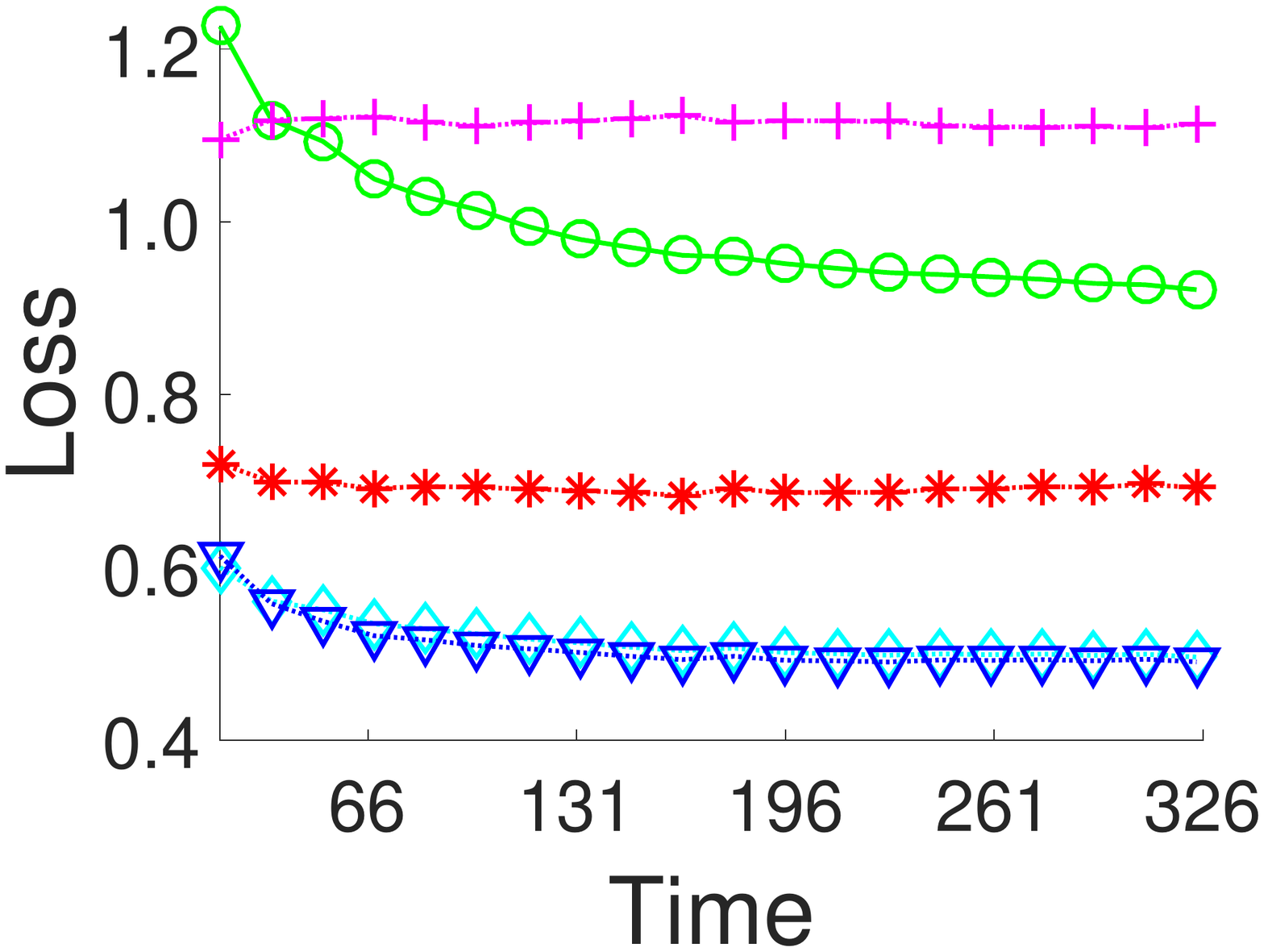}\\
    \vspace{-0.9cm}
    \mbox{\scriptsize\quad\quad(b) \emph{credit-a}}
\end{minipage}
\begin{minipage}{0.19\linewidth}\centering
	\vspace{-0.7cm}
    \includegraphics[width=1\textwidth]{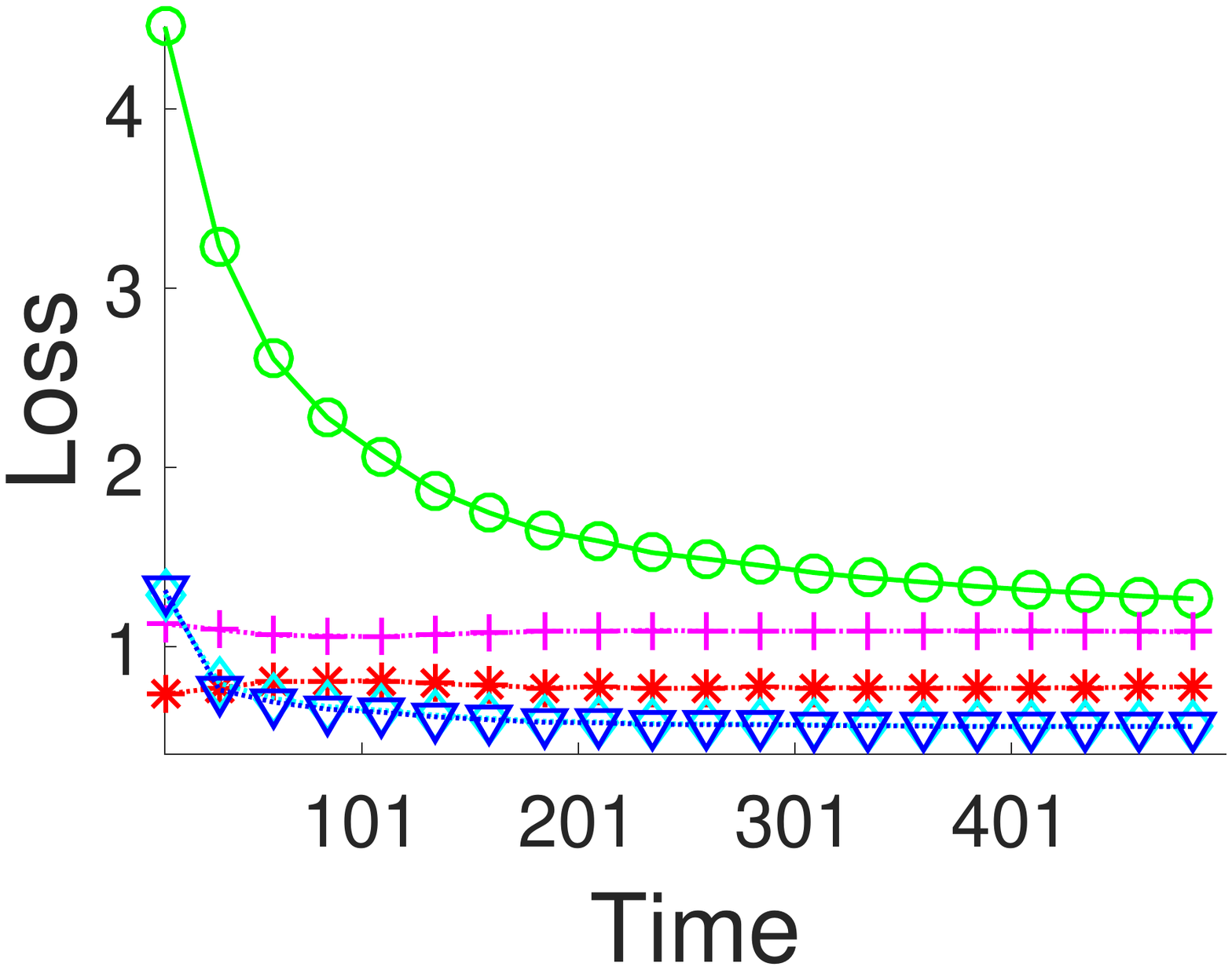}\\
    \vspace{-0.9cm}
    \mbox{\scriptsize\quad\quad(c) \emph{credit-g}}
\end{minipage}
\begin{minipage}{0.19\linewidth}\centering
	\vspace{-0.7cm}
    \includegraphics[width=1\textwidth]{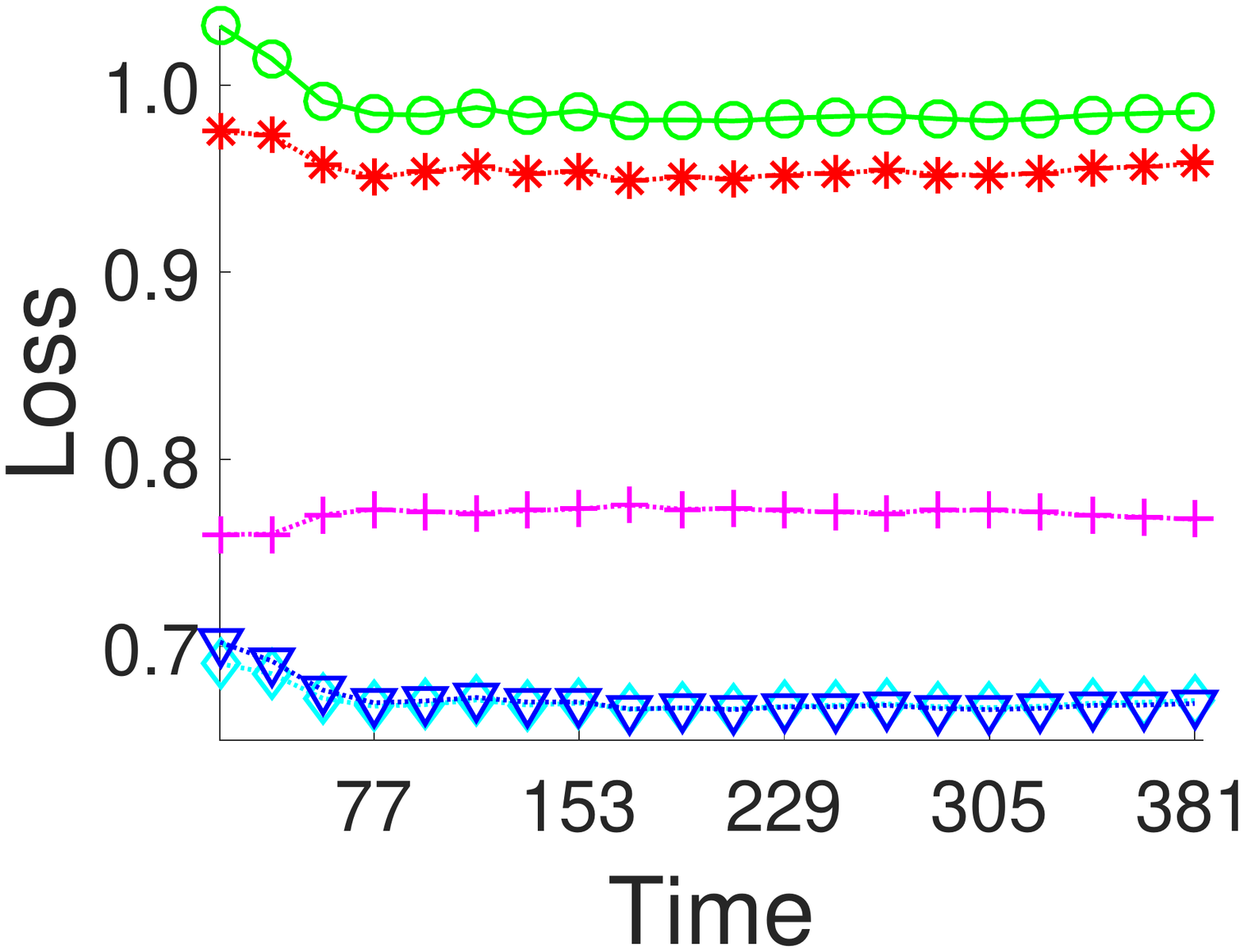}\\
    \vspace{-0.9cm}
    \mbox{\scriptsize\quad\quad(d) \emph{diabetes}}
\end{minipage}
\begin{minipage}{0.19\linewidth}\centering
	\vspace{-0.7cm}
    \includegraphics[width=1\textwidth]{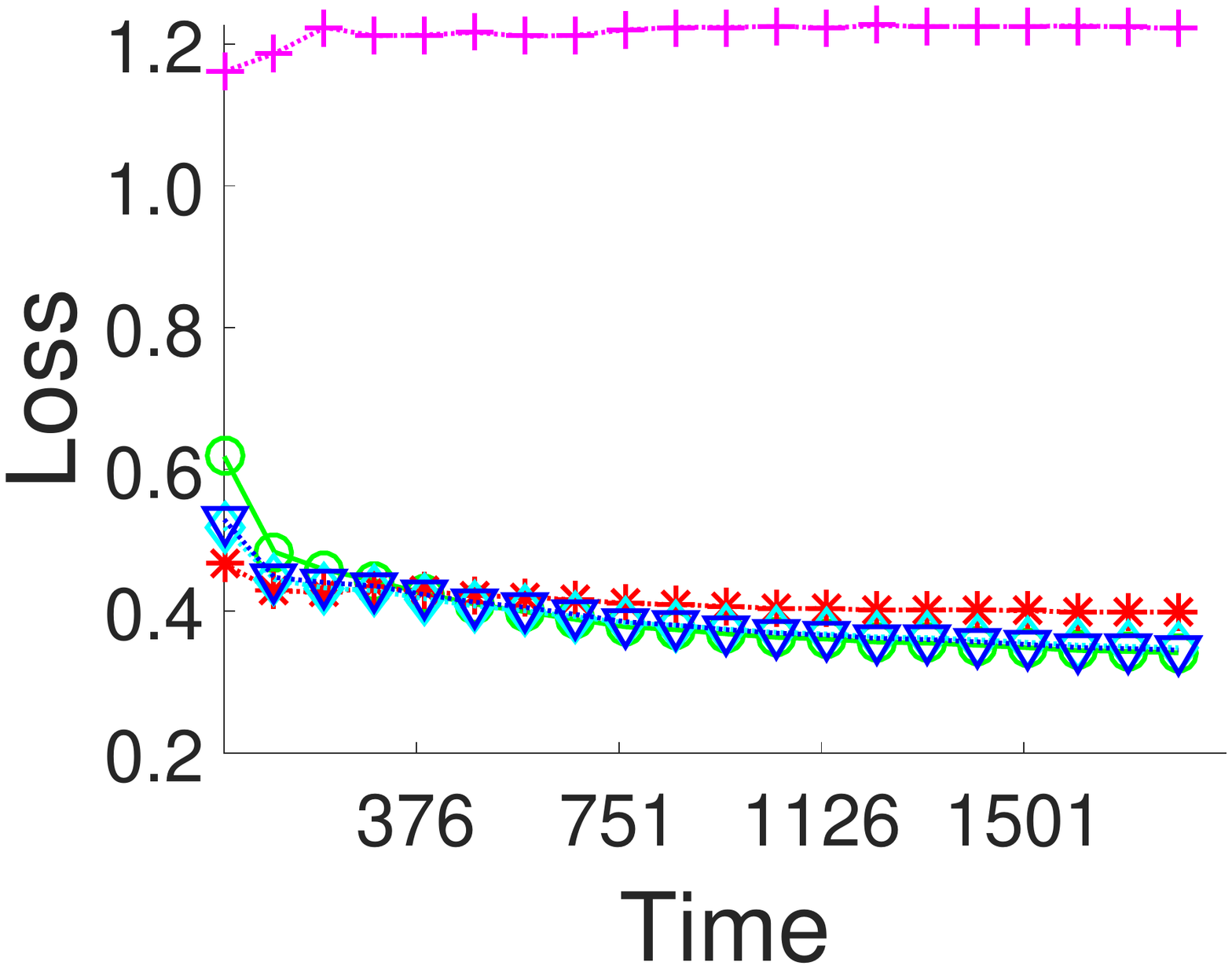}\\
    \vspace{-0.9cm}
    \mbox{\scriptsize\quad\quad(e) \emph{r.EN-SP}}
\end{minipage}

\begin{minipage}{0.19\linewidth}\centering
	\vspace{-0.7cm}
    \includegraphics[width=1\textwidth]{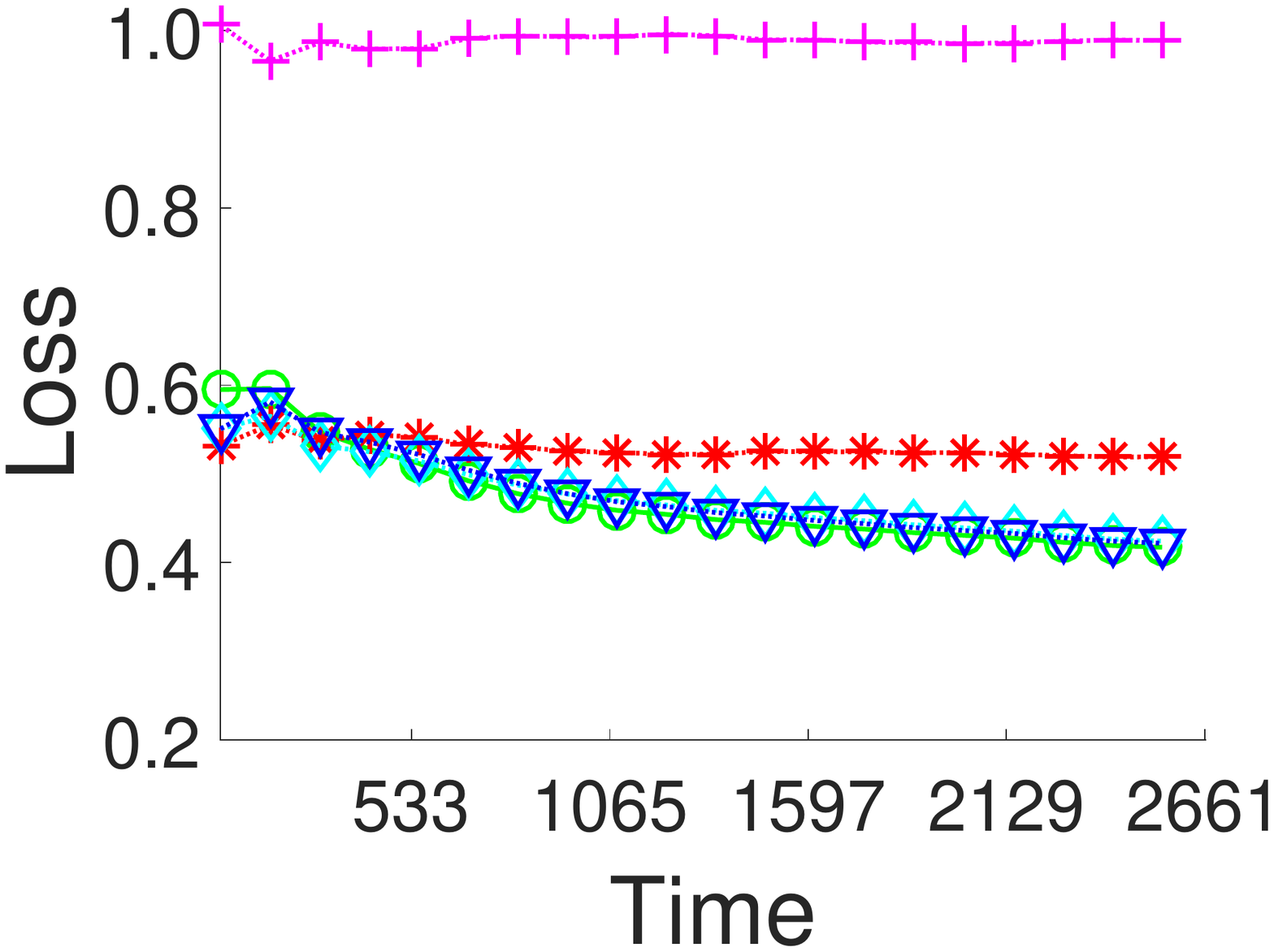}\\
    \vspace{-0.9cm}
    \mbox{\scriptsize\quad\quad(f) \emph{r.FR-SP}}
\end{minipage}
\begin{minipage}{0.19\linewidth}\centering
	\vspace{-0.7cm}    
    \includegraphics[width=1\textwidth]{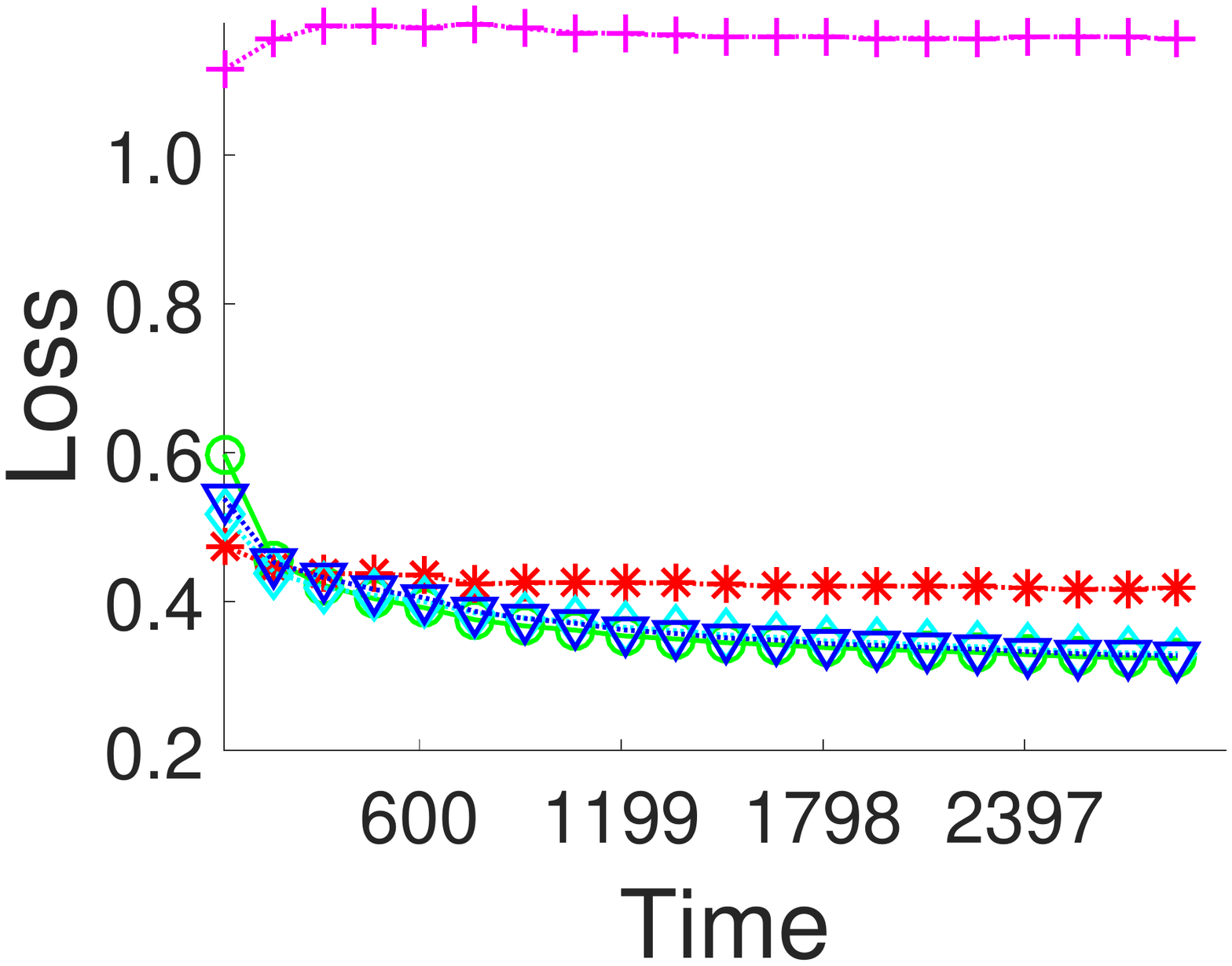}\\
    \vspace{-0.9cm}
    \mbox{\scriptsize\quad\quad(g) \emph{r.GR-EN}}
\end{minipage}
\begin{minipage}{0.19\linewidth}\centering
	\vspace{-0.7cm}
    \includegraphics[width=1\textwidth]{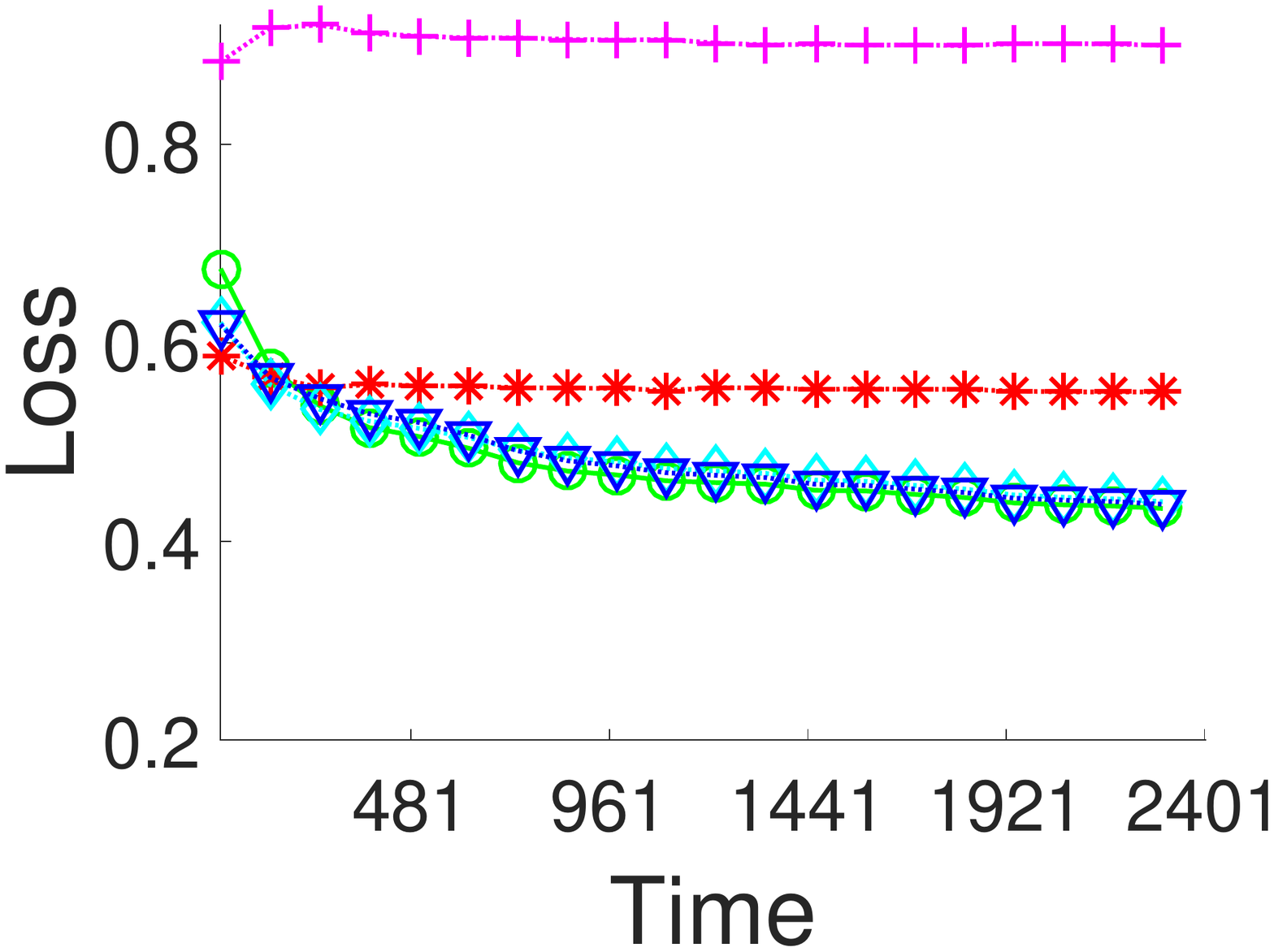}\\
    \vspace{-0.9cm}
    \mbox{\scriptsize\quad\quad(h) \emph{r.IT-FR}}
\end{minipage}
\begin{minipage}{0.19\linewidth}\centering
	\vspace{-0.7cm}
    \includegraphics[width=1\textwidth]{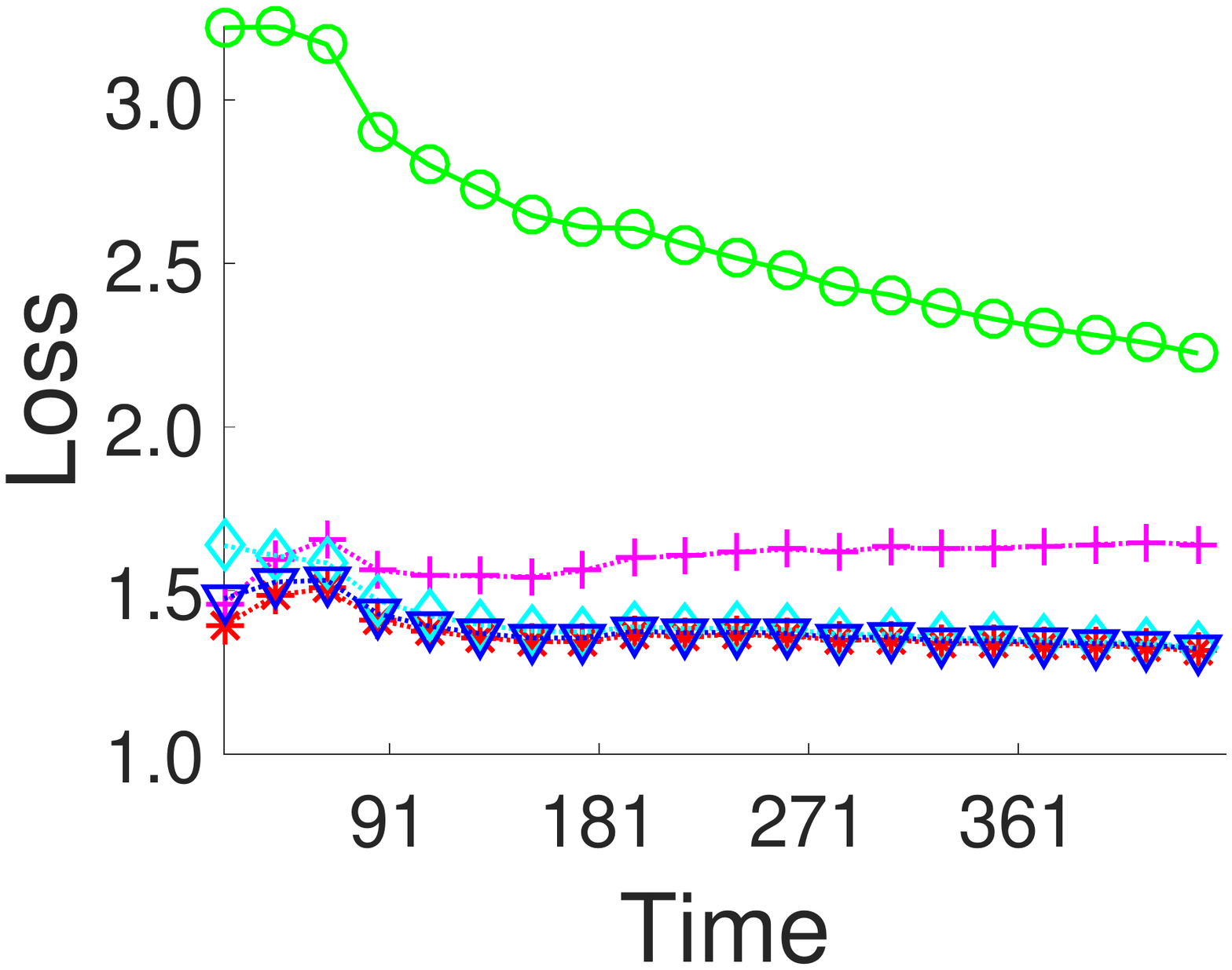}\\
    \vspace{-0.9cm}
    \mbox{\scriptsize\quad\quad(i) \emph{RFID}}
\end{minipage}
\begin{minipage}{0.19\linewidth}\centering
	\vspace{0.1cm}
    \includegraphics[width=1\textwidth]{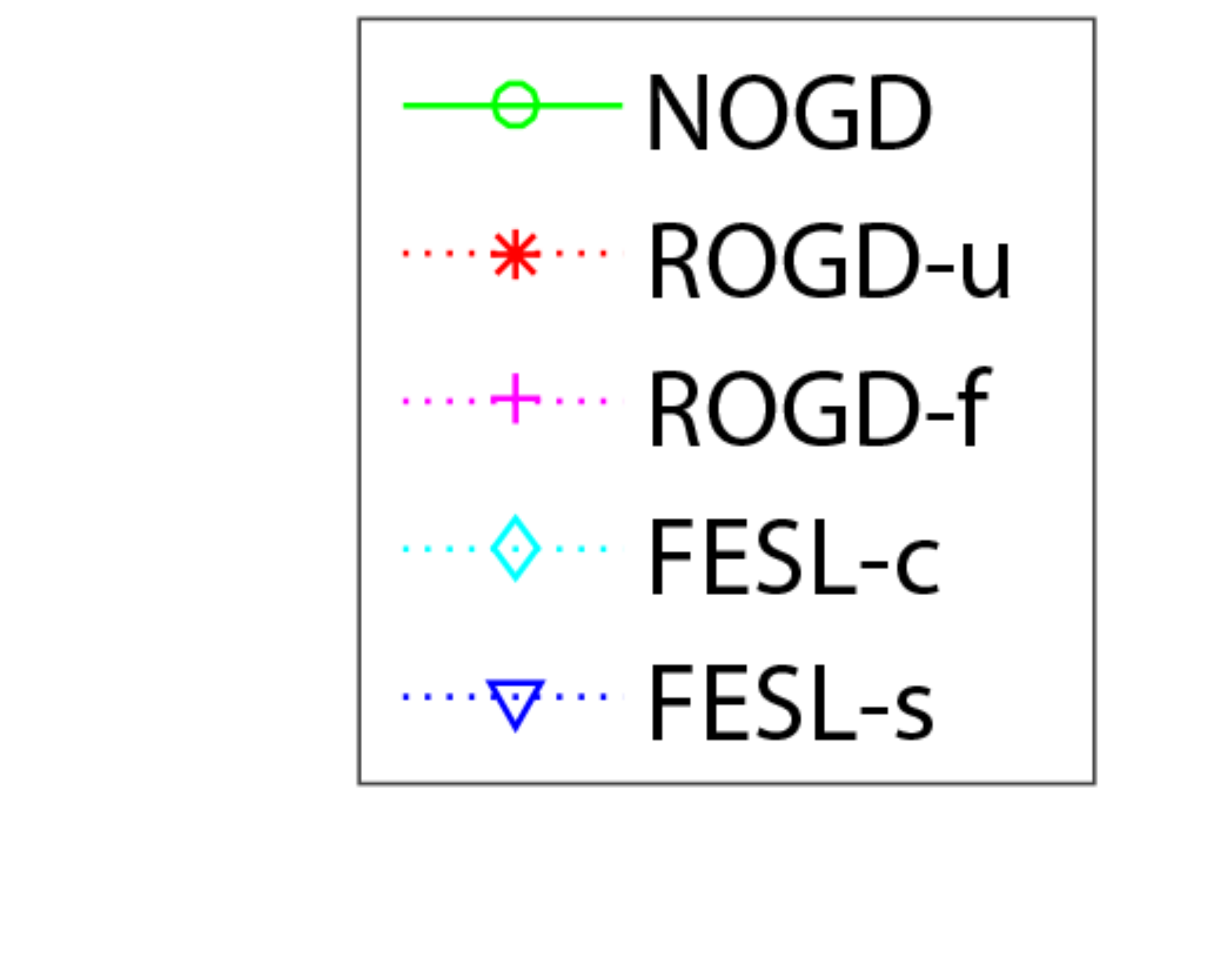}\\
    \vspace{-0.3cm}
    \mbox{\scriptsize\quad\quad legend}
\end{minipage}
\vspace{-0.4em}
\caption{\small{The trend of loss with three baseline methods and the proposed methods on synthetic data. The smaller the cumulative loss, the better. All the average cumulative loss at any time of our methods is comparable to the best of baseline methods and 8 of 9 are smaller.}}
\label{fig:synthetic}
\end{figure}

The parameters we need to set are the number of instances in overlapping period, i.e., $B$, the number of instances in $S_1$ and $S_2$, i.e., $T_1$ and $T_2$ and the step size, i.e., $\tau_t$ where $t$ is time. For all baseline methods and our methods, the parameters are the same. In our experiments, we set $B$ 5 or 10 for synthetic data, 50 for Reuter data and 40 for RFID data. We set almost $T_1$ and $T_2$ to be half of the number of instances, and $\tau_t$ to be $1/(c\sqrt{t})$ where $c$ is searched in the range $\{1,10,50,100,150\}$. The detailed setting of $c$ in $\tau_t$ for each dataset is presented in supplementary file.

\subsection{Results}
Here we only present part of the loss trend results, and other results are presented in the supplementary file. Figure~\ref{fig:synthetic} gives the trend of average cumulative loss. (a-d) are the results on synthetic data, (e-h) are the results on Reuter data, (i) is the result of the real data. The smaller the average cumulative loss, the better. From the experimental results, we have the following observations. First, all the curves with circle marks representing NOGD decrease rapidly which conforms to the fact that NOGD on rounds $T_1+1,\ldots,T_1+T_2$ becomes better and better with more and more data coming. Besides, the curves with star marks representing ROGD-u also decline but not very apparent since on rounds $1,\ldots,T_1$, ROGD-u already learned well and tend to converge, so updating with more recovered data could not bring too much benefits. Moreover, the curves with plus marks representing ROGD-f does not drop down but even go up instead, which is also reasonable because it is fixed and if there are some recovering errors, it will perform worse. Lastly, our methods are based on NOGD and ROGD-u, so their average cumulative losses also decrease. As can be seen from Figure~\ref{fig:synthetic}, the average cumulative losses of our methods are comparable to the best of baseline methods on all datasets and are smaller than them on 8 datasets. And FESL-s exhibits slightly smaller average cumulative loss than FESL-c. You may notice that NOGD is always worse than ROGD-u on synthetic data and real data while on Reuter data NOGD becomes better than ROGD-u after a few rounds. This is because on synthetic data and real data, we do not have enough rounds to let all methods converge while on Reuter data, large amounts of instances ensure the convergence of every method. So when all the methods converge, we can see that NOGD is better than other baseline methods since it always receives the real instances while ROGD-u and ROGD-f receive the recovered instances which may contain recovered error. As can be seen from (e-h), in the first few rounds, our methods are comparable to ROGD-u. When NOGD is better than ROGD-u, our methods are comparable to NOGD which shows that our methods are comparable to the best one all the time. Moreover, FESL-s performs worse than FESL-c in the beginning while afterwards, it becomes slightly better than FESL-c.

Table~\ref{table:accuracy} shows the accuracy results on synthetic datasets and Reuter datasets. We can see that for synthetic datasets, FESL-s outperforms other methods on 8 datasets, FESL-c gets the best on 5 datasets and ROGD-u also gets 5. NOGD performs worst since it starts from scratch. ROGD-u is better than NOGD and ROGD-f because ROGD-u exploits the old better trained model from old feature space and keep updating with recovered instances. Our two methods are based on NOGD and ROGD-u. We can see that our methods can follow the best baseline method or even outperform it. For Reuter datasets, we can see that FESL-c outperforms other methods on 17 datasets, FESL-s gets the best on 9 datasets and NOGD gets 8 while ROGD-u gets 1. In Reuter datasets, the period on new feature space is longer than that in synthetic datasets so that NOGD can update itself to a good model. Whereas ROGD-u updates itself with recovered data, so the model will become worse when recovered error accumulates. ROGD-f does not update itself, thus it performs worst. Our two methods can take the advantage of NOGD and ROGD-f and perform better than them.

\begin{table}[!t]
\vspace{-0.8em}
\renewcommand\arraystretch{1.06}
	\caption{\small Accuracy with its variance on synthetic datasets and Reuter datasets. The larger the better. The best ones among all the methods are bold.}
	\label{table:accuracy}
	\centering
	\small
		\begin{tabular}{l|ccccccccc}
\hline
\makecell[c]{Dataset} & \makecell[c]{NOGD} & \makecell[c]{ROGD-u} & \makecell[c]{ROGD-f} & \makecell[c]{FESL-c} & \makecell[c]{FESL-s}\\ 
\hline
australian&.767$\pm$.009&\textbf{.849$\pm$.009}&.809$\pm$.025&\textbf{.849$\pm$.009}&\textbf{.849$\pm$.009}\\
credit-a&.811$\pm$.006&.826$\pm$.018&.785$\pm$.051&.827$\pm$.014&\textbf{.831$\pm$.009}\\
credit-g&.659$\pm$.010&\textbf{.733$\pm$.006}&.716$\pm$.011&\textbf{.733$\pm$.006}&\textbf{.733$\pm$.006}\\
diabetes&.650$\pm$.002&\textbf{.652$\pm$.009}&.651$\pm$.006&\textbf{.652$\pm$.007}&\textbf{.652$\pm$.009}\\
dna&.610$\pm$.013&.691$\pm$.023&.608$\pm$.064&.691$\pm$.023&\textbf{.692$\pm$.021}\\
german&.684$\pm$.006&.700$\pm$.002&.700$\pm$.002&.700$\pm$.001&\textbf{.703$\pm$.004}\\
kr-vs-kp&.612$\pm$.005&.621$\pm$.036&.538$\pm$.024&.626$\pm$.028&\textbf{.630$\pm$.016}\\
splice&.568$\pm$.005&\textbf{.612$\pm$.022}&.567$\pm$.057&\textbf{.612$\pm$.022}&\textbf{.612$\pm$.022}\\
svmguide3&.680$\pm$.010&\textbf{.779$\pm$.010}&.748$\pm$.012&\textbf{.779$\pm$.010}&.778$\pm$.010\\
\hline
r.EN-FR & .902$\pm$.004 & .849$\pm$.003 & .769$\pm$.069 & \textbf{.903$\pm$.003} & .902$\pm$.005 \\ 
r.EN-GR & .867$\pm$.005 & .836$\pm$.007 & .802$\pm$.036 & \textbf{.870$\pm$.002} & \textbf{.870$\pm$.003} \\ 
 r.EN-IT & .858$\pm$.014 & .847$\pm$.014 & .831$\pm$.018 & .861$\pm$.010 & \textbf{.863$\pm$.013} \\ 
 r.EN-SP & .900$\pm$.002 & .848$\pm$.002 & .825$\pm$.001 & \textbf{.901$\pm$.001} & .899$\pm$.002 \\ 
 r.FR-EN & \textbf{.858$\pm$.007} & .776$\pm$.009 & .754$\pm$.012 & \textbf{.858$\pm$.007} & \textbf{.858$\pm$.007} \\ 
  r.FR-GR & .869$\pm$.004 & .774$\pm$.019 & .753$\pm$.021 & \textbf{.870$\pm$.004} & .868$\pm$.003 \\ 
   r.FR-IT & .874$\pm$.005 & .780$\pm$.022 & .744$\pm$.040 & \textbf{.874$\pm$.005} & .873$\pm$.005 \\ 
    r.FR-SP & \textbf{.872$\pm$.001} & .778$\pm$.022 & .735$\pm$.013 & \textbf{.872$\pm$.001} & .871$\pm$.002 \\ 
    r.GR-EN & \textbf{.907$\pm$.000} & .850$\pm$.007 & .801$\pm$.035 & \textbf{.907$\pm$.001} & .906$\pm$.000 \\ 
 r.GR-FR & \textbf{.898$\pm$.001} & .827$\pm$.009 & .802$\pm$.023 &  \textbf{.898$\pm$.001} &  \textbf{.898$\pm$.000} \\ 
r.GR-IT & .847$\pm$.011 & \textbf{.851$\pm$.017} & .816$\pm$.006 & .850$\pm$.018 & \textbf{.851$\pm$.017} \\ 
r.GR-SP & \textbf{.902$\pm$.001} & .845$\pm$.003 & .797$\pm$.012 & \textbf{.902$\pm$.001} & \textbf{.902$\pm$.001} \\  
 r.IT-EN & .854$\pm$.003 & .760$\pm$.006 & .730$\pm$.024 & \textbf{.856$\pm$.002} & .854$\pm$.003 \\ 
  r.IT-FR & .863$\pm$.002 & .753$\pm$.012 & .730$\pm$.020 & \textbf{.864$\pm$.002} & .862$\pm$.003 \\ 
  r.IT-GR & .849$\pm$.004 & .736$\pm$.022 & .702$\pm$.012 & \textbf{.849$\pm$.004} & .846$\pm$.004 \\ 
 r.IT-SP & \textbf{.839$\pm$.006} & .753$\pm$.014 & .726$\pm$.005 & \textbf{.839$\pm$.007} & \textbf{.839$\pm$.006} \\ 
  r.SP-EN & \textbf{.926$\pm$.002} & .860$\pm$.005 & .814$\pm$.021 & \textbf{.926$\pm$.002} & .924$\pm$.001 \\ 
r.SP-FR & .876$\pm$.005 & .873$\pm$.017 & .833$\pm$.042 & .876$\pm$.014 & \textbf{.878$\pm$.012} \\ 
 r.SP-GR & .871$\pm$.013 & .827$\pm$.025 & .810$\pm$.026 & \textbf{.873$\pm$.013} &  \textbf{.873$\pm$.013} \\ 
 r.SP-IT & \textbf{.928$\pm$.002} & .861$\pm$.005 & .826$\pm$.005 & \textbf{.928$\pm$.003} & .927$\pm$.002 \\  
  		\hline
		\end{tabular}
\end{table}

\section{Conclusion}
In this paper, we focus on a new setting: feature evolvable streaming learning. Our key observation is that in learning with streaming data, old features could vanish and new ones could occur. To make the problem tractable, we assume there is an overlapping period that contains samples from both feature spaces. Then, we learn a mapping from new features to old features, and in this way both the new and old models can be used for prediction. In our first approach FESL-c, we ensemble two predictions by learning weights adaptively. Theoretical results show that the assistance of the old feature space can improve the performance of learning with streaming data. Furthermore, we propose FESL-s to dynamically select the best model with better performance guarantee. 

\paragraph{Acknowledgement} This research was supported by NSFC (61333014, 61603177), JiangsuSF (BK20160658), Huawei Fund (YBN2017030027) and Collaborative Innovation Center of Novel Software Technology and Industrialization.

\small
\bibliographystyle{nips2016}{
\bibliography{nips17}}

\begin{thebibliography}{}

\bibitem[\protect\citeauthoryear{Aggarwal \bgroup \em et al.\egroup
  }{2006}]{DBLP:journals/tkde/AggarwalHWY06}
C.~C. Aggarwal, J.~Han, J.~Wang, and P.~S. Yu.
\newblock A framework for on-demand classification of evolving data streams.
\newblock {\em {IEEE} Transactions on Knowledge and Data Engineering},
  18:577--589, 2006.

\bibitem[\protect\citeauthoryear{Aggarwal}{2010}]{DBLP:books/daglib/p/Aggarwal10}
C.~C. Aggarwal.
\newblock Data streams: An overview and scientific applications.
\newblock In {\em Scientific Data Mining and Knowledge Discovery - Principles
  and Foundations}, pages 377--397. Springer, 2010.

\bibitem[\protect\citeauthoryear{Amini \bgroup \em et al.\egroup
  }{2009}]{DBLP:conf/nips/AminiUG09}
M.-R. Amini, N.~Usunier, and C.~Goutte.
\newblock Learning from multiple partially observed views - an application to
  multilingual text categorization.
\newblock In {\em Advances in Neural Information Processing Systems 22}, pages
  28--36, 2009.

\bibitem[\protect\citeauthoryear{Bifet \bgroup \em et al.\egroup
  }{2010}]{DBLP:journals/jmlr/BifetHKP10}
A.~Bifet, G.~Holmes, R.~Kirkby, and B.~Pfahringer.
\newblock {MOA:} {M}assive online analysis.
\newblock {\em Journal of Machine Learning Research}, 11:1601--1604, 2010.

\bibitem[\protect\citeauthoryear{Cesa{-}Bianchi and
  Lugosi}{2006}]{DBLP:books/daglib/0016248}
N.~Cesa{-}Bianchi and G.~Lugosi.
\newblock {\em Prediction, Learning, and Games}.
\newblock Cambridge University Press, 2006.

\bibitem[\protect\citeauthoryear{de Andrade~Silva \bgroup \em et al.\egroup
  }{2013}]{DBLP:journals/csur/SilvaFBHCG13}
J.~de~Andrade~Silva, E.~R. Faria, R.~C. Barros, E.~R. Hruschka, A.~C. P. L.~F.
  de~Carvalho, and J.~Gama.
\newblock Data stream clustering: {A} survey.
\newblock {\em {ACM} {C}omputing {S}urveys}, 46:13:1--13:31, 2013.

\bibitem[\protect\citeauthoryear{Domingos and
  Hulten}{2000}]{DBLP:conf/kdd/DomingosH00}
P.~M. Domingos and G.~Hulten.
\newblock Mining high-speed data streams.
\newblock In {\em Proceedings of the 6th {ACM} {SIGKDD} International
  Conference on Knowledge Discovery and Data Mining}, pages 71--80, 2000.

\bibitem[\protect\citeauthoryear{Freund and
  Schapire}{1997}]{DBLP:journals/jcss/FreundS97}
Y.~Freund and R.~E. Schapire.
\newblock A decision-theoretic generalization of on-line learning and an
  application to boosting.
\newblock {\em {J}ournal of {C}omputer and {S}ystem {S}ciences}, 55:119--139,
  1997.

\bibitem[\protect\citeauthoryear{Gaber \bgroup \em et al.\egroup
  }{2005}]{DBLP:journals/sigmod/GaberZK05}
M.~M. Gaber, A.~B. Zaslavsky, and S.~Krishnaswamy.
\newblock Mining data streams: {A} review.
\newblock {\em {SIGMOD} Record}, 34:18--26, 2005.

\bibitem[\protect\citeauthoryear{Gama and
  Rodrigues}{2009}]{DBLP:series/sci/GamaR09}
J.~Gama and P.~P. Rodrigues.
\newblock An overview on mining data streams.
\newblock In {\em Foundations of Computational Intelligence}, pages 29--45.
  Springer, 2009.

\bibitem[\protect\citeauthoryear{Guan and Li}{2001}]{DBLP:journals/npl/GuanL01}
S.~U. Guan and S.~Li.
\newblock Incremental learning with respect to new incoming input attributes.
\newblock {\em Neural Processing Letters}, 14:241--260, 2001.

\bibitem[\protect\citeauthoryear{Hashemi \bgroup \em et al.\egroup
  }{2009}]{DBLP:journals/tkde/HashemiYMK09}
S.~Hashemi, Y.~Yang, Z.~Mirzamomen, and M.~R. Kangavari.
\newblock Adapted one-versus-all decision trees for data stream classification.
\newblock {\em {IEEE} Transactions on Knowledge and Data Engineering},
  21:624--637, 2009.

\bibitem[\protect\citeauthoryear{Hazan \bgroup \em et al.\egroup
  }{2007}]{DBLP:journals/ml/HazanAK07}
E.~Hazan, A.~Agarwal, and S.~Kale.
\newblock Logarithmic regret algorithms for online convex optimization.
\newblock {\em Maching Learning}, 69:169--192, 2007.

\bibitem[\protect\citeauthoryear{Hoeffding}{1963}]{Hoeffding:1963}
W.~Hoeffding.
\newblock Probability inequalities for sums of bounded random variables.
\newblock {\em Journal of the American Statistical Association}, 58:13--30,
  1963.

\bibitem[\protect\citeauthoryear{Hoi \bgroup \em et al.\egroup
  }{2014}]{DBLP:journals/jmlr/HoiWZ14}
S.~Hoi, J.~Wang, and P.~Zhao.
\newblock {LIBOL:} {A} library for online learning algorithms.
\newblock {\em Journal of Machine Learning Research}, 15:495--499, 2014.

\bibitem[\protect\citeauthoryear{Hou and
  Zhou}{2016}]{DBLP:journals/corr/HouZ16}
C.~Hou and Z.{-}H. Zhou.
\newblock One-pass learning with incremental and decremental features.
\newblock {\em ArXiv e-prints}, arXiv:1605.09082, 2016.

\bibitem[\protect\citeauthoryear{Kibria}{2007}]{DBLP:journals/technometrics/Kibria07}
B.~M.~Golam Kibria.
\newblock Bayesian statistics and marketing.
\newblock {\em Technometrics}, 49:230, 2007.

\bibitem[\protect\citeauthoryear{Leite \bgroup \em et al.\egroup
  }{2009}]{DBLP:conf/ijcnn/LeiteCG09}
D.~Leite, P.~Costa Jr., and F.~Gomide.
\newblock Evolving granular classification neural networks.
\newblock In {\em Proceedings of International Joint Conference on Neural
  Networks 2009}, pages 1736--1743, 2009.

\bibitem[\protect\citeauthoryear{Li \bgroup \em et al.\egroup
  }{2014}]{DBLP:conf/aaai/LiJZ14}
S.{-}Y. Li, Y.~Jiang, and Z.{-}H. Zhou.
\newblock Partial multi-view clustering.
\newblock In {\em Proceedings of the 28th {AAAI} Conference on Artificial
  Intelligence}, pages 1968--1974, 2014.

\bibitem[\protect\citeauthoryear{Muslea \bgroup \em et al.\egroup
  }{2002}]{DBLP:conf/icml/MusleaMK02}
I.~Muslea, S.~Minton, and C.~Knoblock.
\newblock Active + semi-supervised learning = robust multi-view learning.
\newblock In {\em Proceedings of the 19th International Conference on Machine
  Learning}, pages 435--442, 2002.

\bibitem[\protect\citeauthoryear{Nguyen \bgroup \em et al.\egroup
  }{2012}]{DBLP:conf/pakdd/NguyenWNW12}
H.{-}L. Nguyen, Y.{-}K. Woon, W.~K. Ng, and L.~Wan.
\newblock Heterogeneous ensemble for feature drifts in data streams.
\newblock In {\em Proceedings of the 16th Pacific-Asia Conference on Knowledge
  Discovery and Data Mining}, pages 1--12, 2012.

\bibitem[\protect\citeauthoryear{Nguyen \bgroup \em et al.\egroup
  }{2015}]{DBLP:journals/kais/NguyenWN15}
H.{-}L. Nguyen, Y.{-}K. Woon, and W.~K. Ng.
\newblock A survey on data stream clustering and classification.
\newblock {\em Knowledge and Information Systems}, 45:535--569, 2015.

\bibitem[\protect\citeauthoryear{Oza}{2005}]{DBLP:conf/smc/Oza05}
N.~C. Oza.
\newblock Online bagging and boosting.
\newblock In {\em Proceedings of the {IEEE} International Conference on
  Systems, Man and Cybernetics 2005}, pages 2340--2345, 2005.

\bibitem[\protect\citeauthoryear{Pan and
  Yang}{2010}]{DBLP:journals/tkde/PanY10}
S.~J. Pan and Q.~Yang.
\newblock A survey on transfer learning.
\newblock {\em {IEEE} Transactions on Knowledge and Data Engineering},
  22:1345--1359, 2010.

\bibitem[\protect\citeauthoryear{Raina \bgroup \em et al.\egroup
  }{2007}]{DBLP:conf/icml/RainaBLPN07}
R.~Raina, A.~Battle, H.~Lee, B.~Packer, and A.~Ng.
\newblock Self-taught learning: Transfer learning from unlabeled data.
\newblock In {\em Proceedings of the 24th International Conference on Machine
  Learning}, pages 759--766, 2007.

\bibitem[\protect\citeauthoryear{Read \bgroup \em et al.\egroup
  }{2011}]{DBLP:journals/jmlr/ReadBHP11}
J.~Read, A.~Bifet, G.~Holmes, and B.~Pfahringer.
\newblock Streaming multi-label classification.
\newblock In {\em Proceedings of the 2nd Workshop on Applications of Pattern
  Analysis}, pages 19--25, 2011.

\bibitem[\protect\citeauthoryear{Samina \bgroup \em et al.\egroup
  }{2014}]{khalid2014survey}
K.~Samina, K.~Tehmina, and N.~Shamila.
\newblock A survey of feature selection and feature extraction techniques in
  machine learning.
\newblock In {\em Proceedings of Science and Information Conference 2014},
  pages 372--378, 2014.

\bibitem[\protect\citeauthoryear{Seidl \bgroup \em et al.\egroup
  }{2009}]{DBLP:conf/edbt/SeidlAKKH09}
T.~Seidl, I.~Assent, P.~Kranen, R.~Krieger, and J.~Herrmann.
\newblock Indexing density models for incremental learning and anytime
  classification on data streams.
\newblock In {\em Proceedings of the 12th International Conference on Extending
  Database Technology}, pages 311--322, 2009.

\bibitem[\protect\citeauthoryear{Shalev{-}Shwartz}{2012}]{DBLP:journals/ftml/Shalev-Shwartz12}
S.~Shalev{-}Shwartz.
\newblock Online learning and online convex optimization.
\newblock {\em Foundations and Trends in Machine Learning}, 4:107--194, 2012.

\bibitem[\protect\citeauthoryear{Tsang \bgroup \em et al.\egroup
  }{2007}]{DBLP:conf/icml/TsangKK07}
I.~W. Tsang, A.~Kocsor, and J.~T. Kwok.
\newblock Simpler core vector machines with enclosing balls.
\newblock In {\em Proceedings of the 24th International Conference on Machine
  Learning}, pages 911--918, 2007.

\bibitem[\protect\citeauthoryear{Wang \bgroup \em et al.\egroup
  }{2003}]{DBLP:conf/kdd/WangFYH03}
H.~Wang, W.~Fan, P.~S. Yu, and J.~Han.
\newblock Mining concept-drifting data streams using ensemble classifiers.
\newblock In {\em Proceedings of the 9th {ACM} {SIGKDD} International
  Conference on Knowledge Discovery and Data Mining}, pages 226--235, 2003.

\bibitem[\protect\citeauthoryear{Wang \bgroup \em et al.\egroup
  }{2016}]{DBLP:conf/infocom/WangX0XL16}
C.~Wang, L.~Xie, W.~Wang, T.~Xue, and S.~Lu.
\newblock Moving tag detection via physical layer analysis for large-scale
  {RFID} systems.
\newblock In {\em Proceedings of the 35th Annual {IEEE} International
  Conference on Computer Communications}, pages 1--9, 2016.

\bibitem[\protect\citeauthoryear{Xu \bgroup \em et al.\egroup
  }{2013}]{DBLP:journals/corr/abs-1304-5634}
C.~Xu, D.~Tao, and C.~Xu.
\newblock A survey on multi-view learning.
\newblock {\em ArXiv e-prints}, arXiv:1304.5634, 2013.

\bibitem[\protect\citeauthoryear{Zhang \bgroup \em et al.\egroup
  }{2011}]{DBLP:conf/kdd/ZhangLWGZG11}
P.~Zhang, J.~Li, P.~Wang, B.~J. Gao, X.~Zhu, and L.~Guo.
\newblock Enabling fast prediction for ensemble models on data streams.
\newblock In {\em Proceedings of the 17th {ACM} {SIGKDD} International
  Conference on Knowledge Discovery and Data Mining}, pages 177--185, 2011.

\bibitem[\protect\citeauthoryear{Zhao \bgroup \em et al.\egroup
  }{2014}]{DBLP:journals/ai/ZhaoHWL14}
P.~Zhao, S.~Hoi, J.~Wang, and B.~Li.
\newblock Online transfer learning.
\newblock {\em Artificial Intelligence}, 216:76--102, 2014.

\bibitem[\protect\citeauthoryear{Zhou \bgroup \em et al.\egroup
  }{2012}]{DBLP:journals/jmlr/ZhouSL12}
G.~Zhou, K.~Sohn, and H.~Lee.
\newblock Online incremental feature learning with denoising autoencoders.
\newblock In {\em Proceedings of the 15th International Conference on
  Artificial Intelligence and Statistics}, pages 1453--1461, 2012.

\bibitem[\protect\citeauthoryear{Zhou}{2012}]{zhou2012ensemble}
Z.-H. Zhou.
\newblock {\em Ensemble methods: Foundations and algorithms}.
\newblock CRC press, 2012.

\bibitem[\protect\citeauthoryear{Zhou}{2016}]{DBLP:journals/fcsc/Zhou16a}
Z.{-}H. Zhou.
\newblock Learnware: On the future of machine learning.
\newblock {\em {Frontiers of Computer Science}}, 10:589--590, 2016.

\bibitem[\protect\citeauthoryear{Zinkevich}{2003}]{DBLP:conf/icml/Zinkevich03}
M.~Zinkevich.
\newblock Online convex programming and generalized infinitesimal gradient
  ascent.
\newblock In {\em Proceedings of the 20th International Conference on Machine
  Learning}, pages 928--936, 2003.

\end{thebibliography}

\clearpage

\section*{Supplementary Material of “Learning with Feature
Evolvable Streams”}

In the Appendix, we will prove the two theorems in the section ``Our Proposed Approaches", and give some additional experiment results.

\section*{Appendix A: Analysis}
In this section, we will give the detailed proofs of the two theorems in ``Our Proposed Approaches". The two theorems are the special cases of Theorem 2.2 and Corollary 5.1 respectively in~\cite{DBLP:books/daglib/0016248}.
\subsection*{A.1 Proof of Theorem ~\ref{theorem:FESL-c}}

To prove Theorem~\ref{theorem:FESL-c}, we propose to bound the related quantities $(1/\eta)\ln(A_t/A_{t-1})$ where
\[
A_t=\sum_{i=1}^2 \alpha_{i,t}=\sum_{i=1}^2 e^{-\eta L_t^{S_i}}
\]
for $t\geq T_1$, and $A_{T_1}=2$. $L_{t}^{S_i}$ is the cumulative loss at time $t$ of the $i$-th base learner, namely $L_{t}^{S_i}=\sum_{s=T_1+1}^t\ell(f_{i,s},y_s)$. In the proof we use the following classical inequality due to Hoeffding~\cite{Hoeffding:1963}.

\begin{lemma}
\label{lemma hoeffding}
Let $X$ be a random variable with $a\leq X\leq b$. Then for any $s\in\mathbb{R}$,
\[
\ln\mathbb{E}[e^{sX}]\leq s\mathbb{E}X+\frac{s^2(b-a)^2}{8}
\]
\end{lemma}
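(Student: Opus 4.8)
The plan is to introduce the cumulant generating function $\psi(s)=\ln\mathbb{E}[e^{sX}]$ and obtain the bound from a second-order Taylor expansion of $\psi$ about $s=0$. First I would observe that since $a\leq X\leq b$ the variable $e^{sX}$ is bounded for every fixed $s$, so $\psi(s)$ is finite on all of $\mathbb{R}$; the same boundedness (together with dominated convergence) lets me differentiate under the expectation, so $\psi$ is smooth. A direct computation then gives $\psi(0)=\ln\mathbb{E}[1]=0$ and
\[
\psi'(s)=\frac{\mathbb{E}[Xe^{sX}]}{\mathbb{E}[e^{sX}]},\qquad\text{hence}\qquad \psi'(0)=\mathbb{E}X.
\]

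The crucial step is to control $\psi''$. Differentiating once more,
\[
\psi''(s)=\frac{\mathbb{E}[X^2e^{sX}]}{\mathbb{E}[e^{sX}]}-\left(\frac{\mathbb{E}[Xe^{sX}]}{\mathbb{E}[e^{sX}]}\right)^2,
\]
which is exactly the variance of $X$ under the tilted (exponentially reweighted) probability measure $\mathbb{Q}_s$ defined by $d\mathbb{Q}_s\propto e^{sX}\,d\mathbb{P}$. Under $\mathbb{Q}_s$ the variable $X$ still lies in $[a,b]$, so it suffices to bound the variance of any random variable supported in $[a,b]$. Centering at the midpoint $c=(a+b)/2$ and using that the variance is the minimal mean-square deviation, I get $\mathrm{Var}_{\mathbb{Q}_s}(X)\leq\mathbb{E}_{\mathbb{Q}_s}[(X-c)^2]\leq((b-a)/2)^2$, since $|X-c|\leq(b-a)/2$ pointwise. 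Therefore $\psi''(s)\leq(b-a)^2/4$ for every $s\in\mathbb{R}$.

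To finish, I apply Taylor's theorem with Lagrange remainder: for some $\xi$ between $0$ and $s$,
\[
\psi(s)=\psi(0)+s\,\psi'(0)+\frac{s^2}{2}\psi''(\xi)\leq s\,\mathbb{E}X+\frac{s^2}{2}\cdot\frac{(b-a)^2}{4}=s\,\mathbb{E}X+\frac{s^2(b-a)^2}{8},
\]
which is the claimed inequality. I expect the main obstacle to be the second paragraph: correctly identifying $\psi''(s)$ with a variance under the tilted measure and justifying the variance bound for bounded variables, along with the bookkeeping needed to differentiate under the expectation. If one prefers to avoid the tilting argument entirely, an alternative route bounds $\mathbb{E}[e^{sX}]$ directly by Jensen-type convexity — writing $e^{sX}\leq\frac{b-X}{b-a}e^{sa}+\frac{X-a}{b-a}e^{sb}$ for $X\in[a,b]$, taking expectations, and then analyzing an auxiliary one-variable function whose second derivative is again at most $1/4$ — but the Taylor approach above is the cleanest to carry out.
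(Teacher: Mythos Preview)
Your proof is correct. The paper itself does not actually prove this lemma; it simply cites Section~A.1 of the Appendix in Cesa-Bianchi and Lugosi's \emph{Prediction, Learning, and Games}. The LaTeX source of the paper does contain a commented-out proof, and that proof follows precisely the alternative route you sketch in your last paragraph: reduce to the centered case $\mathbb{E}X=0$, use convexity of the exponential to write $e^{sx}\leq\frac{x-a}{b-a}e^{sb}+\frac{b-x}{b-a}e^{sa}$, take expectations, and then analyze the auxiliary function $\phi(u)=-pu+\ln(1-p+pe^{u})$ with $u=s(b-a)$ and $p=-a/(b-a)$, showing $\phi(0)=\phi'(0)=0$ and $\phi''\leq 1/4$ before applying Taylor.

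So your main argument (the exponential-tilting route) is genuinely different from the paper's intended one. Both proofs ultimately reduce to a Taylor expansion and a uniform bound of $1/4$ on a second derivative, but the object being expanded differs: you work directly with the cumulant generating function $\psi$ and recognize $\psi''(s)$ as a variance under the tilted law $\mathbb{Q}_s$, which makes the $\tfrac{(b-a)^2}{4}$ bound immediate via Popoviciu's inequality on variances. The convexity approach avoids introducing a new measure and any differentiation-under-the-integral issues, at the cost of the somewhat opaque algebraic substitution for $\phi$. Your approach is arguably more conceptual and generalizes cleanly (it is the standard route to sub-Gaussian bounds via variance control), while the paper's route is slightly more self-contained. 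Either is perfectly acceptable here.
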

The detailed proof of Lemma~\ref{lemma hoeffding} can be found in Section A.1 of the Appendix in~\cite{DBLP:books/daglib/0016248}.

\begin{proof}[Proof of Theorem~\ref{theorem:FESL-c}]
First observe that
\begin{equation}
\label{lower bound}
\begin{split}
\ln\frac{A_{T_1+T_2}}{A_{T_1}}&=\ln\left(\sum_{i=1}^2 e^{-\eta L_{T_1+T_2}^{S_i}}\right)-\ln2\\
&\geq\ln\left(\max_{i=1,2}e^{-\eta L_{T_1+T_2}^{S_i}}\right)-\ln2\\
&=-\eta\min_{i=1,2}L_{T_1+T_2}^{S_i}-\ln2.
\end{split}
\end{equation}
On the other hand, for each $t=T_1+1,\ldots,T_1+T_2$,
\[
\begin{split}
\ln\frac{A_t}{A_{t-1}}&=\ln\frac{\sum_{i=1}^2 e^{-\eta\ell(f_{i,t},y_t)}e^{-\eta L_{t-1}^{S_i}}}{\sum_{j=1}^2 e^{-\eta L_{t-1}^{S_j}}}\\
&=\ln\frac{\sum_{i=1}^2\alpha_{i,t-1}e^{-\eta\ell(f_{i,t},y_t)}}{\sum_{j=1}^2\alpha_{j,t-1}}.
\end{split}
\]
Now using Lemma~\ref{lemma hoeffding}, we observe that the quantity above may be upper bounded by
\[
\begin{split}
-\eta&\frac{\sum_{i=1}^2\alpha_{i,t-1}\ell(f_{i,t},y_t)}{\sum_{j=1}^2\alpha_{j,t-1}}+\frac{\eta^2}{8}\\
&\leq-\eta\ell\left(\frac{\sum_{i=1}^2\alpha_{i,t-1}f_{i,t}}{\sum_{j=1}^2\alpha_{j,t-1}},y_t\right)+\frac{\eta^2}{8}\\
&=-\eta\ell(\widehat{p}_t,y_t)+\frac{\eta^2}{8}
\end{split}
\]
where we used the convexity of the loss function in its first argument and the way how the weight updates. Summing over $t=T_1+1,\ldots,T_1+T_2$, we get
\begin{equation}
\label{equation:base}
\ln\frac{A_{T_1+T_2}}{A_{T_1}}\leq-\eta L^{S_{12}}+\frac{\eta^2}{8}T_2.
\end{equation}
Combining this with the lower bound~(\ref{lower bound}) and solving for $L^{S_{12}}$, we find that
\[
L^{S_{12}}\leq\min(L^{S_1},L^{S_2})+\frac{\ln2}{\eta}+\frac{\eta}{8}T_2
\]
as desired.
In particular, with $\eta=\sqrt{8\ln2/T_2}$, the upper bound becomes $\min(L^{S_1},L^{S_2})+\sqrt{(T_2/2)\ln2}$.
\end{proof}

\subsection*{A.2 Proof of Theorem~\ref{theorem:FESL-s}}

To prove Theorem~\ref{theorem:FESL-s}, we first give some definitions. Since we only choose one base learner's prediction in FESL-s as our final prediction in each round, we use $I_t\in\{1,2\}$ to denote the index of the base learners in $t$-th round for $t=T_1+1,\ldots,T_1+T_2$. We call $I_t$ an action. So the loss in round $t$ can be denoted as $\ell(I_t, y_t)$. Thus, randomly choosing one base learner in each round is a randomized version of FESL-c, so we call it randomized FESL-c. Denote the distribution according to which the random action $I_t$ is drawn at time $t$ by $\bm{p}_t=(p_{1,t},p_{2,t})$, and $\bar{\ell}(\bm{p}_t,y_t)=\sum_{i=1}^2 p_{i,t}\ell(I_t,y_t)$ is the expected loss of randomized FESL-c at time $t$. Then we have the following lemma:
\begin{lemma}
\label{lemma:corollary 4.2}
Let $T_2>1$ and $\delta\in(0,1)$. The randomized FESL-c with $\eta=\sqrt{8\ln2/n}$ satisfies, with probability at least $1-\delta$
\[
\sum_{t=T_1+1}^{T_1+T_2}\ell(I_t,y_t)-\min_{i=1,2}\sum_{t=T_1+1}^{T_1+T_2}\ell(i,y_t)
\leq\sqrt{\frac{T_2\ln2}{2}}+\sqrt{\frac{T_2}{2}\ln\frac{1}{\delta}}.
\]
\end{lemma}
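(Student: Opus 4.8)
The plan is to split the \emph{realized} regret of randomized FESL-c into a deterministic part, controlled exactly as in Theorem~\ref{theorem:FESL-c}, and a random part that concentrates around zero by a martingale argument. Writing $\mathcal{F}_{t-1}$ for the history up to round $t-1$, the selection distribution $\bm{p}_t=(p_{1,t},p_{2,t})$ is $\mathcal{F}_{t-1}$-measurable, so the conditional expectation of the realized loss is $\mathbb{E}[\ell(I_t,y_t)\mid\mathcal{F}_{t-1}]=\bar{\ell}(\bm{p}_t,y_t)$. I would therefore write
\[
\sum_{t=T_1+1}^{T_1+T_2}\ell(I_t,y_t)-\min_{i=1,2}\sum_{t=T_1+1}^{T_1+T_2}\ell(i,y_t)
=\underbrace{\sum_t\bigl(\ell(I_t,y_t)-\bar{\ell}(\bm{p}_t,y_t)\bigr)}_{(\mathrm{I})}
+\underbrace{\Bigl(\sum_t\bar{\ell}(\bm{p}_t,y_t)-\min_i\sum_t\ell(i,y_t)\Bigr)}_{(\mathrm{II})},
\]
and bound the two pieces separately.

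For term $(\mathrm{II})$ I would re-run the proof of Theorem~\ref{theorem:FESL-c} almost verbatim, but stop one line earlier. There the estimate $\ln(A_t/A_{t-1})\le-\eta\,\frac{\sum_i\alpha_{i,t-1}\ell(f_{i,t},y_t)}{\sum_j\alpha_{j,t-1}}+\eta^2/8$ was obtained \emph{before} convexity was invoked. Since $\alpha_{i,t-1}/\sum_j\alpha_{j,t-1}=p_{i,t}$ by~(\ref{equation:distribution}), the weighted numerator is exactly $\bar{\ell}(\bm{p}_t,y_t)$, so summing and combining with the same lower bound~(\ref{lower bound}) gives $\sum_t\bar{\ell}(\bm{p}_t,y_t)\le\min_i\sum_t\ell(i,y_t)+\ln2/\eta+\eta T_2/8$. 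Plugging in $\eta=\sqrt{8\ln2/T_2}$ makes term $(\mathrm{II})$ at most $\sqrt{(T_2\ln2)/2}$, the first summand in the claim. The point is that randomization costs nothing \emph{in expectation}: the expected loss obeys precisely the FESL-c guarantee.

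For term $(\mathrm{I})$ I would argue by concentration. Set $X_t=\ell(I_t,y_t)-\bar{\ell}(\bm{p}_t,y_t)$; by the identity above $\mathbb{E}[X_t\mid\mathcal{F}_{t-1}]=0$, so $(X_t)$ is a martingale difference sequence. Given $\mathcal{F}_{t-1}$ the realized loss $\ell(I_t,y_t)$ takes one of the two values $\ell(1,y_t),\ell(2,y_t)\in[0,1]$, so the conditional range of $X_t$ is at most $1$. The Hoeffding--Azuma inequality in its range form then yields $\mathbb{P}(\sum_tX_t\ge\lambda)\le\exp(-2\lambda^2/T_2)$; choosing $\lambda=\sqrt{(T_2/2)\ln(1/\delta)}$ shows that with probability at least $1-\delta$ term $(\mathrm{I})$ is at most $\sqrt{(T_2/2)\ln(1/\delta)}$, the second summand. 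Adding the two bounds finishes the proof.

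The routine parts are the two bounds themselves; the one step needing genuine care is the concentration argument, specifically verifying the martingale-difference property (that $\bm{p}_t$ is fixed by the past while $I_t$ is fresh randomness) and pinning the per-round range as $1$ rather than $2$, since the losses lie in $[0,1]$ — getting this range right is exactly what produces the constant $\tfrac{1}{2}$ under the square root. A minor bookkeeping remark is that the $n$ appearing in $\eta=\sqrt{8\ln2/n}$ in the statement should read $T_2$, matching the horizon used throughout.
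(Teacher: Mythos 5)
Your proof is correct and follows essentially the same route as the paper's (very brief) proof: the paper likewise treats $\ell(I_t,y_t)-\bar{\ell}(\bm{p}_t,y_t)$ as a bounded martingale difference sequence, applies Hoeffding--Azuma, and combines this with Theorem~\ref{theorem:FESL-c}. Your one genuine addition is making explicit that the statement of Theorem~\ref{theorem:FESL-c} bounds $\sum_t\ell(\widehat{p}_t,y_t)$, which by convexity is only \emph{smaller} than the needed quantity $\sum_t\bar{\ell}(\bm{p}_t,y_t)$, so one must reuse the pre-convexity intermediate inequality from that proof rather than the theorem itself --- a detail the paper's phrase ``combining the results of Theorem~\ref{theorem:FESL-c}'' glosses over.
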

\begin{proof}
The random variables $\ell(I_t,y_t)-\bar{\ell}(\bm{p}_t,y_t)$, for $t=T_1+1\ldots,T_1+T_2$, form a sequence of bounded martingale differences. With a simple application of the Hoeffding-Azuma inequality and combining the results of Theorem~\ref{theorem:FESL-c}, we yield the result of this lemma.
\end{proof}

In addition, $i_{T_1+1},\ldots,i_s,i_{s+1},\ldots,i_{T_1+T_2}$ is defined as the sequence of the base learner's index such that we can study a more ambitious goal $g=L^{S_{12}}-L^s$ where $L^s=\sum_{t=T_1+1}^{T_1+T_2}\ell(i_t,y_t)$. It is not difficult to modify the randomized FESL-c in order to achieve this goal. Specifically, we associate a \emph{compound action} with each sequence which only switches once. Then we can run our randomized FESL-c over the set of compound actions: at any time $t$ the randomized FESL-c draws a compound action $(I_{T_1+1},\ldots,I_{T_1+T_2})$ and plays action $I_t$. Denote by $M$ the number of all compound actions. Then, in FESL-c, we only have 2 base learners while in randomized FESL-c, we have $M$ base learners. Then Lemma~\ref{lemma:corollary 4.2} implies that $g$ is bounded by $\sqrt{(T_2\ln M)/2}$. Hence, it suffices to count the number of compound actions: for each $k=0,\ldots,1$ there are $C_{T_2-1}^k$ ways to pick $k$ time steps $t=T_1+1,\ldots,T_1+T_2-1$ where a switch $i_t\neq i_{t+1}$ occurs, and there are $2(2-1)^k$ ways to assign a distinct action to each of the $k+1$ resulting blocks. This gives
\[
M=\sum_{k=0}^m C_{T_2-1}^k 2\leq 4\exp\left((T_2-1)H\left(\frac{1}{T_2-1}\right)\right).
\] 
where $H(x)=-x\ln x-(1-x)\ln(1-x)$ is the binary entropy function defined for $x\in(0,1)$. Substituting this bound in $\sqrt{(T_2\ln M)/2}$, we find that $g$ satisfies
\[
g\leq\sqrt{\frac{T_2}{2}\left(2\ln2+(T_2-1)H(\frac{1}{T_2-1})\right)}
\]
on any action sequence $i_{T_1+1},\ldots,i_s,i_{s+1},\ldots,i_{T_1+T_2}$.
However, the randomized FESL-c requires to explicitly manage an exponential number of compound actions in its straightforward implementation. Then we propose FESL-s which can efficiently implement a generalized version of randomized FESL-c that is able to achieve $g$. Specifically, FESL-s is derived from a variant of randomized FESL-c where the initial weight distribution is not uniform. We have the following results.
\begin{lemma}
\label{lemma:5.1}
For all $T_2>1$, if the randomized FESL-c is run using initial weights $\alpha_{1,T_1},\alpha_{2,T_1}\geq 0$ such that $A_{T_1+T_2}=\alpha_{1,T_1+T_2}+\alpha_{2,T_1+T_2}\leq 1$, then
\[
\sum_{t=T_1+1}^{T_1+T_2}\bar{\ell}(\bm{p}_t,y_t)\leq\frac{1}{\eta}\ln\frac{1}{A_{T_1+T_2}}+\frac{\eta}{8}T_2,
\]
where \[A_{T_1+T_2}=\sum_{i=1}^2\alpha_{i,T_1+T_2}=\sum_{i=1}^2\alpha_{i,T_1}e^{-\eta\sum_{t=T_1+1}^{T_1+T_2}\ell(i,y_t)}\] is the sum of the weights after $T_2$ rounds.
\end{lemma}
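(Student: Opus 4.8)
The plan is to reuse, essentially verbatim, the potential-function argument from the proof of Theorem~\ref{theorem:FESL-c}, generalizing it in two ways: the initial weights $\alpha_{1,T_1},\alpha_{2,T_1}$ need no longer be uniform, and the quantity being bounded is the expected loss $\bar\ell(\bm{p}_t,y_t)=\sum_{i=1}^2 p_{i,t}\ell(i,y_t)$ of the randomly drawn action rather than the loss of a combined prediction. The central object is again the potential $A_t=\sum_{i=1}^2\alpha_{i,t}$, where now $\alpha_{i,t}=\alpha_{i,T_1}e^{-\eta\sum_{s=T_1+1}^t\ell(i,y_s)}$, so that $A_{T_1}=\alpha_{1,T_1}+\alpha_{2,T_1}$ and $A_{T_1+T_2}$ is the stated final weight. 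As before I would track $\ln(A_t/A_{t-1})$, bound each term, and telescope.

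First I would compute the one-step ratio. Since $\alpha_{i,t}=\alpha_{i,t-1}e^{-\eta\ell(i,y_t)}$, dividing by $A_{t-1}$ gives $A_t/A_{t-1}=\sum_{i=1}^2 p_{i,t}e^{-\eta\ell(i,y_t)}$, where $p_{i,t}=\alpha_{i,t-1}/A_{t-1}$ is exactly the sampling distribution~(\ref{equation:distribution}). This is the moment generating function of the loss of the action drawn from $\bm{p}_t$, which takes values in $[0,1]$. Applying Lemma~\ref{lemma hoeffding} with $s=-\eta$, $a=0$, $b=1$ yields $\ln(A_t/A_{t-1})\le -\eta\,\bar\ell(\bm{p}_t,y_t)+\eta^2/8$ for each $t=T_1+1,\ldots,T_1+T_2$. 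Notably, in contrast to Theorem~\ref{theorem:FESL-c}, no convexity of $\ell$ is invoked here, since we bound the expected loss of the randomized rule directly rather than the loss of a convex combination of the predictions.

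Next I would telescope over $t=T_1+1,\ldots,T_1+T_2$, obtaining $\ln(A_{T_1+T_2}/A_{T_1})\le -\eta\sum_{t}\bar\ell(\bm{p}_t,y_t)+(\eta^2/8)T_2$, and then solve for the cumulative expected loss to get $\sum_t\bar\ell(\bm{p}_t,y_t)\le (1/\eta)\ln(A_{T_1}/A_{T_1+T_2})+(\eta/8)T_2$. The only place the hypothesis enters is in passing from $\ln(A_{T_1}/A_{T_1+T_2})$ to $\ln(1/A_{T_1+T_2})$: because the losses are nonnegative the potential is nonincreasing, so the operative requirement that the total initial weight satisfy $A_{T_1}\le 1$ (which, given $A_{T_1+T_2}\le A_{T_1}$, also yields the stated $A_{T_1+T_2}\le1$) forces $\ln A_{T_1}\le 0$, discarding that term and delivering the claimed bound.

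I do not expect any genuine obstacle; the lemma is a routine reweighting of the proof of Theorem~\ref{theorem:FESL-c}. The one point demanding care is the bookkeeping around the un-normalized weights---keeping $A_{T_1}$ distinct from the value $2$ used in Theorem~\ref{theorem:FESL-c}, and making explicit that nonnegativity of the losses forces $A_{T_1+T_2}\le A_{T_1}$ so that the hypothesis on the total weight is exactly what permits dropping the $\ln A_{T_1}$ term. I would also flag that this generalized bound is the engine behind the compound-action analysis: applying it with the nonuniform initial distribution over the $M$ compound actions is precisely what converts the $\sqrt{(T_2\ln M)/2}$ estimate into the binary-entropy term appearing in Theorem~\ref{theorem:FESL-s}.
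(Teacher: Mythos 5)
Your proof is correct and takes essentially the same route as the paper's: the paper likewise invokes the telescoped Hoeffding bound~(\ref{equation:base}), restated for the expected loss $\bar{\ell}(\bm{p}_t,y_t)$ under non-uniform initial weights, and then discards the $\frac{1}{\eta}\ln\frac{1}{A_{T_1}}$ term using $A_{T_1}\leq 1$. Your two side remarks---that no convexity of $\ell$ is needed once one bounds the expected loss of the randomized rule directly, and that the operative hypothesis is on the initial total weight $A_{T_1}$ (the stated condition on $A_{T_1+T_2}$ then following from monotonicity of the potential)---are details the paper's proof glosses over, but the argument is the same.
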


\begin{proof}
From equation~(\ref{equation:base}) mentioned in the last subsection, we know that
\[
\ln\frac{A_{T_1+T_2}}{A_{T_1}}\leq-\eta\sum_{t=T_1}^{T_1+T_2}\bar{\ell}(\bm{p}_t,y_t)+\frac{\eta^2}{8}T_2
\]
where $A_t=\sum_{i=1}^2 \alpha_{i,t}=\sum_{i=1}^2 e^{-\eta L_t^{S_i}}$. Since $A_{T_1}\leq 1$, then we have
\[
\begin{split}
\sum_{t=T_1+1}^{T_1+T_2}\bar{\ell}(\bm{p}_t,y_t)&\leq\frac{1}{\eta}\ln A_{T_1}-\frac{1}{\eta}\ln A_{T_1+T_2}+\frac{\eta T_2}{8}\\
&=\frac{1}{\eta}\ln\frac{1}{A_{T_1+T_2}}+\frac{\eta T_2}{8}-\frac{1}{\eta}\ln\frac{1}{A_{T_1}}\\
&\leq\frac{1}{\eta}\ln\frac{1}{A_{T_1+T_2}}+\frac{\eta T_2}{8}.
\end{split}
\]
\end{proof}
We write $\alpha'_t(i_{T_1+1},\ldots,i_{T_1+T_2})$ to denote the weight assigned at time $t$ by the randomized FESL-c to the compound action $(i_{T_1+1},\ldots,i_{T_1+T_2})$. For any fixed choice of the parameter $\delta\in(0,1)$, the initial weights of the compound actions are defined by
\[
\alpha_{T_1}'(i_{T_1+1},\ldots,i_{T_1+T_2})=\frac{1}{2}\left(\frac{\delta}{2}\right)\left(1-\delta+\frac{\delta}{2}\right)^{T_2-1}.
\]
Then the way of updating weight is as follows:
\[
\alpha_t'(i_{T_1+1},\ldots,i_{T_1+T_2})
=\alpha_{T_1}'(i_{T_1+1},\ldots,i_{T_1+T_2})\exp\left(-\eta\sum_{s=1}^t\ell(i_s,y_s)\right).
\]
Introducing the ``marginalized" weights
\[
\alpha_{T_1}'(i_{T_1+1},\ldots,i_{T_1+T_2})
=\sum_{i_{t+1},\ldots,i_{T_1+T_2}}\alpha_{T_1}'(i_{T_1+1},\ldots,i_t,i_{t+1},\ldots,i_{T_1+T_2})
\]
for all $t=T_1+1,\ldots,T_1+T_2$, we obtain that FESL-s draws action $i$ at time $t+1$ with probability $\alpha_{i,t}'/A_t'$, where $A_t'=\alpha_{1,t}'+\alpha_{2,t}'$ and
\[
\alpha_{i,t}'=\sum_{i_1,\ldots,i_t,i_{t+2},\ldots,i_n}\alpha_t'(i_{T_1+1},\ldots,i_t,i,i_{t+2},\ldots,i_{T_1+T_2})
\]
for $t\geq T_1+1$ and $\alpha_{i,T_1}'=1/2$.

The initial weights are recursively computed as follows
\[
\begin{split}
\alpha_{T_1}'(&i_1)=1/2,\\
\alpha_{T_1}'(&i_{T_1+1},\ldots,i_{t+1})
=\alpha_{T_1}'(i_{T_1+1},\ldots,i_t)\left(\frac{\delta}{2}+(1-\delta)\mathbb{I}_{\{i_{t+1}=i_t\}}\right).
\end{split}
\]

The following result shows that FESL-s is indeed an efficient version of randomized FESL-c.
\begin{thm}
For all $i=1,2,t=T_1+1,\ldots,T_1+T_2,\delta\in[0,1]$, we have $\alpha_{i,t}=\alpha_{i,t}'$, where $\alpha_{i,t}$ is the weight of the $i$-th base learner at time $t$ in FESL-s, and $\alpha_{i,t}'$ is the weight of the conditional distribution of action $I_t'$ drawn at time $t$ by randomized FESL-c run over the compound actions $(i_{T_1+1},\ldots,i_{T_1+T_2})$ using initial weights $\alpha_{T_1}'(i_{T_1+1},\ldots,i_{T_1+T_2})$ set with the same value of $\delta$.
\end{thm}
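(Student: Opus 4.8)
The plan is to prove $\alpha_{i,t}=\alpha_{i,t}'$ by induction on $t$, showing that the marginalized compound weights $\alpha_{i,t}'$ satisfy exactly the recursion~(\ref{equation:weights2}) that defines the FESL-s weights. The base case $t=T_1$ is immediate, since both algorithms start from the uniform weights $\alpha_{i,T_1}=\alpha_{i,T_1}'=1/2$ (line 2 of Algorithm~\ref{alg:FESL-s} and the recursive definition $\alpha_{T_1}'(i_1)=1/2$). So the whole argument reduces to verifying that the one-step update of the marginalized weights coincides with~(\ref{equation:weights2}).

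First I would record the structural fact that makes the exponentially large marginalization tractable: the compound weight $\alpha_t'(i_{T_1+1},\ldots,i_{T_1+T_2})$ is the product of (i) the prior, which factorizes along the Markov chain with transition factor $\tfrac{\delta}{2}+(1-\delta)\mathbb{I}_{\{i_{s+1}=i_s\}}$, and (ii) the loss term $\exp(-\eta\sum_{s=T_1+1}^{t}\ell(i_s,y_s))$, which depends on the action coordinates only through round $t$. Because the coordinates beyond the one being fixed appear solely in the prior, summing them out leaves a product of single-step transition distributions, each of which sums to one (for two actions, $\tfrac{\delta}{2}$ to switch and $1-\tfrac{\delta}{2}$ to stay). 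Hence all such future coordinates marginalize to the constant $1$, which is precisely what reduces the $M$ compound experts to two effective weights.

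Next I would carry out the inductive step by peeling off the last transition and the round-$t$ loss. Because the marginalized quantity $\alpha_{i,t}'$ fixes the action at the \emph{next} position (it is the weight used to draw the action at time $t+1$) and only charges losses through round $t$, isolating the transition into that position produces, for each predecessor $j\in\{1,2\}$, the inner sum over the earlier coordinates with the position-$t$ action fixed to $j$ together with an extra factor $e^{-\eta\ell(j,y_t)}$. By the future-marginalization observation applied one step earlier, this inner sum equals $v_{j,t}=\alpha_{j,t-1}'e^{-\eta\ell(j,y_t)}$. Summing the transition factor against the two predecessors then gives $\alpha_{i,t}'=\sum_{j}\big(\tfrac{\delta}{2}+(1-\delta)\mathbb{I}_{\{j=i\}}\big)v_{j,t}=(1-\delta)v_{i,t}+\tfrac{\delta}{2}(v_{1,t}+v_{2,t})$, which is identical to~(\ref{equation:weights2}) once $\alpha_{j,t-1}'=\alpha_{j,t-1}$ is substituted. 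Together with the base case this closes the induction.

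The step I expect to be the main obstacle is getting the order of the loss and the sharing right. A careless accounting would attach the round-$t$ loss to the \emph{target} action $i$ and thus produce a ``share-then-loss'' update in which the cross term carries $e^{-\eta\ell(i,y_t)}$ rather than $e^{-\eta\ell(j,y_t)}$, which does not match~(\ref{equation:weights2}). The correct bookkeeping hinges on the fact that $\alpha_{i,t}'$ is a predictive weight fixing the action at position $t+1$, so the loss factor is naturally charged to the source action $j$ before the transition, yielding the loss-then-share form. Verifying that the uniform redistribution $\delta W_t/2$ is split evenly over both actions, and that the self-transition coefficient collapses as $1-\tfrac{\delta}{2}=(1-\delta)+\tfrac{\delta}{2}$, is then just two-action algebra; the conceptual care lies entirely in this interleaving.
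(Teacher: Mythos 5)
Your proposal is correct and follows essentially the same route as the paper's proof: induction on $t$ with base case $\alpha_{i,T_1}=\alpha_{i,T_1}'=1/2$, marginalizing the future coordinates of the compound prior (whose transition factors $\tfrac{\delta}{2}+(1-\delta)\mathbb{I}_{\{i_{t+1}=i_t\}}$ sum to one), peeling off the last transition via the recursive definition of $\alpha_{T_1}'$, identifying the inner sum as $v_{j,t}$ through the induction hypothesis, and closing with the two-action algebra that reproduces~(\ref{equation:weights2}). Your explicit justification of the future-coordinate marginalization and of the loss-then-share ordering makes precise two steps the paper performs silently, but the argument is the same.
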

\begin{proof}
We proceed by induction on $t$. For $t=T_1$, $\alpha_{i,T_1}=\alpha_{i,T_1}'=1/2$ for all $i$. For the induction step, assume that $\alpha_{i,s}=\alpha_{i,s}'$ for all $i$ and $s<t$. We have
\[
\begin{split}
\alpha_{i,t}'&=\sum_{i_1,\ldots,i_t,i_{t+2},\ldots,i_n}\alpha_t'(i_{T_1+1},\ldots,i_t,i,i_{t+2},\ldots,i_{T_1+T_2})\\
&=\sum_{i_{T_1+1},\ldots,i_t}\exp\left(-\eta\sum_{s=1}^t\ell(i_s,y_s)\right)\alpha_{T_1}'(i_{T_1+1},\ldots,i_t,i)\\
&=\sum_{i_{T_1+1},\ldots,i_t}\exp\left(-\eta\sum_{s=1}^t\ell(i_s,y_s)\right)\alpha_{T_1}'(i_{T_1+1},\ldots,i_t)\frac{\alpha_{T_1}'(i_{T_1+1},\ldots,i_t,i)}{\alpha_{T_1}'(i_{T_1+1},\ldots,i_t)}\\
&=\sum_{i_{T_1+1},\ldots,i_t}\exp\left(-\eta\sum_{s=1}^t\ell(i_s,y_s)\right)\alpha_{T_1}'(i_{T_1+1},\ldots,i_t)\left(\frac{\delta}{2}+(1-\delta)\mathbb{I}_{\{i_t=i\}}\right)\\
&(\text{using the recursive definition of} \alpha_{T_1}')\\
&=\sum_{i_t}e^{-\eta\ell(i_t,y_t)}\alpha_{i_t,t-1}'\left(\frac{\delta}{2}+(1-\delta)\mathbb{I}_{\{i_t=i\}}\right)\\
&=\sum_{i_t}e^{-\eta\ell(i_t,y_t)}\alpha_{i_t,t-1}\left(\frac{\delta}{2}+(1-\delta)\mathbb{I}_{\{i_t=i\}}\right)\\
&(\text{by the induction hypothesis})\\
&=\sum_{i_t}v_{i_t,t}\left(\frac{\delta}{2}+(1-\delta)\mathbb{I}_{\{i_t=i\}}\right)\\
&(\text{using (9).1 from ``Dynamic Selection"})\\
&=\alpha_{i,t}\quad(\text{using (9).2 from ``Dynamic Selection"})
\end{split}
\]
\end{proof}

Then we have a general result for FESL-s.
\begin{thm}
\label{theorem:5.2}
For all $n\geq T_1+1$, the goal of the FESL-s g satisfies
\[
g=\sum_{t=T_1+1}^n\bar{\ell}(\bm{p}_t,y_t)-\sum_{t=T_1+1}^n\ell(i_t,y_t)\leq\frac{2}{\eta}\ln 2
+\frac{1}{\eta}\ln\frac{1}{(\delta/2)(1-\delta)^{n-2}}+\frac{\eta}{8}n
\]
for all action sequences $i_{T_1+1},\ldots,i_{T_1+T_2}$.
\end{thm}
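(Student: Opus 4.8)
The plan is to reduce the statement to two facts already established and thereby avoid any new conceptual work. The equivalence theorem just proved shows that $\alpha_{i,t}=\alpha_{i,t}'$ for every $i$ and $t$, so FESL-s draws each action from exactly the same conditional distribution $\bm{p}_t$ as the randomized FESL-c run over the set of compound actions with the non-uniform initial weights $\alpha_{T_1}'$. Since the two procedures induce identical $\bm{p}_t$, they incur identical expected losses $\bar{\ell}(\bm{p}_t,y_t)$, and bounding $g$ for FESL-s is the same as bounding the expected regret of randomized FESL-c against a single fixed compound action. I would stress at the outset that no Hoeffding--Azuma / martingale step is needed here (unlike in Lemma~\ref{lemma:corollary 4.2}), because $g$ is written purely in terms of the expected loss $\bar{\ell}(\bm{p}_t,y_t)$ and not the realized loss $\ell(I_t,y_t)$.

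Next I would apply Lemma~\ref{lemma:5.1} to the randomized FESL-c over compound actions. Its hypothesis is easy to verify: the recursive rule $\alpha_{T_1}'(\ldots,i_{t+1})=\alpha_{T_1}'(\ldots,i_t)\bigl(\tfrac{\delta}{2}+(1-\delta)\mathbb{I}_{\{i_{t+1}=i_t\}}\bigr)$ makes the initial weights a probability distribution over compound actions, since summing the two branches at each transition gives $(1-\tfrac{\delta}{2})+\tfrac{\delta}{2}=1$; hence $A_{T_1}=1$, and because the losses are nonnegative $A_{n}\le 1$ as well. Lemma~\ref{lemma:5.1} then yields $\sum_{t=T_1+1}^{n}\bar{\ell}(\bm{p}_t,y_t)\le \tfrac{1}{\eta}\ln\tfrac{1}{A_{n}}+\tfrac{\eta}{8}n$.

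I would then lower-bound the normalizer $A_{n}=\sum_{\text{compound}}\alpha_{T_1}'(\cdot)\exp\bigl(-\eta\sum_t\ell(i_t,y_t)\bigr)$ by keeping only the single target sequence $(i_{T_1+1},\ldots,i_{T_1+T_2})$ against which $g$ is measured, giving $A_{n}\ge \alpha_{T_1}'(i_{T_1+1},\ldots,i_{T_1+T_2})\exp\bigl(-\eta\sum_t\ell(i_t,y_t)\bigr)$. Substituting this into the Lemma bound and subtracting $\sum_t\ell(i_t,y_t)$ causes the loss of the target sequence to cancel, collapsing everything to $g\le \tfrac{1}{\eta}\ln\tfrac{1}{\alpha_{T_1}'(i_{T_1+1},\ldots,i_{T_1+T_2})}+\tfrac{\eta}{8}n$. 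At this point the entire dependence on the target is through its initial weight alone.

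The remaining and genuinely delicate step is evaluating that initial weight from the recursive rule: a compound action that switches $k$ times carries weight $\tfrac12(\tfrac{\delta}{2})^{k}(1-\tfrac{\delta}{2})^{(\text{\#transitions})-k}$. Specializing to the one-switch sequences relevant to the goal (a single transition from the first model to the second), so that exactly one factor $\tfrac{\delta}{2}$ and all remaining $(1-\tfrac{\delta}{2})$ factors appear, and then loosening $1-\tfrac{\delta}{2}\ge 1-\delta$, produces the factor $(\delta/2)(1-\delta)^{n-2}$, while the leading $\tfrac12$ together with the slack absorbed in this bound accounts for the additive $\tfrac{2}{\eta}\ln 2$. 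I expect this last piece to be the main obstacle: the chaining of the equivalence theorem and Lemma~\ref{lemma:5.1} is routine, but getting the exponent $n-2$ and the constant in front exactly right requires carefully keeping the index bookkeeping among $n$, $T_2$, and the number of transitions consistent, and one must check that the loosened constant is no smaller than the true initial weight so that the inequality only goes the safe way.
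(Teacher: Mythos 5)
Your proposal follows essentially the same route as the paper's proof: apply Lemma~\ref{lemma:5.1} to the randomized FESL-c run over compound actions, lower-bound the normalizer $A_{T_1+T_2}'$ by the weight of the single target compound action (whose final weight is its initial weight times $e^{-\eta\sum_t\ell(i_t,y_t)}$, so the target's loss cancels), and then lower-bound that initial weight via the recursive definition---one factor $\delta/2$ for the single switch and the remaining factors $1-\delta/2\geq 1-\delta$. Your additional observations---explicitly verifying that the compound-action initial weights sum to one so Lemma~\ref{lemma:5.1} applies, invoking the equivalence theorem to transfer the bound to FESL-s, and noting that no Hoeffding--Azuma step is needed since $g$ involves only expected losses---are correct and simply make explicit what the paper leaves implicit.
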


\begin{proof}
For a compound action $i_{T_1+1},\ldots,i_{T_1+T_2}$ we have
\[
\ln\alpha_{T_1+T_2}'(i_{T_1+1},\ldots,i_{T_1+T_2})
=\ln\alpha_{T_1}'(i_{T_1+1},\ldots,i_{T_1+T_2})-\eta\sum_{t=T_1+1}^{T_1+T_2}\ell(i_t,y_t).
\]
By definition of $\alpha_{T_1}'$,
\[
\alpha_{T_1}'(i_{T_1+1},\ldots,i_{T_1+T_2})=\frac{1}{N}\left(\frac{\delta}{2}\right)\left(\frac{\delta}{2}+(1-\delta)\right)^{T_1+T_2-2}
\geq\frac{1}{2}\left(\frac{\delta}{2}\right)(1-\delta)^{T_1+T_2-2}.
\]
Therefore, using this in the bound of Lemma~\ref{lemma:5.1} we get, for any sequence $(i_{T_1+1},\ldots,i_{T_1+T_2})$,
\[
\begin{split}
\sum_{t=1}^n\bar{\ell}(\bm{p}_t,y_t)&\leq\frac{1}{\eta}\ln\frac{1}{A_{T_1+T_2}'}+\frac{\eta}{8}T_2\\
&\leq\frac{1}{\eta}\ln\frac{1}{\alpha_{T_1+T_2}'(i_{T_1+1},\ldots,i_{T_1+T_2})}+\frac{\eta}{8}T_2\\
&\leq\sum_{t=1}^n\ell(i_t,y_t)+\frac{1}{\eta}\ln 2+\frac{1}{\eta}\ln\frac{2}{\delta}-\frac{T_2-2}{\eta}\ln(1-\delta)+\frac{\eta}{8}T_2,
\end{split}
\]
which concludes the proof.
\end{proof}
With Lemma~\ref{lemma:5.1} and Theorem~\ref{theorem:5.2}, we give the proof of Theorem~\ref{theorem:FESL-s} as follows.
\begin{proof}[Proof of Theorem~\ref{theorem:FESL-s}]
First, note that for $\delta=1/(T_2-1)$
\[
\ln\frac{1}{\delta(1-\delta)^{T_2-2}}=-\ln\frac{1}{T_2-1}-(T_2-2)\ln\frac{T_2-2}{T_2-1}=(T_2-1)H(\frac{1}{T_2-1}).
\]
Using 
\[
\eta = \sqrt{\frac{8}{T_2}\left(2\ln 2+(T_2-1)H(\frac{1}{T_2-1})\right)}
\] 
in the bound of Theorem~\ref{theorem:5.2} we obtain that
\[\sum_{t=T_1+1}^{T_1+T_2}\bar{\ell}(\bm{p}_t,y_t)-\sum_{t=T_1+1}^{T_1+T_2}\ell(i_t,y_t)\leq\sqrt{\frac{T_2}{2}\left(2\ln 2+(T_2-1)H(\frac{1}{T_2-1})\right)}
\]
for all action sequences $i_{T_1+1},\ldots,i_{T_1+T_2}$, namely,
\[
\begin{split}
&L^{S_{12}}\leq\min_{T_1+1\leq s\leq T_1+T_2}L^s+\sqrt{\frac{T_2}{2}\left(2\ln 2+\frac{H(\delta)}{\delta}\right)}
\end{split}
\]
\end{proof}

\section*{Appendix B: Additional Experiments}

\begin{figure}[!t]
\centering
\small
\begin{minipage}{0.3\linewidth}\centering
	\vspace{-1cm}
    \includegraphics[width=1\textwidth]{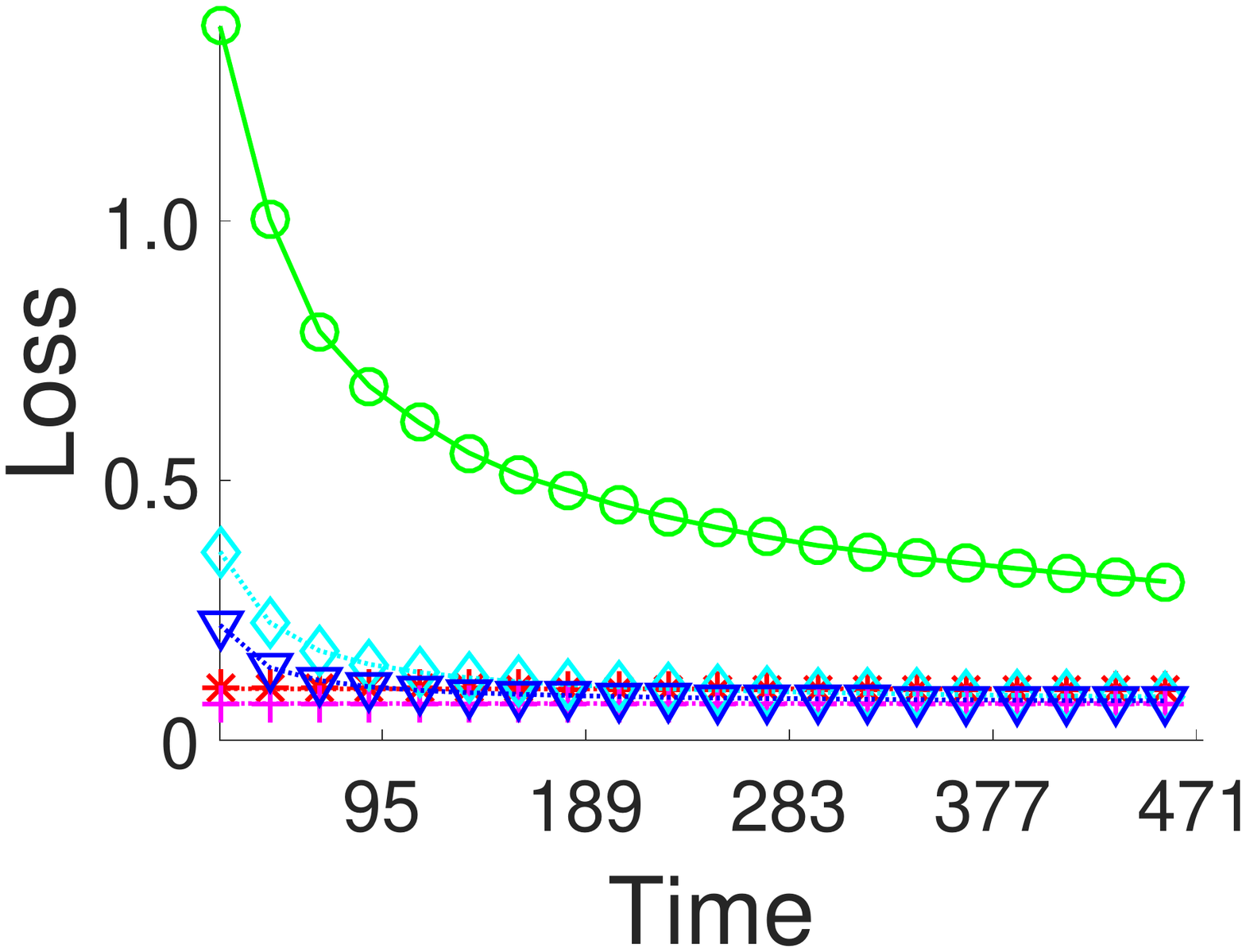}\\
   	\vspace{-1cm}
    \mbox{\small\quad\quad(a) \emph{dna}}
\end{minipage}
\begin{minipage}{0.3\linewidth}\centering
	\vspace{-1cm}
    \includegraphics[width=1\textwidth]{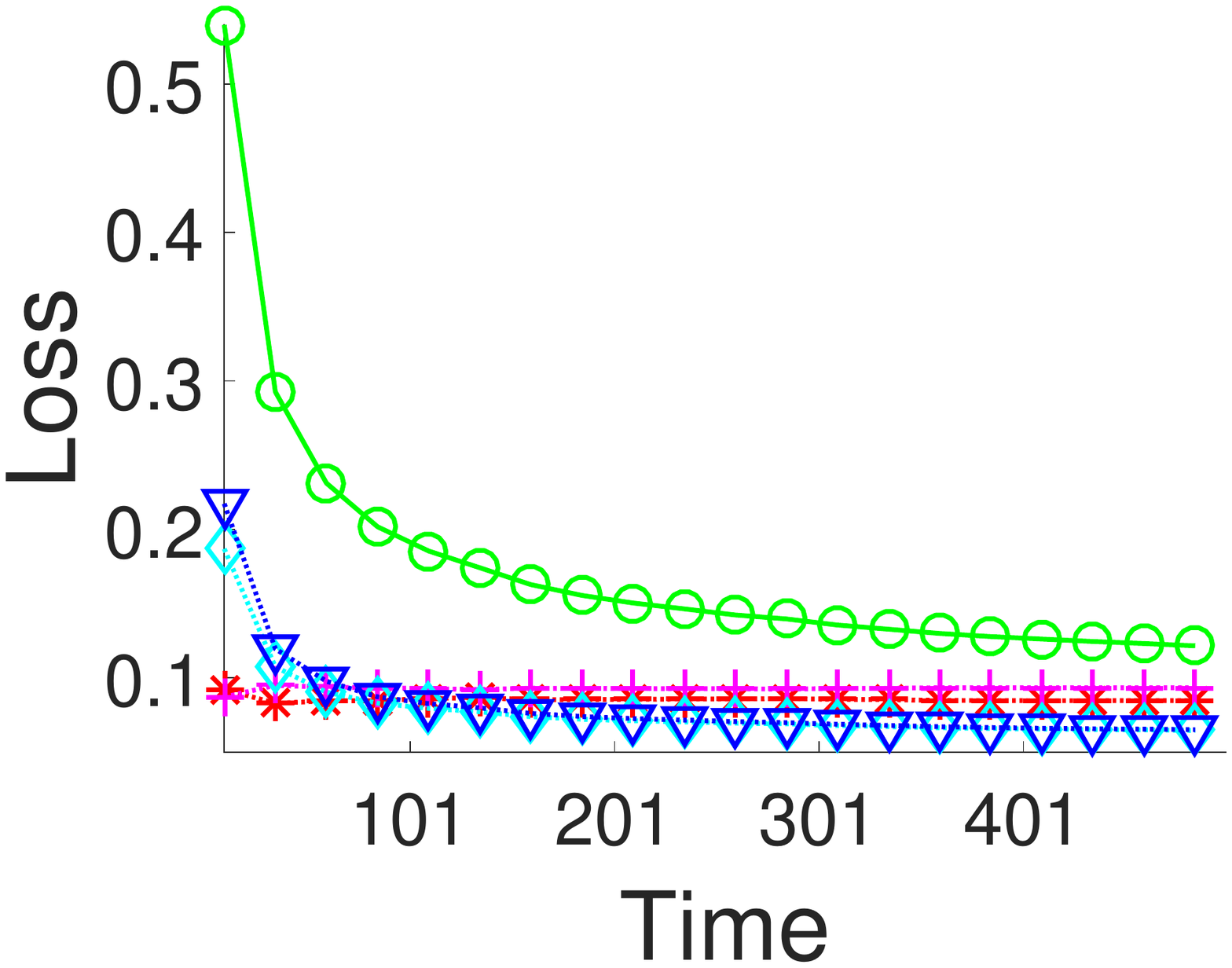}\\
    \vspace{-1cm}
    \mbox{\small\quad\quad(b) \emph{german}}
\end{minipage}
\begin{minipage}{0.3\linewidth}\centering
	\vspace{-1cm}
    \includegraphics[width=1\textwidth]{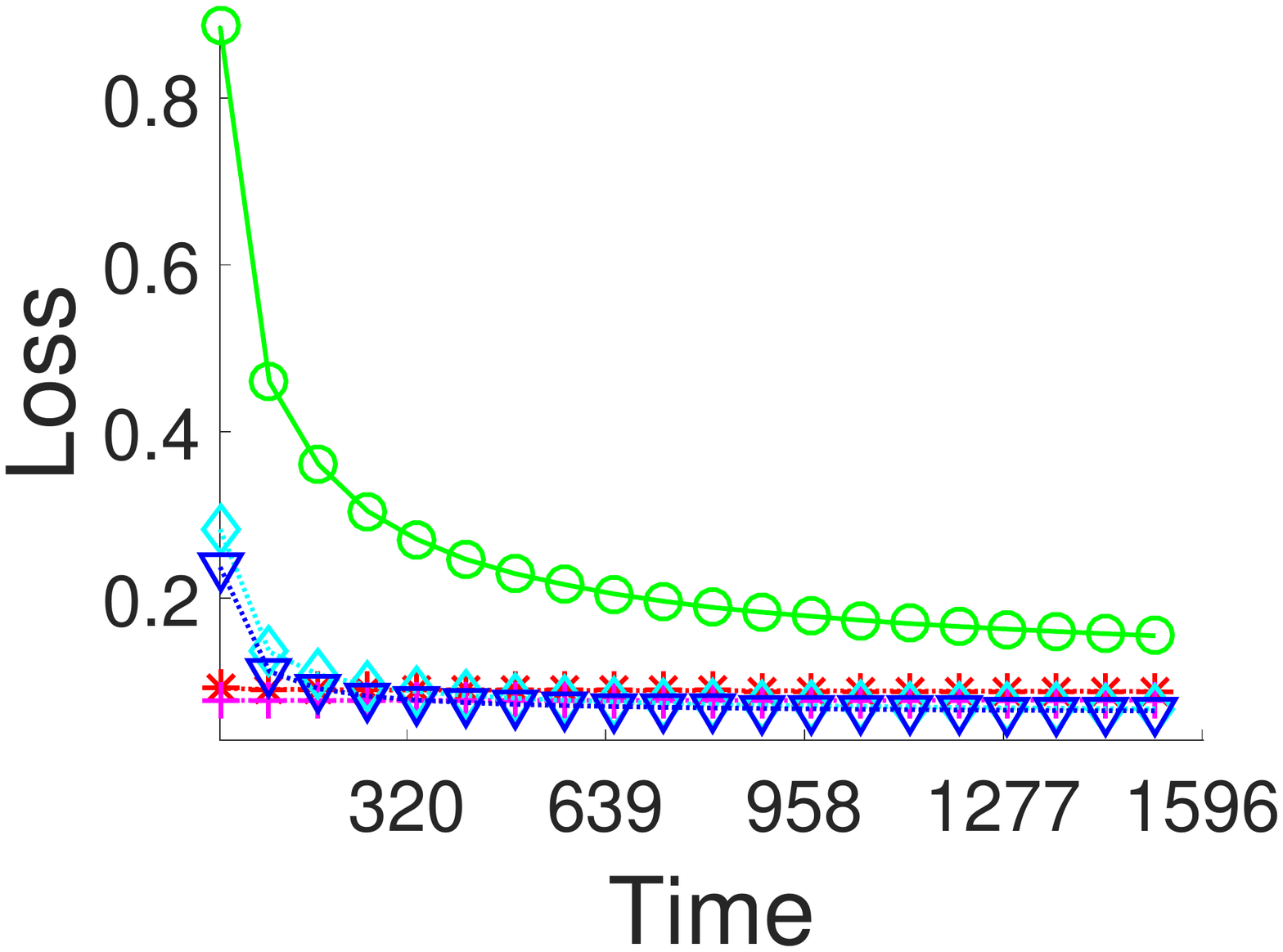}\\
    \vspace{-1cm}
    \mbox{\small\quad\quad(c) \emph{kr-vs-kp}}
\end{minipage}
\vspace*{5pt}

\begin{minipage}{0.3\linewidth}\centering
 	\vspace{-1cm}
    \includegraphics[width=1\textwidth]{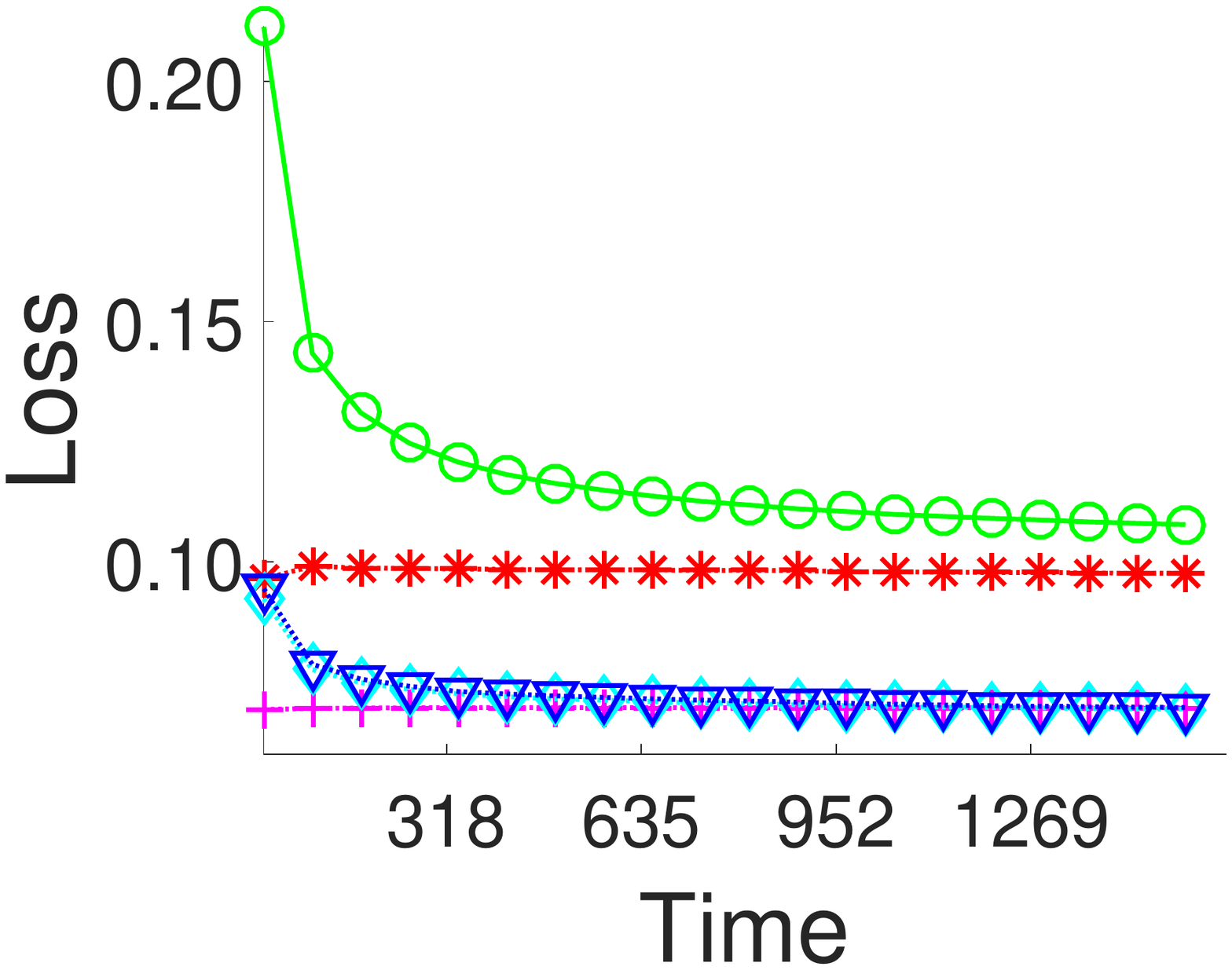}\\
    \vspace{-1cm}
    \mbox{\small\quad\quad(d) \emph{splice}}
\end{minipage}
\begin{minipage}{0.3\linewidth}\centering
	\vspace{-1cm}
    \includegraphics[width=1\textwidth]{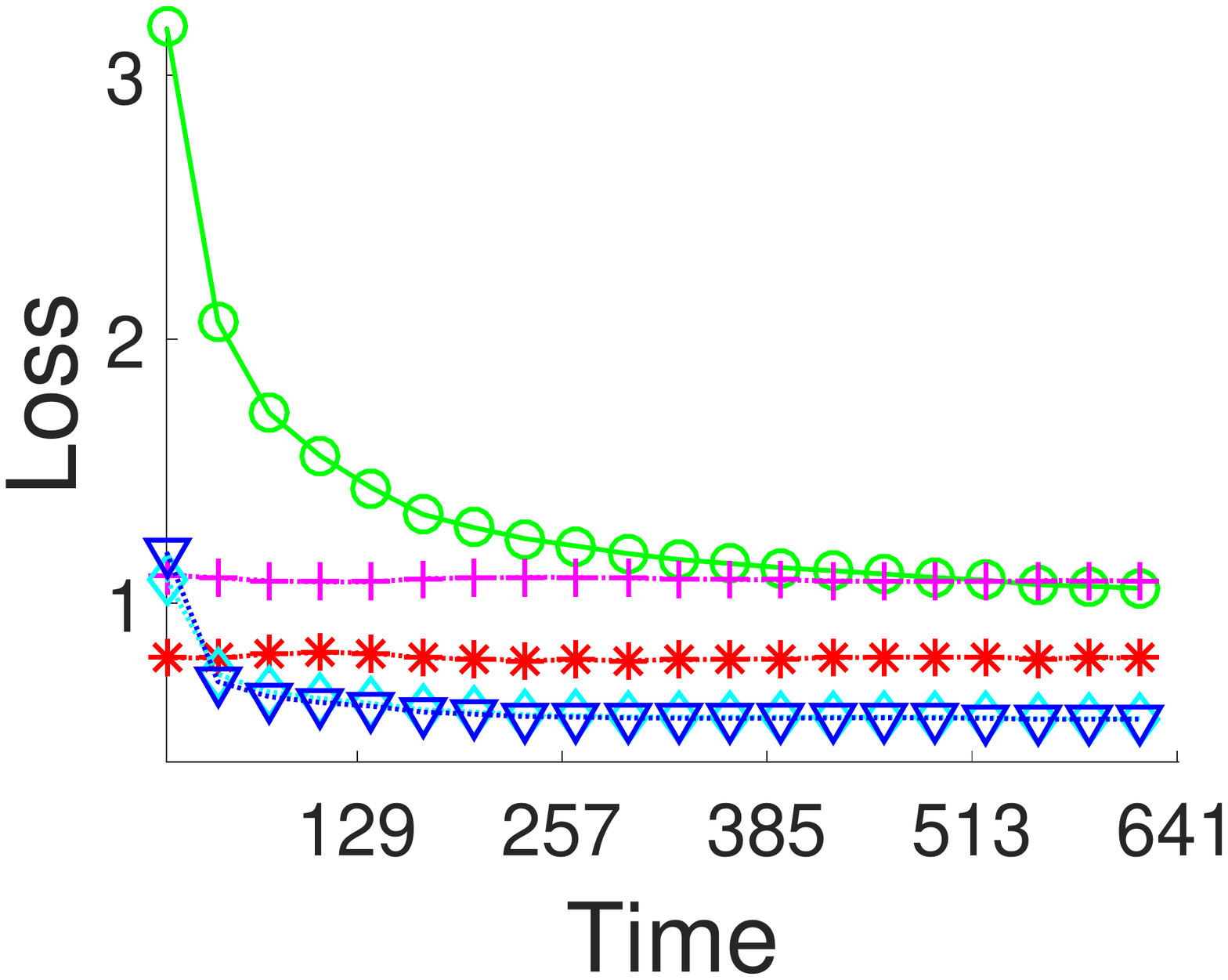}\\
    \vspace{-1cm}
    \mbox{\small\quad\quad(e) \emph{svmguide3}}
\end{minipage}
\begin{minipage}{0.3\linewidth}\centering
	\vspace{0.25cm}
    \includegraphics[width=1\textwidth]{legend}\\
    \vspace{-0.1cm}
    \mbox{\small\quad\quad legend}
\end{minipage}
\caption{The trend of loss with three baseline methods and the proposed methods on synthetic data. The smaller the cumulative loss, the better.}
\label{fig:synthetic_a}
\end{figure}

\begin{figure}[!t]
\label{fig:reuter}
\centering
\small
\begin{minipage}{0.24\linewidth}\centering
	\vspace{-1cm}
    \includegraphics[width=1\textwidth]{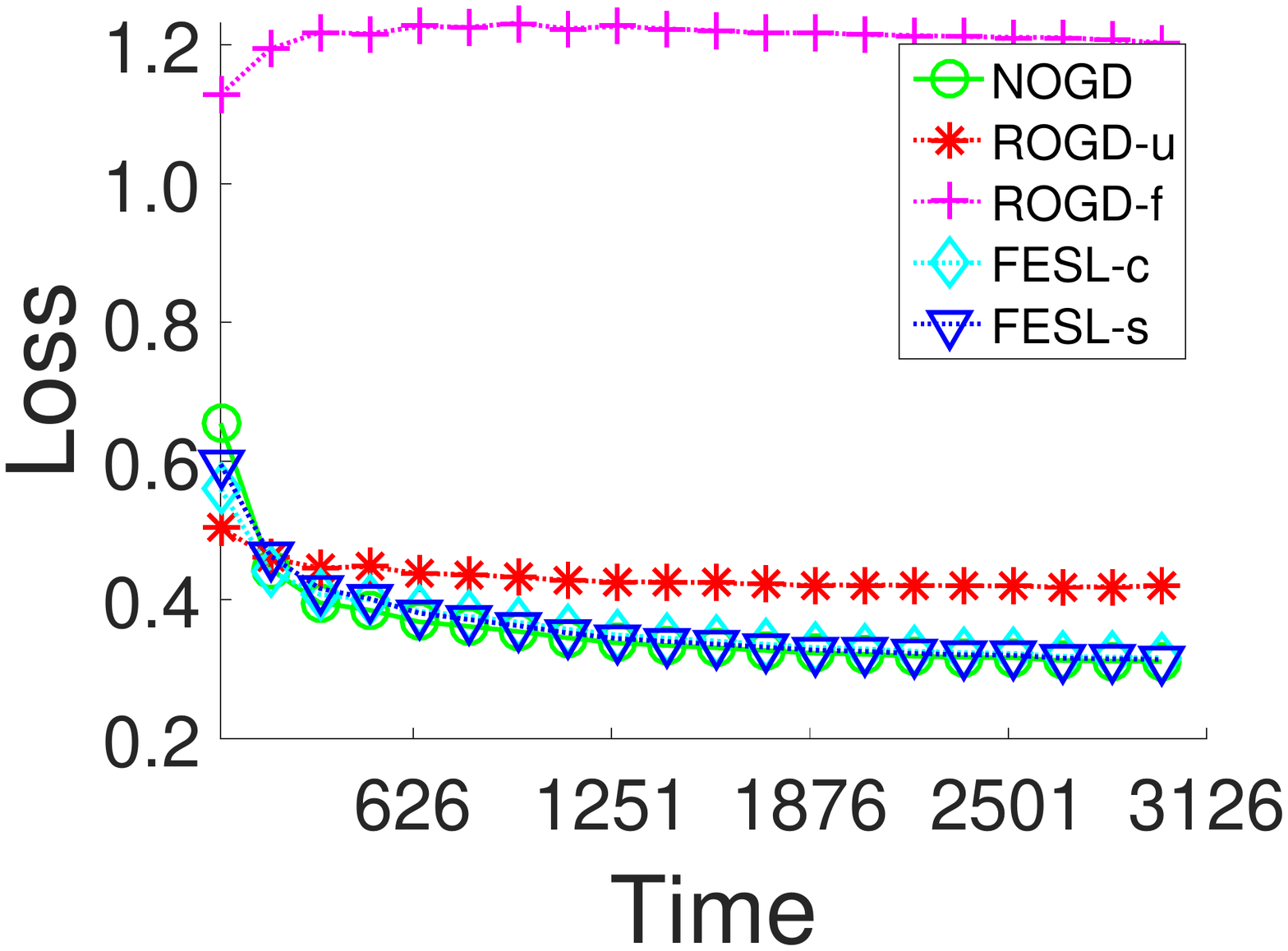}\\
    \vspace{-1cm}
    \mbox{\scriptsize\quad\quad(a)  \emph{r.EN-FR}}
\end{minipage}
\begin{minipage}{0.24\linewidth}\centering
	\vspace{-1cm}
    \includegraphics[width=1\textwidth]{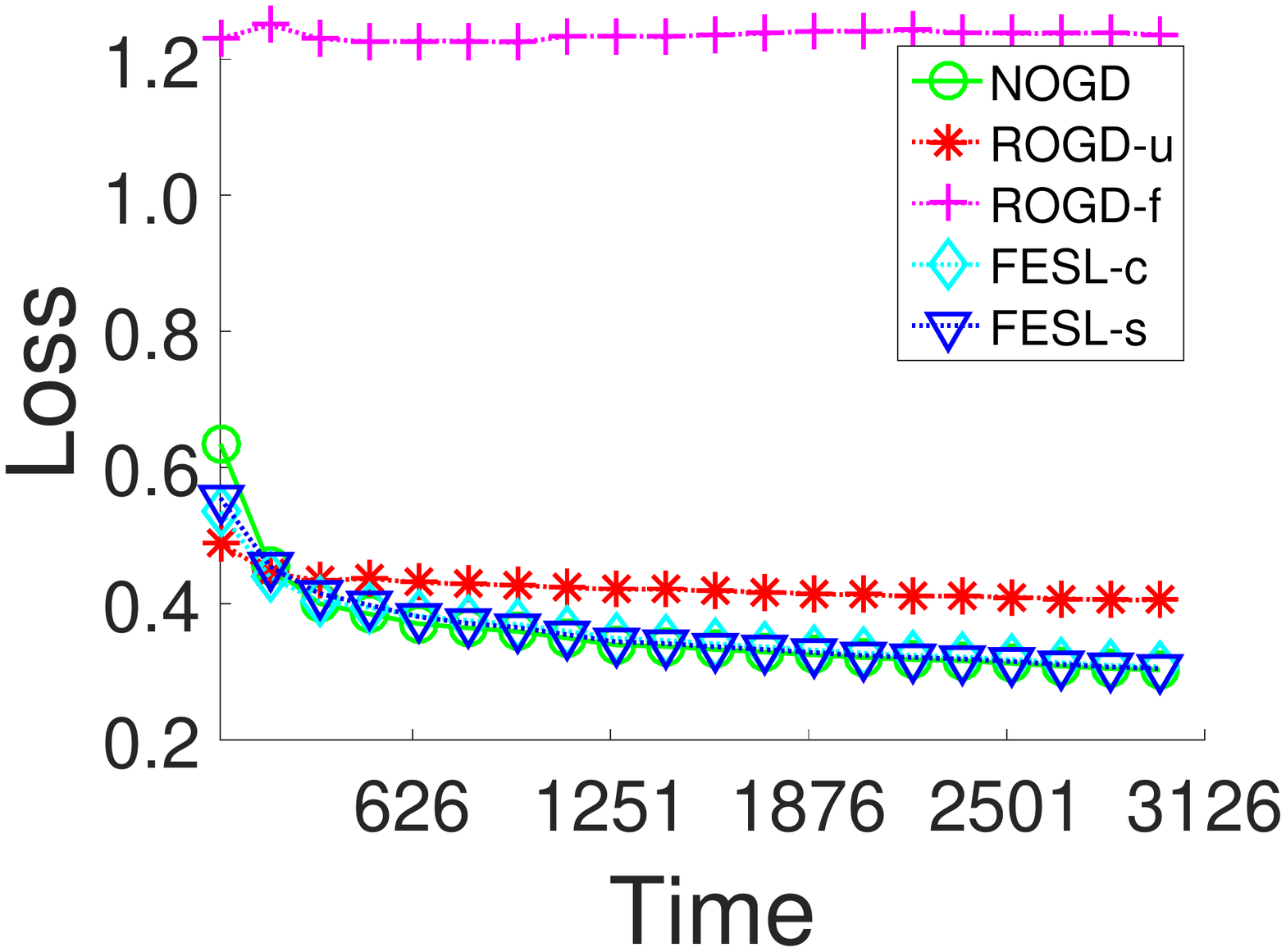}\\
    \vspace{-1cm}
    \mbox{\scriptsize\quad\quad(b)  \emph{r.EN-GR}}
\end{minipage}
\begin{minipage}{0.24\linewidth}\centering
	\vspace{-1cm}
    \includegraphics[width=1\textwidth]{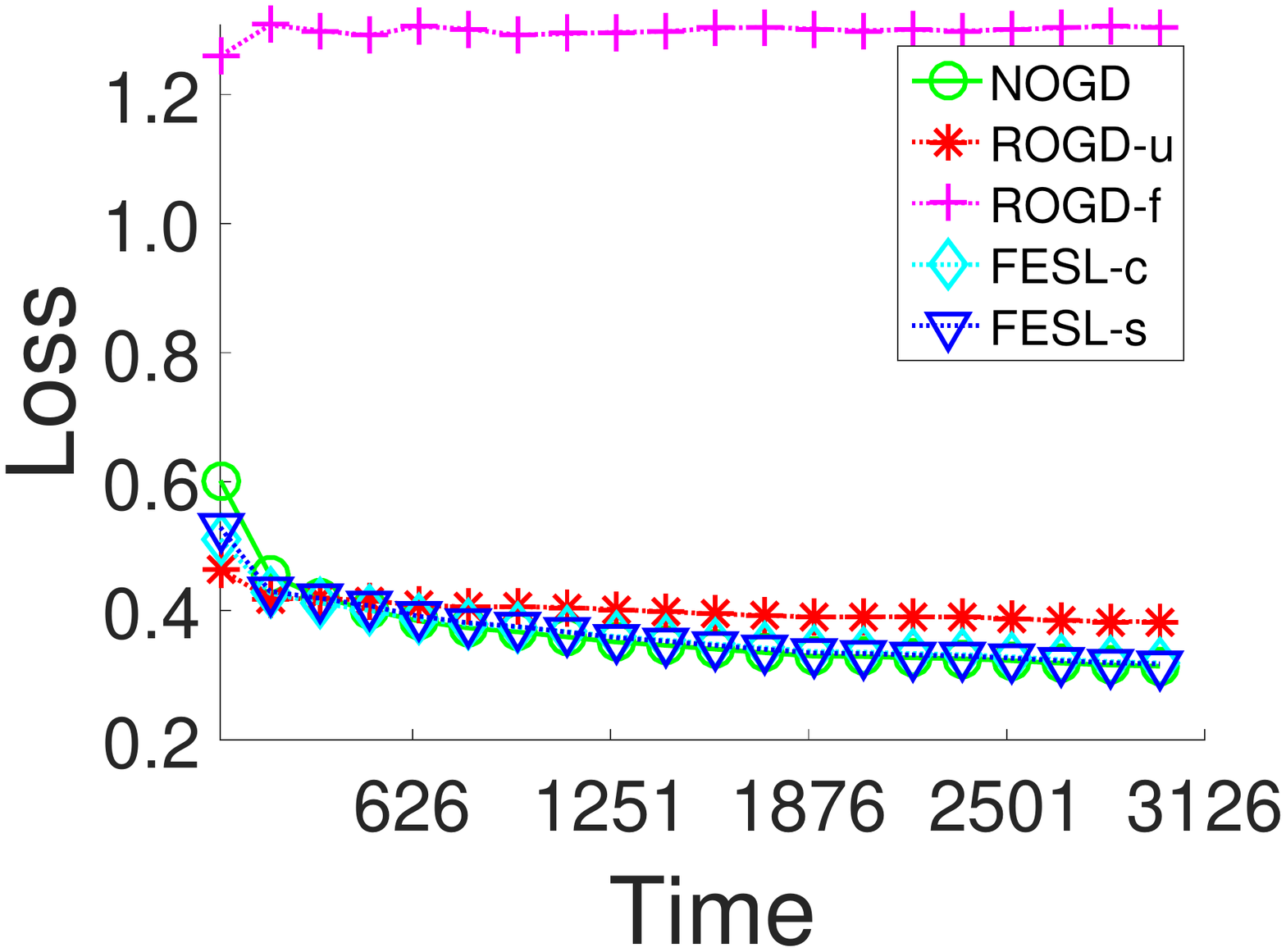}\\
    \vspace{-1cm}
    \mbox{\scriptsize\quad\quad(c)  \emph{r.EN-IT}}
\end{minipage}
\begin{minipage}{0.24\linewidth}\centering
	\vspace{-1cm}
    \includegraphics[width=1\textwidth]{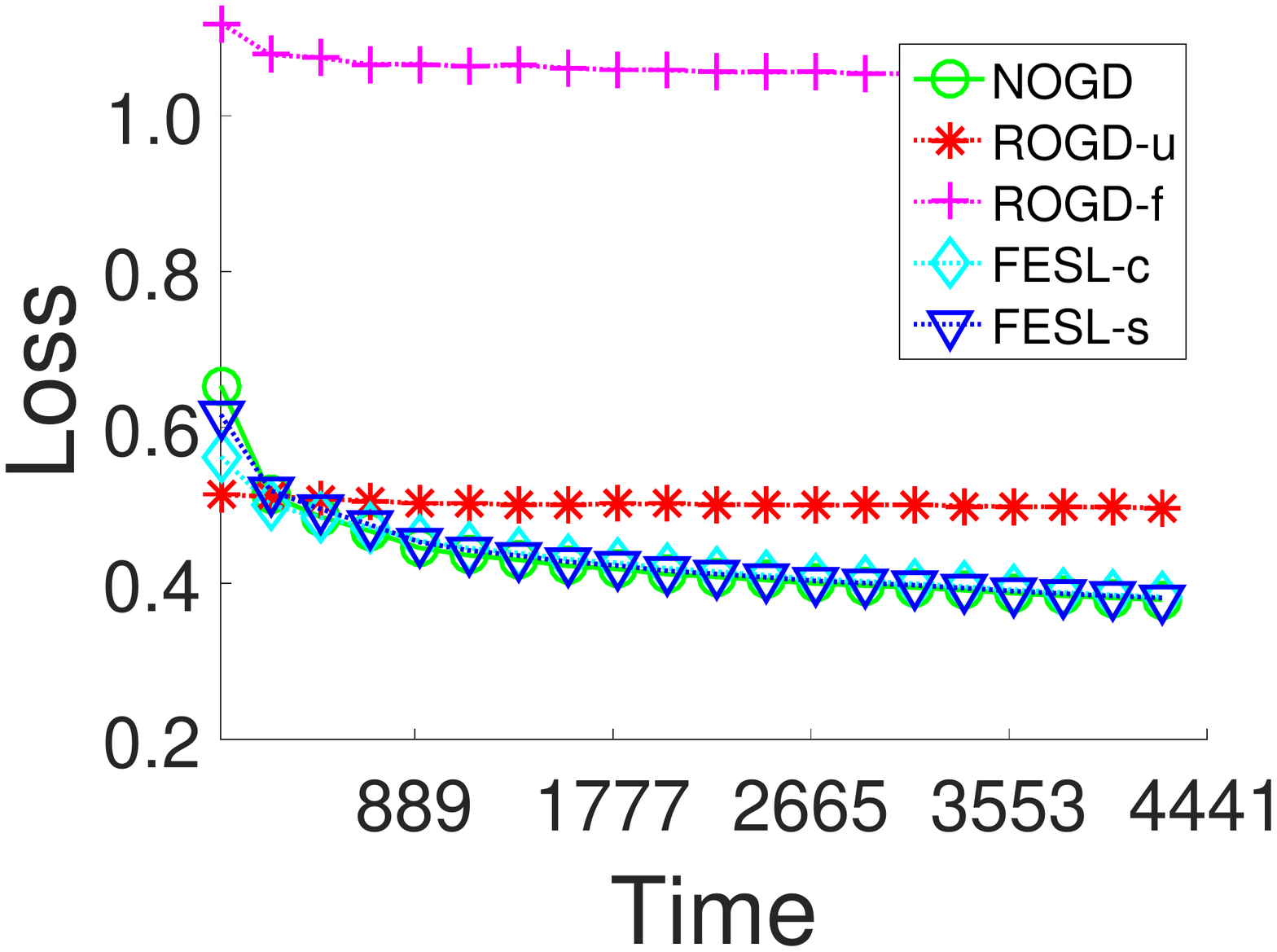}\\
    \vspace{-1cm}
    \mbox{\scriptsize\quad\quad(d)  \emph{r.FR-EN}}
\end{minipage}
\vspace*{3pt}

\begin{minipage}{0.24\linewidth}\centering
	\vspace{-1cm}
    \includegraphics[width=1\textwidth]{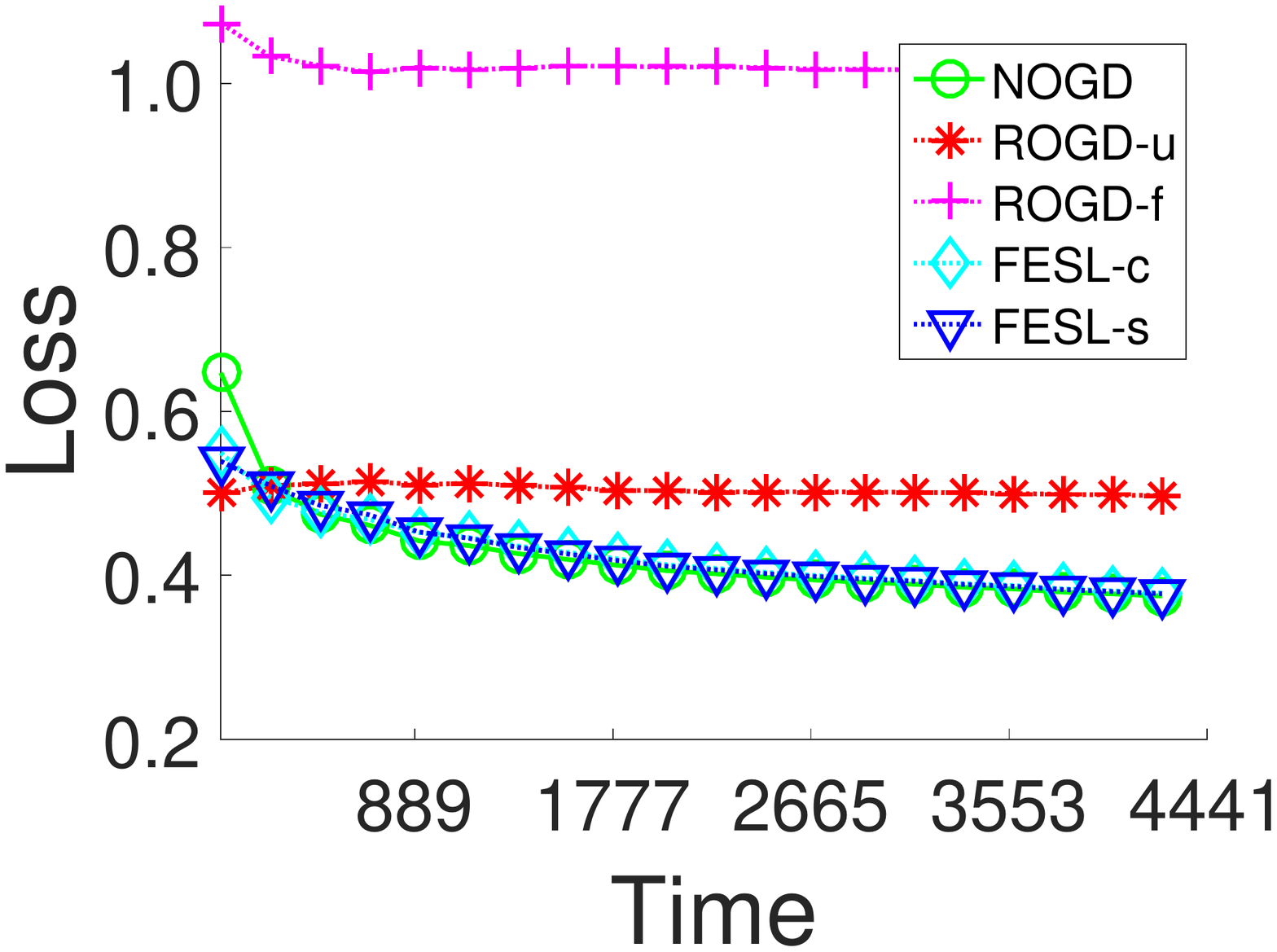}\\
    \vspace{-1cm}
    \mbox{\scriptsize\quad\quad(e)  \emph{r.FR-GR}}
\end{minipage}
\begin{minipage}{0.24\linewidth}\centering
	\vspace{-1cm}
    \includegraphics[width=1\textwidth]{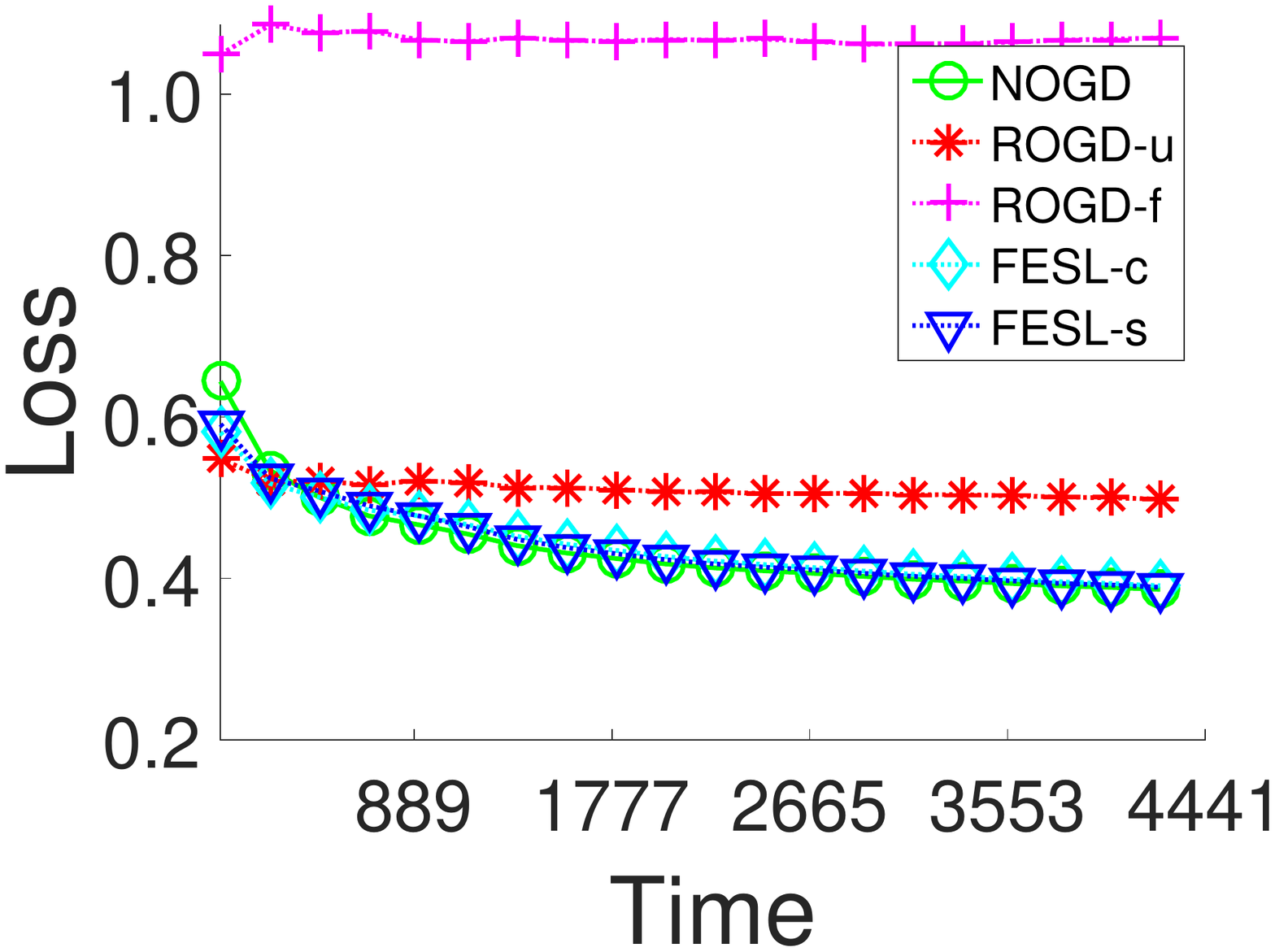}\\
    \vspace{-1cm}
    \mbox{\scriptsize\quad\quad(f)  \emph{r.FR-IT}}
\end{minipage}
\begin{minipage}{0.24\linewidth}\centering
 	\vspace{-1cm}
    \includegraphics[width=1\textwidth]{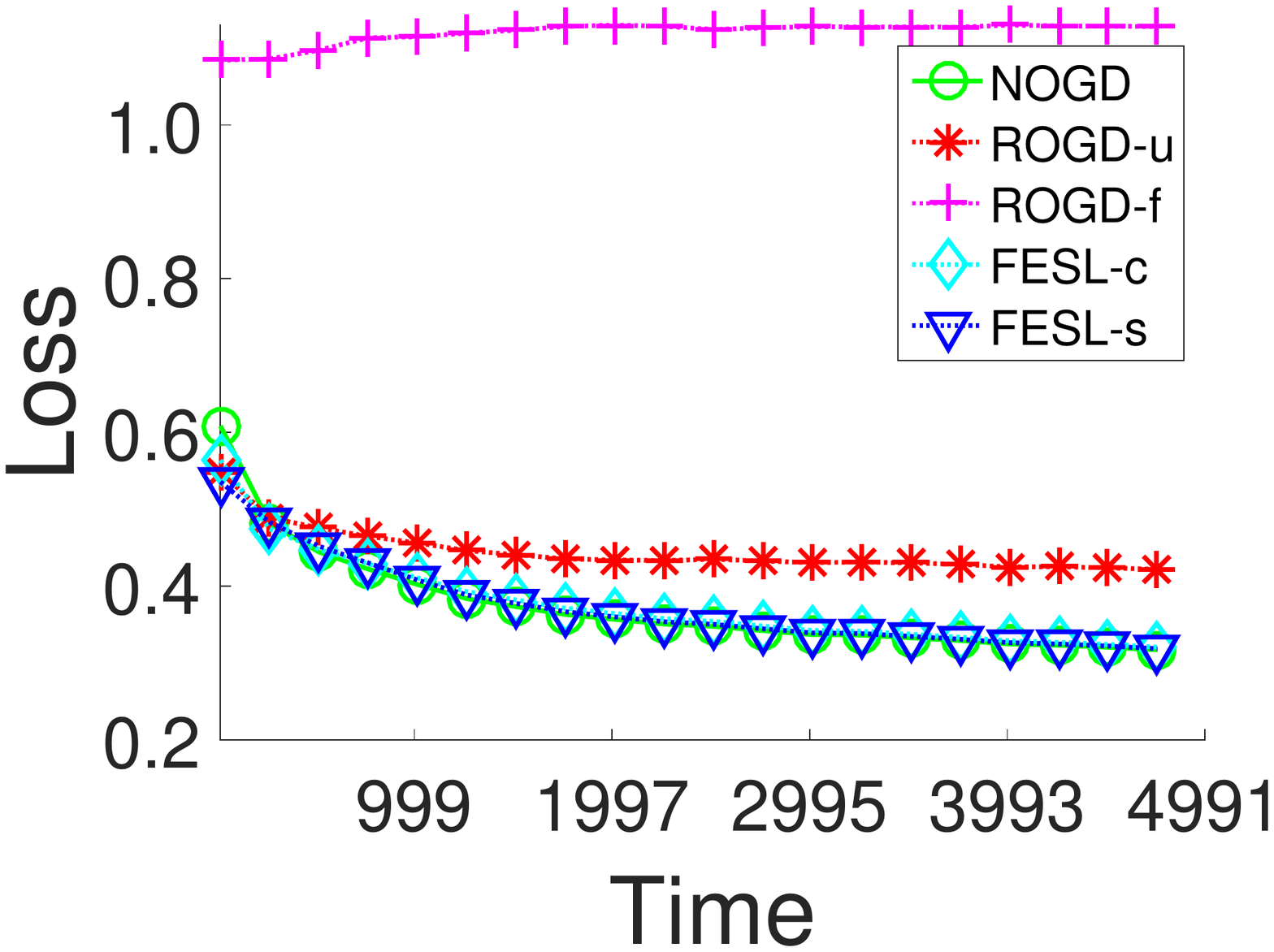}\\
    \vspace{-1cm}
    \mbox{\scriptsize\quad\quad(g) \emph{r.GR-FR}}
\end{minipage}
\begin{minipage}{0.24\linewidth}\centering
	\vspace{-1cm}
    \includegraphics[width=1\textwidth]{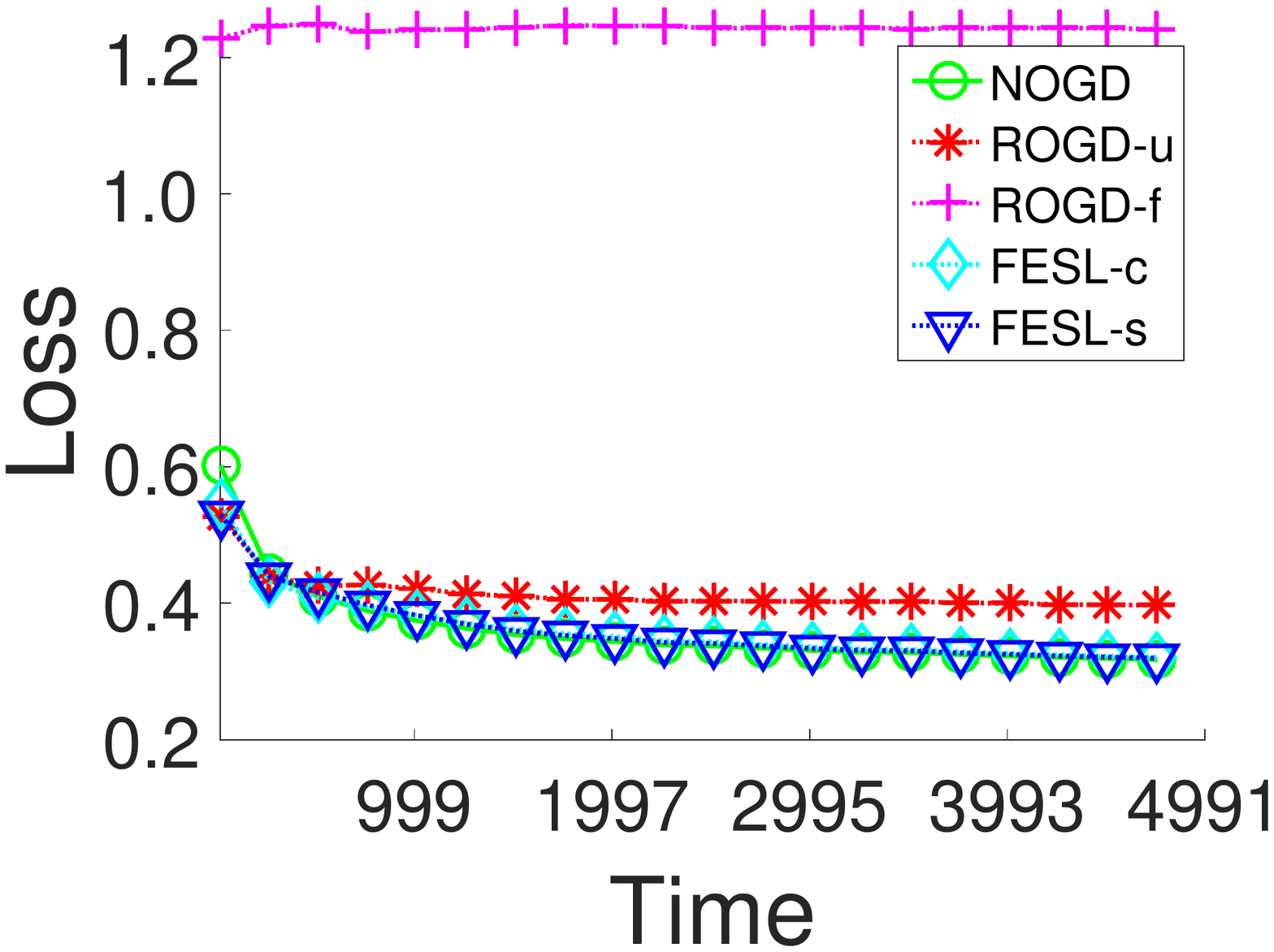}\\
    \vspace{-1cm}
    \mbox{\scriptsize\quad\quad(h)  \emph{r.GR-IT}}
\end{minipage}
\vspace*{3pt}

\begin{minipage}{0.24\linewidth}\centering
	\vspace{-1cm}
    \includegraphics[width=1\textwidth]{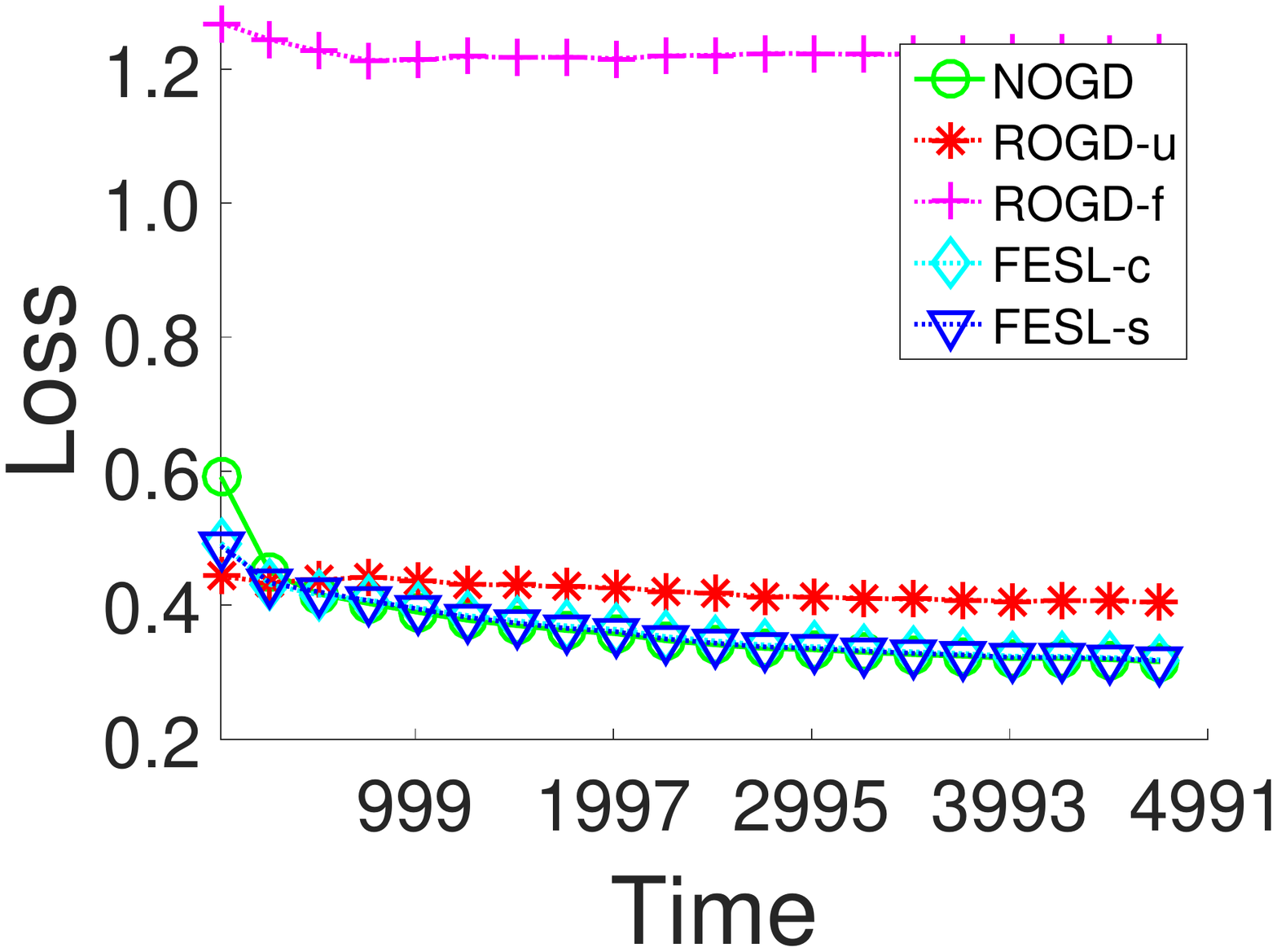}\\
    \vspace{-1cm}
    \mbox{\scriptsize\quad\quad(i)  \emph{r.GR-SP}}
\end{minipage}
\begin{minipage}{0.24\linewidth}\centering
	\vspace{-1cm}
    \includegraphics[width=1\textwidth]{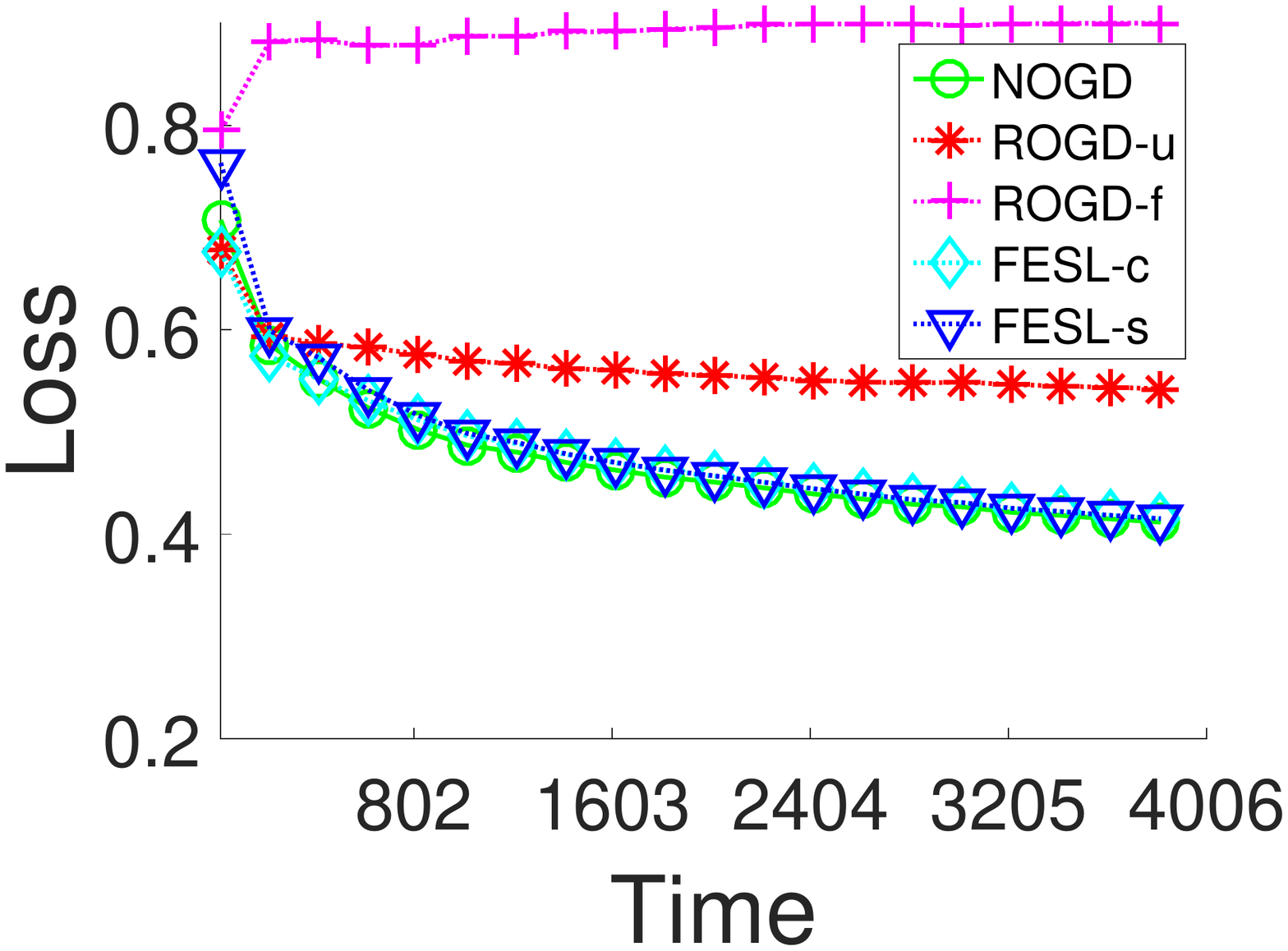}\\
    \vspace{-1cm}
    \mbox{\scriptsize\quad\quad(j)  \emph{r.IT-EN}}
\end{minipage}
\begin{minipage}{0.24\linewidth}\centering
	\vspace{-1cm}
    \includegraphics[width=1\textwidth]{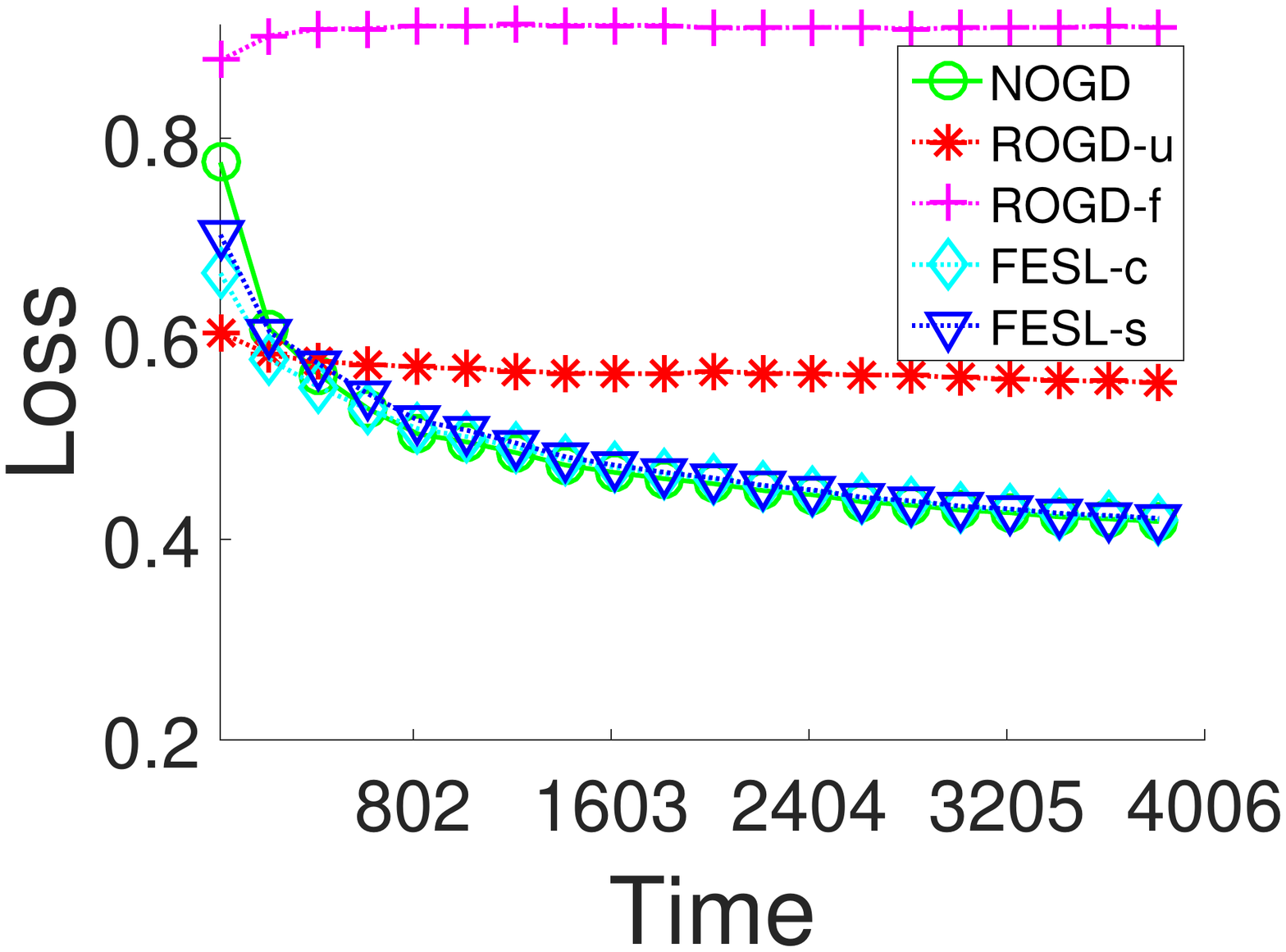}\\
    \vspace{-1cm}
    \mbox{\scriptsize\quad\quad(k)  \emph{r.IT-GR}}
\end{minipage}
\begin{minipage}{0.24\linewidth}\centering
	\vspace{-1cm}
    \includegraphics[width=1\textwidth]{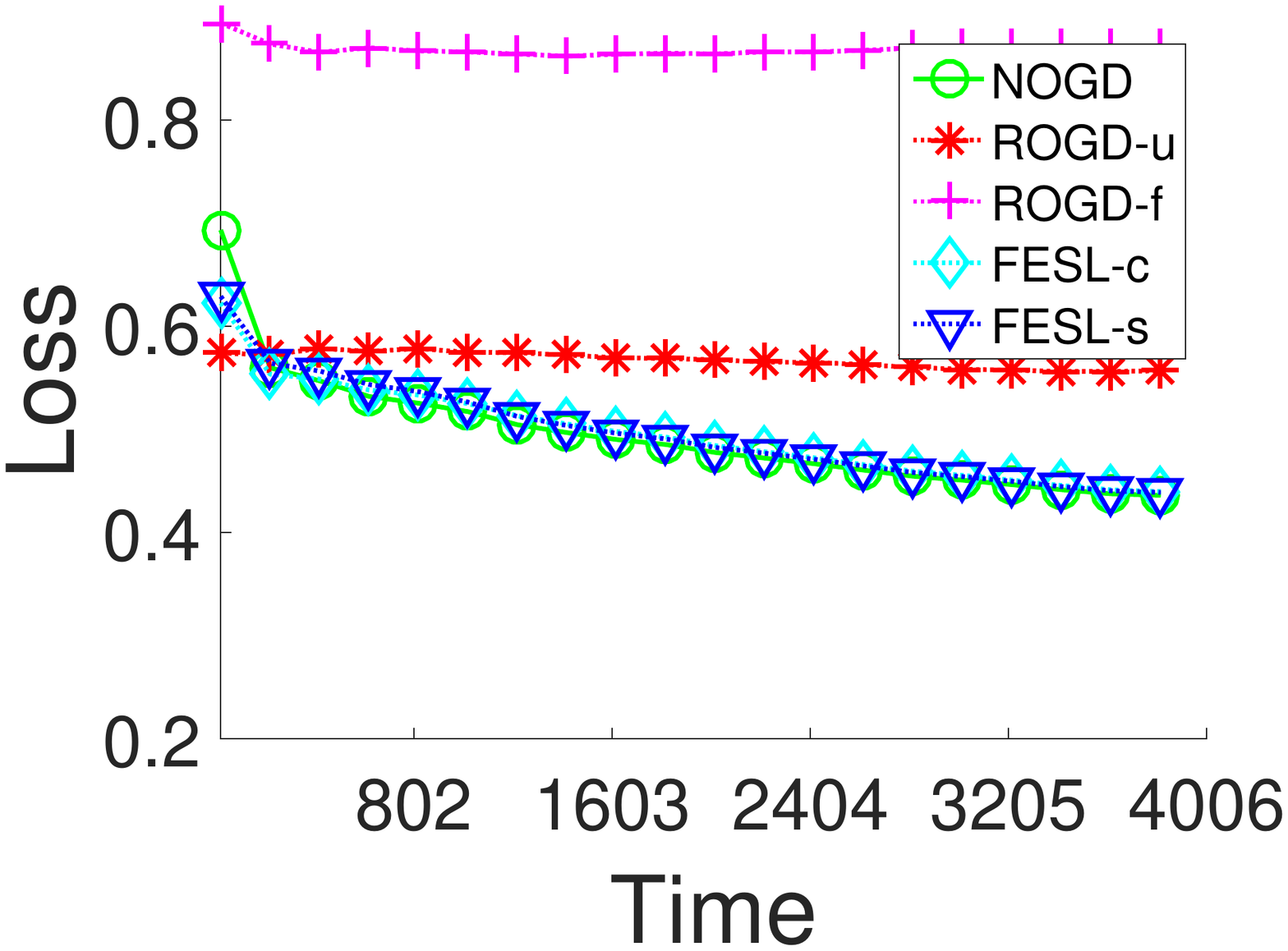}\\
    \vspace{-1cm}
    \mbox{\scriptsize\quad\quad(l)  \emph{r.IT-SP}}
\end{minipage}
\vspace*{3pt}

\begin{minipage}{0.24\linewidth}\centering
	\vspace{-1cm}
    \includegraphics[width=1\textwidth]{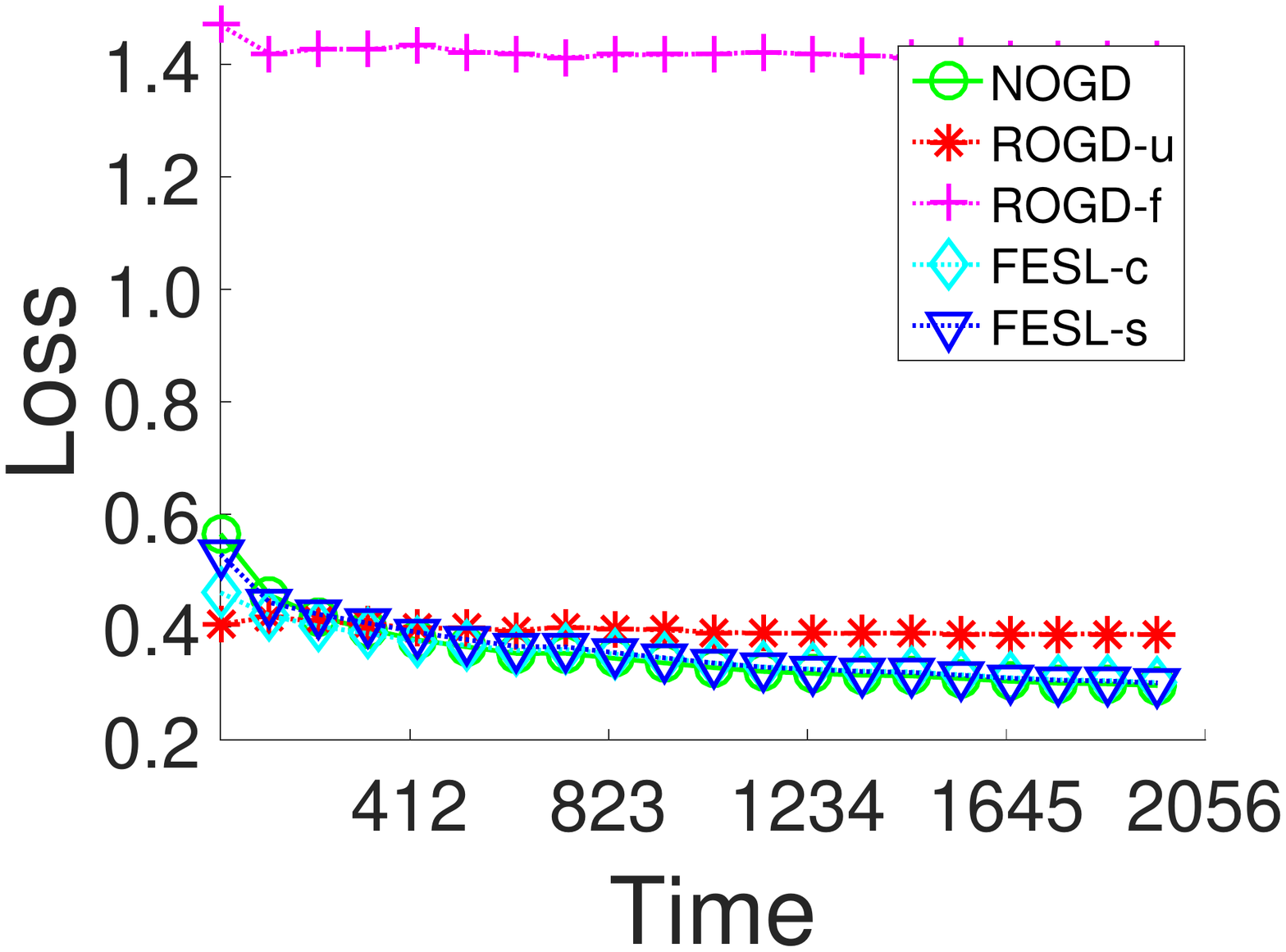}\\
    \vspace{-1cm}
    \mbox{\scriptsize\quad\quad(m)  \emph{r.SP-EN}}
\end{minipage}
\begin{minipage}{0.24\linewidth}\centering
	\vspace{-1cm}
    \includegraphics[width=1\textwidth]{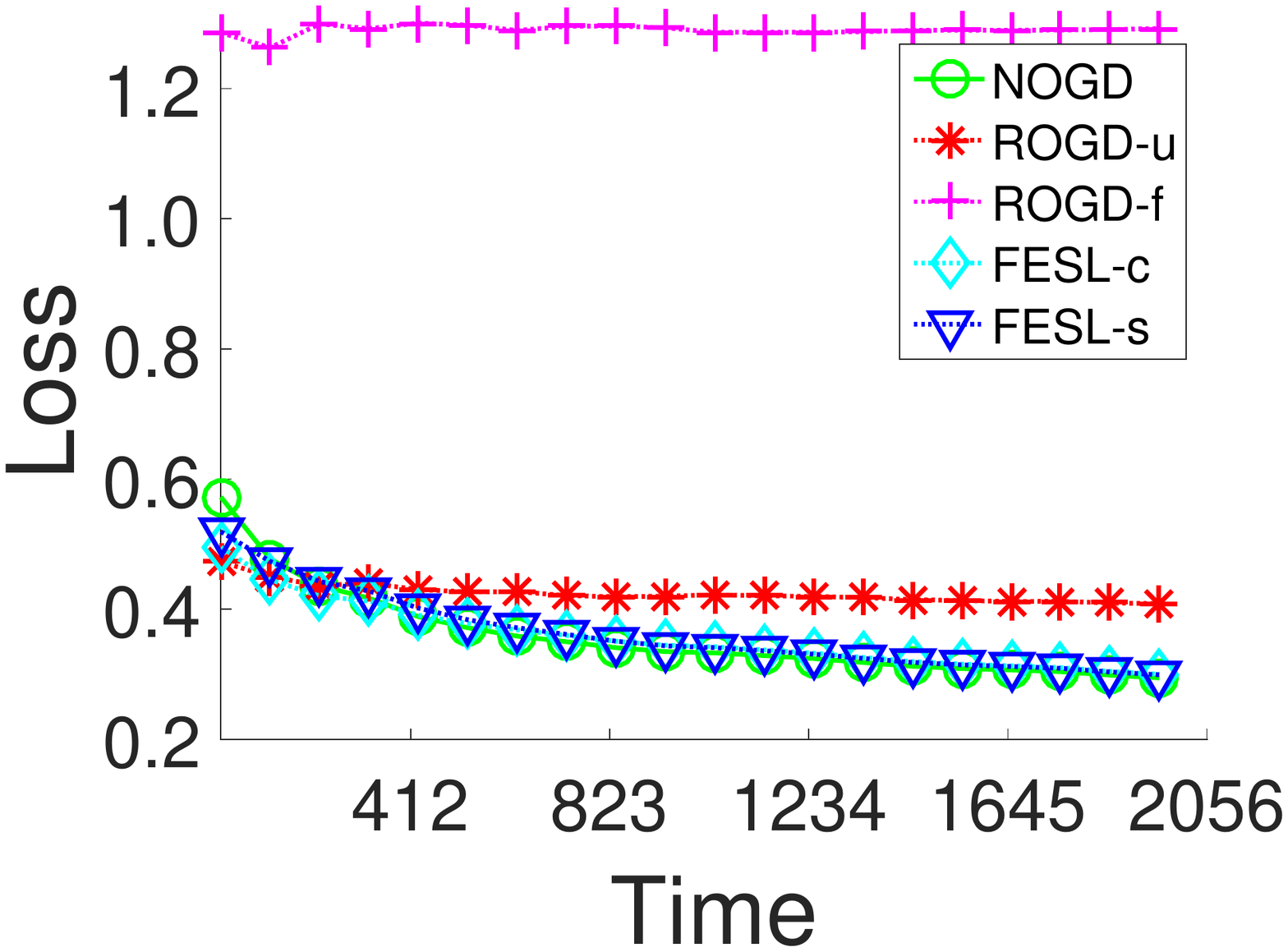}\\
    \vspace{-1cm}
    \mbox{\scriptsize\quad\quad(n)  \emph{r.SP-FR}}
\end{minipage}
\begin{minipage}{0.24\linewidth}\centering
	\vspace{-1cm}
    \includegraphics[width=1\textwidth]{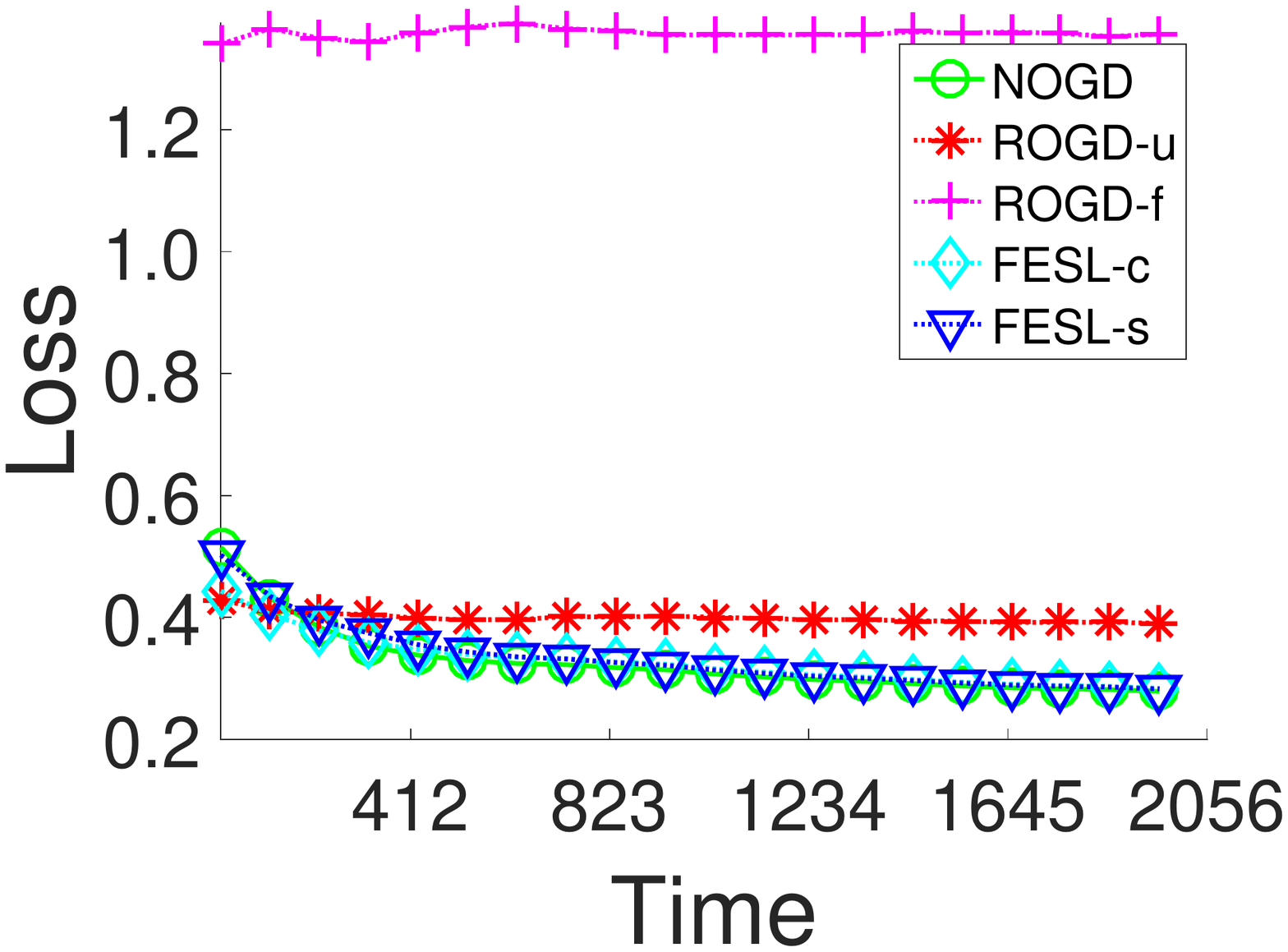}\\
    \vspace{-1cm}
    \mbox{\scriptsize\quad\quad(o)  \emph{r.SP-GR}}
\end{minipage}
\begin{minipage}{0.24\linewidth}\centering
	\vspace{-1cm}
    \includegraphics[width=1\textwidth]{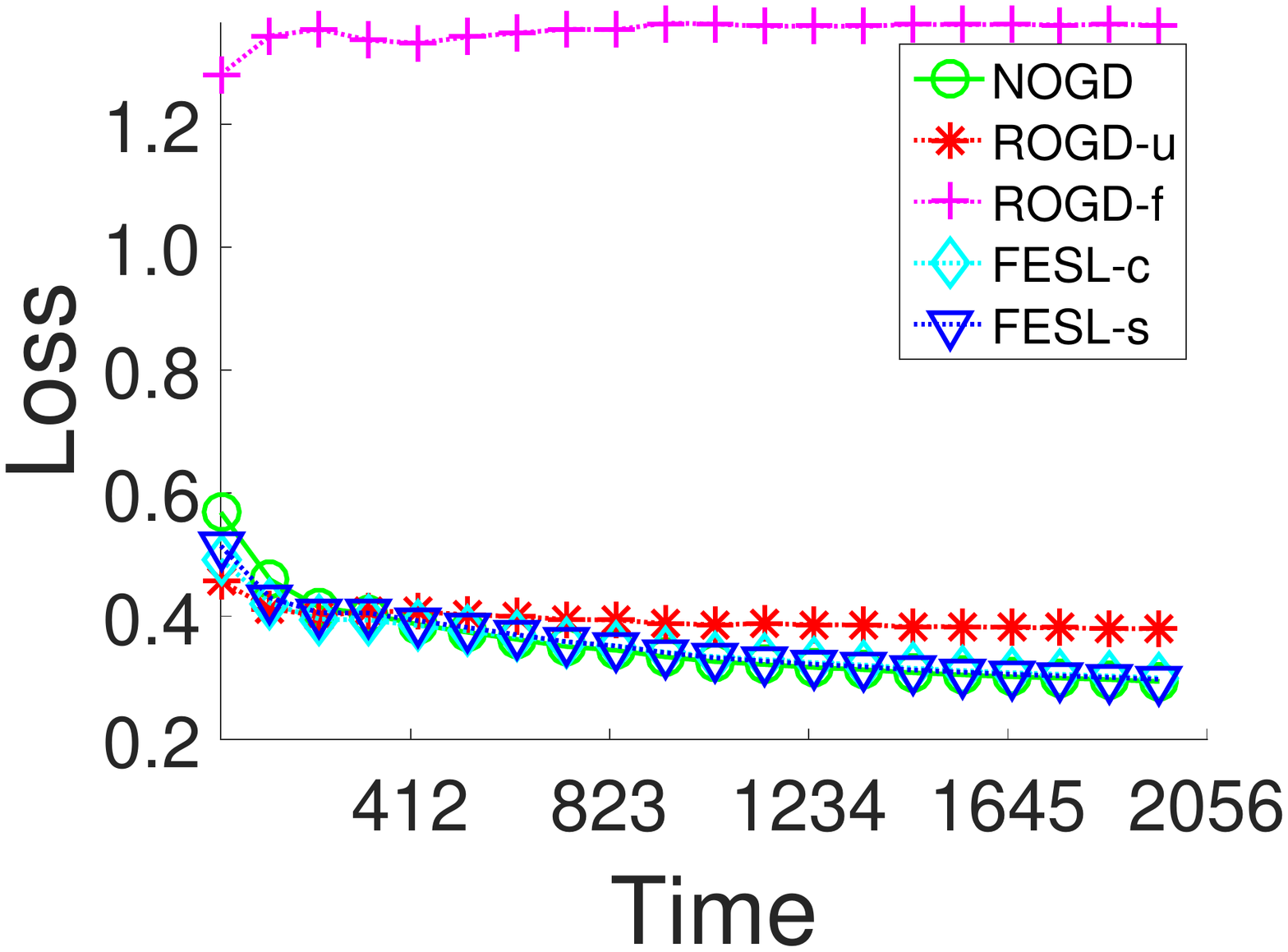}\\
    \vspace{-1cm}
    \mbox{\scriptsize\quad\quad(p)  \emph{r.SP-IT}}
\end{minipage}
\vspace{-0.2cm}
\caption{The trend of loss with three baseline methods and the proposed methods on Reuter data. The smaller the cumulative loss is, the better. The average cumulative loss at any time of our methods is smaller than the best of baseline methods.}
\label{fig:reuter_a}
\end{figure}

In this section, the remaining loss trend results of 5 synthetic datasets and 16 results of Reuter datasets are presented. We also show the detailed setting of step size $\tau_t$ for each datasets.

As can be seen from Figure~\ref{fig:synthetic_a}, the average cumulative loss of our methods is comparable to the best of baseline methods on all datasets and. And FESL-s exhibits slightly smaller average cumulative loss than FESL-c. We can also see from Figure~\ref{fig:reuter_a} that, the average cumulative loss at any time of our methods is comparable to the best of baseline methods. Specifically, at first, ROGD-u is better than NOGD and our methods is comparable to ROGD-u. Afterwards, with more and more data coming, NOGD becomes better, then our methods are comparable to NOGD. Moreover, FESL-s performs worse than FESL-c in the beginning while afterwards, it becomes slightly better than FESL-c. Lastly, ROGD-f always performs the worst among all the approaches.

In our experiments, we set the step size $\tau_t$ to be $1/(c\sqrt{t})$ where $c$ is searched in the range $\{1,10,50, 100, 150\}$. Concretely, for synthetic datasets, we set $c$ 
\begin{itemize}
\setlength{\itemsep}{-1pt}
\item 1 for \emph{australian}, \emph{credit-a}, \emph{credit-g} and \emph{svmguide3};
\item 10 for \emph{diabetes} and \emph{splice};
\item 50 for \emph{german};
\item 100 for \emph{kr-vs-kp}; 
\item 150 for \emph{dna}.
\end{itemize}
For Reuter datasets, we set $c$ 
\begin{itemize}
\setlength{\itemsep}{-1pt}
\item 10 for \emph{r.GR-IT}, \emph{r.GR-SP}, \emph{r.SP-FR};
\item 50 for \emph{r.EN-FR}, \emph{r.EN-IT}, \emph{r.EN-SP}, \emph{r.FR-GR}, \emph{r.FR-IT}, \emph{r.FR-SP}, \emph{r.GR-EN}, \emph{r.IT-EN}, \emph{r.IT-FR}, \emph{r.IT-GR}, \emph{r.IT-SP}, \emph{r.SP-EN}, \emph{r.SP-IT};
\item 100 for \emph{r.FR-EN};
\item 150 for \emph{r.EN-GR}, \emph{r.GR-FR}, \emph{r.SP-GR}.
\end{itemize}

\end{document}